\documentclass{article} %
\usepackage{main,times}

\usepackage{amsmath,amsfonts,bm}

\def\eqref#1{equation~\ref{#1}}

\def\1{\bm{1}}

\DeclareMathAlphabet{\mathsfit}{\encodingdefault}{\sfdefault}{m}{sl}
\SetMathAlphabet{\mathsfit}{bold}{\encodingdefault}{\sfdefault}{bx}{n}

\usepackage{hyperref}
\usepackage{url}
\usepackage{graphicx}

\usepackage{wrapfig}
\usepackage{color}
\usepackage{xcolor}
\usepackage{todonotes}
\usepackage{subfigure}
\usepackage{caption}

\usepackage[utf8]{inputenc} %
\usepackage[T1]{fontenc}    %
\usepackage{booktabs}       %
\usepackage{amsfonts}       %
\usepackage{nicefrac}       %
\usepackage{microtype}      %
\usepackage{amsmath, amsthm}
\usepackage{algorithmic}
\usepackage{algorithm}

\newtheorem{theorem}{Theorem}

\newtheorem{definition}{Definition}
\newtheorem{lemma}{Lemma}

\newtheorem{assumption}{Assumption}

\newtheoremstyle{TheoremNum}
    {\topsep}{\topsep}              %
    {\itshape}                      %
    {}                              %
    {\bfseries}                     %
    {.}                             %
    { }                             %
    {\thmname{#1}\thmnote{ \bfseries #3}}%
\theoremstyle{TheoremNum}
\newtheorem{theoremnum}{Theorem}

\usepackage[capitalise,nameinlink,noabbrev]{cleveref}

\crefname{assumption}{assumption}{assumptions}

\newcommand{\task}{\mathcal{E}}

\definecolor{light-green-3}{HTML}{DAEAD3}
\definecolor{light-red-3}{HTML}{F3CCCC}
\definecolor{light-blue-3}{HTML}{CFE1F3}

\title{Block Contextual MDPs for \\ Continual Learning}

\author{%
  Shagun Sodhani \\
  Facebook AI Research \\
   \And
   Franziska Meier \\
   Facebook AI Research \\
   \And
   Joelle Pineau \\
   Facebook AI Research \\
   McGill University \\
   \And 
   Amy Zhang \\
   Facebook AI Research \\
   UC Berkeley \\
   \texttt{amyzhang@fb.com} \\
}

\iclrfinalcopy %
\begin{document}

\maketitle

\begin{abstract}

In reinforcement learning (RL), when defining a Markov Decision Process (MDP), the environment dynamics is implicitly assumed to be stationary. This assumption of stationarity, while simplifying, can be unrealistic in many scenarios. In the continual reinforcement learning scenario, the sequence of tasks is another source of nonstationarity. In this work, we propose to examine this continual reinforcement learning setting through the \textit{block contextual MDP} (BC-MDP) framework, which enables us to relax the assumption of stationarity. This framework challenges RL algorithms to handle both nonstationarity and rich observation settings and, by additionally leveraging smoothness properties, enables us to study generalization bounds for this setting. Finally, we take inspiration from adaptive control to propose a novel algorithm that addresses the challenges introduced by this more realistic BC-MDP setting, allows for zero-shot adaptation at evaluation time, and achieves strong performance on several nonstationary environments.

\end{abstract}

\section{Introduction}

In the standard reinforcement learning (RL) regime, many limiting assumptions are made to make the problem setting tractable. A typical assumption is that the environment is stationary, i.e., the dynamics and reward do not change over time. However, most real-world settings -- from fluctuating traffic patterns to evolving user behaviors in digital marketing to robots operating in the real world -- do not conform to this assumption. In the more extreme cases, even the observation and action space can change over time. These setups are commonly grouped under the continual learning paradigm~\citep{ring1994continual, thrun1998lifelong, hadsell2020embracing} and non-stationarity is incorporated as a change in the task or environment distribution (that the agent is operating in). The ability to handle non-stationarity is a fundamental building block for developing continual learning agents~\citep{khetarpal2020towards}.

\begin{wrapfigure}{r}{0.3\textwidth}
    \begin{center}
        \includegraphics[width=0.22\textwidth]{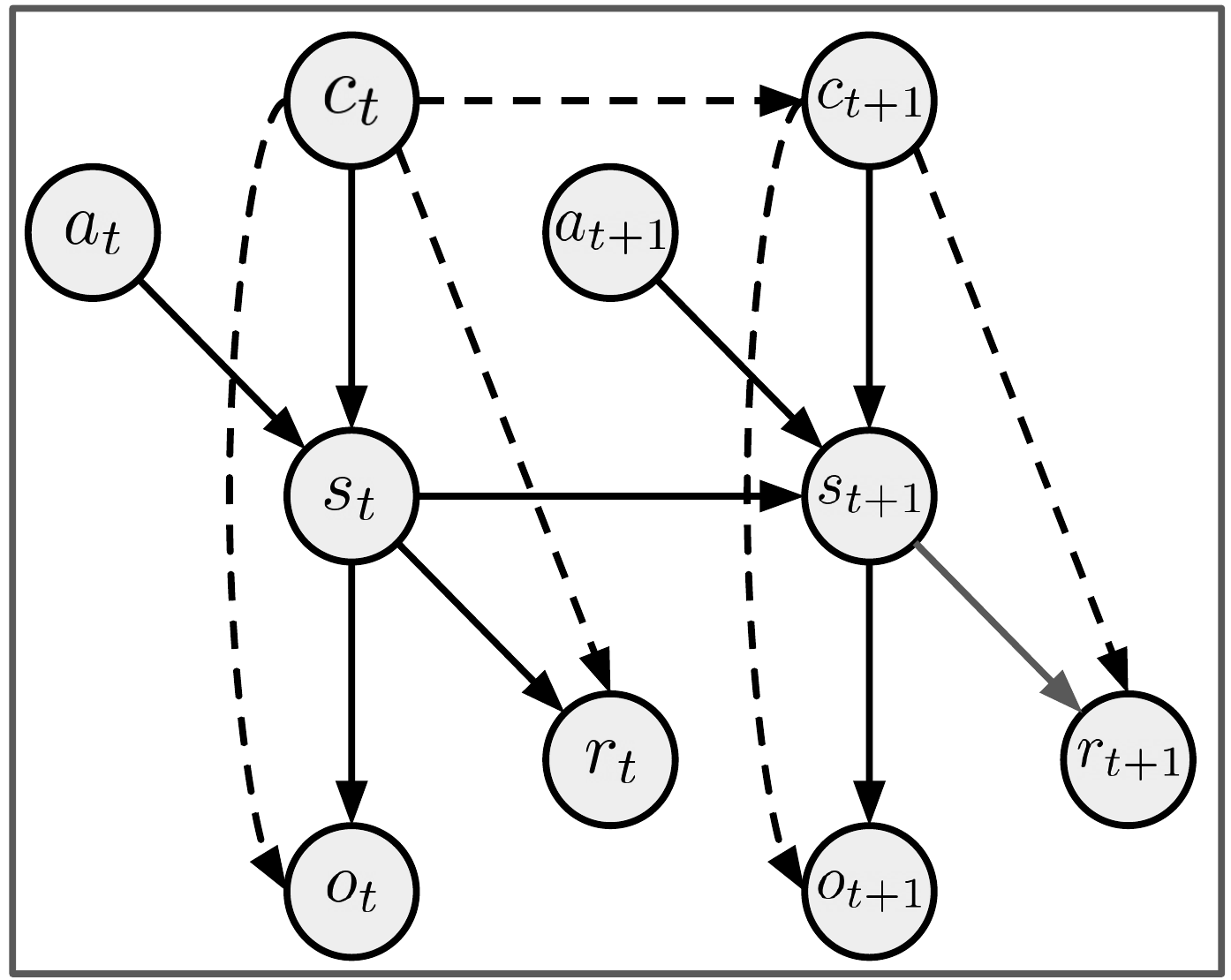}
    \end{center}
\caption{Graphical model of the BC-MDP setting.}
\label{fig:pgm}
\end{wrapfigure}

Real life settings present an additional challenge: we can not rely on access to (or knowledge of) an interpretable and compact (if not minimal) state space. Often, we only have access to a rich and high-dimensional observation space. For example, when driving a car on a wet street, we only have access to the ``view'' around us and not the friction coefficient between the car and the street. Hence, we must assume that observation contains irrelevant information (that could hinder generalization) and account for it when designing agents that could successfully operate in the nonstationary environments.

We propose to model this more realistic, rich observation, nonstationary setting as a Block Contextual MDP (BC-MDP) (shown in Fig.~\ref{fig:pgm}) by combining two common assumptions: (i) the~\textit{block assumption}~\citep{du2019pcid} that addresses rich observations with irrelevant features and (ii) the~\textit{contextual MDP}~\citep{hallak2015contextual} assumption - MDPs with different dynamics and rewards share a common structure and a context that can describe the variation across tasks. We introduce the Lipschitz Block Contextual MDP framework that leverages results connecting Lipschitz functions to generalization~\citep{xu2012robustness} and enables us to frame nonstationarity as a changing context over a family of stationary MDPs (thus modelling it as a contextual MDP). We propose a representation learning algorithm to enable the use of the current RL algorithms (that rely on the prototypical MDP setting) in nonstationary environments. It works by constructing a context space that is Lipschitz with respect to the changes in dynamics and reward of the nonstationary environment. We show, both theoretically and empirically, that the trained agent generalizes well to unseen contexts. We also provide value bounds based on this approximate abstraction which depend on some basic assumptions. 

This works is inspired from adaptive control~\citep{slotine_applied_1991}, a control method that continuously performs parameter identification to adapt to nonstationary dynamics of the system. Adaptive control generally considers the ``known unknowns,'' where the environment properties are known, but their values are unknown. We focus on the ``unknown unknowns'' setting, where the agent neither knows the environment property nor does it know its value in any task. Our setup is similar to meta-learning methods that ``learn to learn,'' but meta-learning techniques generally require finetuning or updates on the novel tasks~\citep{finn2017maml,rakelly2019pearl}. \textit{Our method does not need to perform parameter updates on the new tasks and can adapt in a zero-shot manner.} Since there are no parameter updates, our model does not suffer from catastrophic forgetting~\citep{mccloskey1989catastrophic}. This property is especially critical when designing continual learning agents that operate in the real world. Further, our model can be \textit{verified} after training and is guaranteed to stay true to that verification while also being capable of adapting zero-shot to new environments at test time. Intuitively, this follows from the observation that the agent's parameters are not updated when adapted to the unseen environments. We do not perform this type of formal verification, as current verification methods on neural networks only work for very small models, and are expensive to run~\citep{katz2019marabou}. We refer to our proposed method as \textbf{Ze}ro-shot adaptation to \textbf{U}nknown \textbf{S}ystems (ZeUS).

\paragraph{Contributions.}
We 1) introduce the Lipschitz Block Contextual MDP framework for the continual RL setting, 2) provide theoretical bounds on adaptation and generalization ability to unseen tasks within this framework utilizing Lipschitz properties, 3) propose an algorithm (ZeUS) to perform online inference of ``unknown unknowns'' to solve a family of tasks (without performing learning updates at test time) and ensure the prior Lipschitz properties hold, and 4) empirically verify the effectiveness of ZeUS on environments with nonstationary dynamics or reward functions.

\section{Related Work}
\label{sec:related_work}

Our work is related to four broad areas: (i) System Identification and Adaptive Control, (ii) Continual RL, (iii) Context Modeling and (iv) Meta RL and Multitask RL.

\textbf{System Identification and Adaptive Control}~\citep{zadeh1956identification, e0da8f25-d850-4a80-af21-e151cc28c4f4, swevers1997optimal, bhat2002computing, gevers2006system, LJUNG20101, van2012subspace, chiuso2019system, ajay2019combining, yu2017osi, zhu2017fast}. In this setup, the goal is to perform system identification of ``known unknowns,'' where the environment properties are known, but their values are unknown. Continuing with the previous example of driving a car on a wet street, in this setup, the agent knows that friction coefficient varies across tasks but does not know its value. By inferring the value from observed data, the agent can condition its policy (on the inferred value) to solve a given task. The conventional approaches alternate between system identification and control policy optimization. One limitation of this approach is that a good initial policy and system identifier are required for efficient training. We extend this setup to the ``unknown unknowns'' setting, where the agent neither knows the environment property nor does it know its value in any task.

Our work is related to the \textbf{Continual (or Lifelong) RL}~\citep{ring1994continual, 10.1145/2523813, abel2018lifelong, kaplanis2018continual, xu2018reinforced, aljundi2019online, javed2019meta, hadsell2020embracing}. In this setup, nonstationarity can manifest in two ways: (i)~\textit{Active nonstationarity} where the agent's actions may change the environment dynamics or action space (e.g., a cleaning robot tripping over the carpet), (ii)~\textit{Passive nonstationarity} where the environment dynamics may change irrespective of the agent's behavior (e.g., driving a car on a snow-covered road)~\citep{khetarpal2020towards}. Our work relates to the passive nonstationarity setup. Unlike~\citet{lopez2017gradient, chaudhry2019tiny, aljundi2019online,10.1162/neco_a_01246} which focus on challenges like catastrophic forgetting~\citep{mccloskey1989catastrophic}~\footnote{Since our model does not perform parameter updates when transferring to unseen tasks, it does not suffer from catastrophic forgetting.}, we focus on the ability to continually adapt (the policy) to unseen tasks~\citep{hadsell2020embracing}.~\citet{xie2020lilac} proposed LILAC that uses a probabilistic hierarchical latent variable model to learn a representation of the environment from current and past experiences and perform off-policy learning. A key distinction of our work is that we use task metrics to learn a context space and focus on generalization to unseen contexts.

Several works have focused on~\textbf{modeling the environment context} from high-level pixel observations~\citep{pathak2017curiosity, ebert2018visual, chen2018hardware, xu2019densephysnet}. This context (along with the observation) is fed as input to the policy to enable it to adapt to unseen dynamics (by implicitly capturing the dynamics parameters). T environment's context is encoded using a~\textit{context encoder} using the history of interactions. These approaches learn a single, global dynamics model conditioned on the output of the context encoder. Similar to these approaches, we also use a context encoder but introduce an additional loss to learn a context space with Lipschitz properties with respect to reward, and dynamics.~\citet{xian2021hyperdynamics} proposed using HyperNetworks~\citep{ha2016hypernetworks, chang2019principled, klocek2019hypernetwork, meyerson2019modular} that use the context to generate the weights of the \textit{expert} dynamics model.

Other works on structured MDPs, that leverage the Lipschitz properties, include ~\cite{modi2018markov} that assumes that the given contextual MDP is smooth and that the distance metric and Lipschitz constants are known. In contrast, we propose a method that constructs a new smooth contextual MDP, with bounds on downstream behavior based on the approximate-ness of the new contextual MDP.~\cite{modi2020no} propose RL algorithms with lower bounds on regret but assume that the context is known and linear with respect to the MDP parameters. In contrast, we do not assume access to the context at train or test time or linearity with respect to MDP parameters. 

~\textbf{Meta-reinforcement learning} aims to ``meta-learn'' how to solve new tasks efficiently~\citep{finn2017maml,clavera2018learning,rakelly2019pearl, Zhao2020meld}. Optimization-based meta-RL methods~\citep{finn2017maml, mishra2017simple, zintgraf2019fast} require updating model parameters for each task and therefore suffer from catastrophic forgetting in the continual learning setting. Context-based meta-RL methods perform online adaptation given a context representation (generally modeled as the hidden representations of a RNN~\citep{nagabandi2018deep}). The hope is that the model would (i) adapt to the given context and (ii) correctly infer the next state. However, follow-up work~\citep{lee2020cadm} suggests that it is better to disentangle the two tasks by learning a context encoder (for adaption) and a context-conditioned dynamics model (for inferring the next state).~\citet{lee2020cadm} also introduced additional loss terms when training the agent. However, their objective is to encourage the context encoding to be useful for predicting both forward (next state) and backward (previous state) dynamics while being temporally consistent, while our objective is to learn a context space with Lipschitz properties with respect to reward and dynamics. Other works have proposed modeling meta-RL as task inference~\cite{humplik2019meta, kamienny2020learning} but these works generally assume access to some~\textit{privileged information} (like~\textit{task-id}) during training.

Our work is also related to the general problem of training a policy on Partially Observable Markov Decision Processes (POMDPs)~\citep{kaelbling1998planning, igl2018deep, zhang2019causalstates, han2019variational, hafner2019learning} that capture both nonstationarity and rich observation settings. Our experiments are performed in the POMDP setup where we train the agent using pixel observations and do not have access to a compact state-space representation. However, we focus on a specific class of POMDPs --- the contextual MDP with hidden context, which enables us to obtain strong generalization performance to new environments. Finally, we discuss additional related works in multi-task RL, transfer learning, and MDP metrics in \cref{app:related_work}. 

\section{Background \& Notation}
A \textbf{Markov Decision Process} (MDP)~\citep{Bellman1957,puterman1995markov} is defined by a tuple $\langle {\cal S}, {\cal A}, R, T, \gamma \rangle $, where ${\cal S}$ is the set of states, ${\cal A}$ is the  set of actions, $R: {\cal S} \times {\cal A}\rightarrow \mathbb{R}$ is the reward function, $T:{\cal S} \times {\cal A} \rightarrow Dist({\cal S})$ is the environment transition probability function, and $\gamma \in [0,1)$ is the discount factor. At each time step, the learning agent perceives a state $s_t \in {\cal S}$, takes an action $a_t \in {\cal A}$ drawn from a policy $\pi : {\cal S} \times {\cal A} \rightarrow [0,1]$, and with probability $T(s_{t+1}|s_t,a_t)$ enters next state $s_{t+1}$, receiving a numerical reward $R_{t+1}$ from the environment. The value function of policy $\pi$ is defined as: $V_\pi(s) = E_\pi[\sum_{t=0}^{\infty} \gamma^{t} R_{t+1} | S_0 = s]$. The optimal value function $V^{*}$ is the maximum value function over the class of stationary policies.

\textbf{Contextual Markov Decision Processes} were first proposed by \citet{hallak2015contextual} as an augmented form of Markov Decision Processes that utilize \textit{side information} as a form of context, similar to contextual bandits. For example, the friction coefficient of a surface for a robot sliding objects across a table is a form of context variable that affects the environment dynamics, or user information like age and gender are context variables that influence their movie preferences.

\begin{definition}[Contextual Markov Decision Process]
A contextual Markov decision process (CMDP) is defined by tuple $\langle \mathcal{C}, \mathcal{S}, \mathcal{A}, \mathcal{M}\rangle$ where $\mathcal{C}$ is the context space, $\mathcal{S}$ is the state space, $\mathcal{A}$ is the action space. $\mathcal{M}$ is a function which maps a context $c\in\mathcal{C}$ to MDP parameters $\mathcal{M}(c)=\{R^c, T^c\}$.
\end{definition}

However, in the real world, we typically operate in a ``rich observation'' setting where we do not have access to a compressed state representation and the learning agent has to learn a mapping from the observation to the state. This additional relaxation of the original CMDP definition as a form of Block MDP~\citep{du2019pcid} was previously introduced in \citet{sodhani2021multi} for the multi-task setting where the agent focuses on a subset of the whole space for a specific task, which we again present here for clarity:

\begin{definition}[Block Contextual Markov Decision Process~\citep{sodhani2021multi}]
A block contextual Markov decision process (BC-MDP) (Fig.~\ref{fig:pgm}) is defined by tuple $\langle \mathcal{C}, \mathcal{S}, \mathcal{O}, \mathcal{A}, \mathcal{M}\rangle$ where $\mathcal{C}$ is the context space, $\mathcal{S}$ is the state space, $\mathcal{O}$ is the observation space, $\mathcal{A}$ is the action space. $\mathcal{M}$ is a function which maps a context $c\in\mathcal{C}$ to MDP parameters and observation space $\mathcal{M}(c)=\{R^c, T^c, \mathcal{O}^c\}$.
\end{definition}

Consider a robot moving around in a warehouse and performing different tasks. Rather than specifying an observation space that covers the robot's lifelong trajectory, it is much more practical to have the observation space change as its location and attention change, as it would with an attached camera. We can still keep the assumption of full observability because the robot can have full information required to solve the current task, e.g. with frame stacking to capture velocity and acceleration information. The continual learning setting differs from sequential multi-task learning as there is no delineation of tasks when $c$ changes, causing nonstationarity in the environment. We make an additional assumption that the change in $c$ is smooth over time and the BC-MDP itself is smooth, as shown in \cref{def:smoothness}.

We now define a Lipschitz MDP for the MDP family we are concerned with.
\begin{definition}[Lipschitz Block Contextual MDP]
\label{def:smoothness}
Given a BC-MDP $\langle \mathcal{C}, \mathcal{S}, \mathcal{O}, \mathcal{A}, \mathcal{M}\rangle$ and a distance metric  $d(\cdot,\cdot)$ over context space, if for any two contexts $c_1,c_2\in\mathcal{C}$, we have the following constraints,
\begin{align*}
    \forall(s, a), W(T^{c_1}(s,a), T^{c_2}(s,a))&\leq L_p d(c_1,c_2), \\
    \forall(s, a), \|R^{c_1}(s,a)-R^{c_2}(s,a)\|&\leq L_r d(c_1,c_2),
\end{align*}
then the BC-MDP is referred to as a Lipschitz BC-MDP with smoothness parameters $L_p$ and $L_r$.
\end{definition}
Here $W$ denotes the Wasserstein distance. Note that \cref{def:smoothness} is not a limiting assumption because we do not assume access to the context variables $c_1$ and $c_2$, and they can therefore be chosen so that the Lipschitz condition is always satisfied. In this work, we focus on a method for learning a context space that satisfies the above property.

\section{Generalization Properties of Lipschitz BC-MDPs}
\label{sec:theory}
The key idea behind the proposed method (presented in full in~\cref{sec::zero-shot-adaptation-to-unknown-systems}) is to construct a context space $\mathcal{C}$ with Lipschitz properties with respect to dynamics and reward, and therefore, optimal value functions across tasks. In this section, we show how this Lipschitz property aids generalization. The following results hold true for any given observation (or state) space and are not unique to Block MDPs, so we use notation with respect to states $s\in\mathcal{S}$ without loss of generality. Since we do not have access to the true context space, in~\cref{sec::zero-shot-adaptation-to-unknown-systems}, we describe how to learn a context space with the desired characteristics. 

In order to construct a context space that is Lipschitz with respect to tasks, notably the optimal value functions across tasks, we turn to metrics based on state abstractions. 
Based on established results on distance metrics over states (see \cref{app:assumptions}), we can define a task distance metric for the continual RL setting. 
\begin{definition}[Task Metric]
\label{def:task_metric}
Given two tasks sampled from a BC-MDP, identified by contexts $c_i$ \& $c_j$, 
\begin{equation}
    d_{\text{task}}(c_i,c_j):=\max_{s,a\in \{S,A\}}\bigg[\big|R^{c_i}(s,a)-R^{c_j}(s,a)\big| + W(d_{\text{task}})\big(T^{c_i}(s, a),T^{c_j}(s,a)\big)\bigg],
\end{equation}
where $W(d_{\text{task}})$ is the Wasserstein distance between transition probability distributions.  

\end{definition}
We can now show that the dynamics, reward, and optimal value function are all also Lipschitz with respect to $d_{\text{task}}$. The first two are clear results from \cref{def:task_metric}. 
\begin{theorem}[$V_c^*$ is Lipschitz with respect to $d_{\text{task}}$]
\label{thm:lipschitz_bcmdp}
Let $V^*$ be the optimal, universal value function for a given discount factor $\gamma$ and context space $\mathcal{C}$. Then $V^*$ is Lipschitz continuous with respect to $d_{\text{task}}$ with Lipschitz constant $\frac{1}{1-\gamma}$ for any $s\in\mathcal{S}$,
$$|V^*(s, c) - V^*(s, c')|\leq \frac{1}{1-\gamma}d_{\text{task}}(c,c').$$
\end{theorem}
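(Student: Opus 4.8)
The plan is to use the fixed-point characterization of the universal optimal value function through the Bellman optimality operator $\mathcal{T}^c$, given by $(\mathcal{T}^c U)(s) = \max_{a}\bigl[R^c(s,a) + \gamma\,\mathbb{E}_{s'\sim T^c(s,a)}U(s')\bigr]$, whose unique fixed point is $V^*(\cdot,c)$. Fix contexts $c,c'$ and set $\Delta := \sup_{s\in\mathcal{S}}|V^*(s,c) - V^*(s,c')|$; the goal is the self-referential estimate $\Delta \le d_{\text{task}}(c,c') + \gamma\Delta$, which rearranges to the claimed bound $\Delta \le \tfrac{1}{1-\gamma}d_{\text{task}}(c,c')$ as soon as $\Delta$ is known to be finite.

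For the core estimate I would start from the Bellman optimality equations for both contexts and apply the elementary inequality $|\max_a f(a) - \max_a g(a)| \le \max_a|f(a)-g(a)|$ to obtain, for every $s$,
\begin{align*}
|V^*(s,c) - V^*(s,c')| &\le \max_a\Bigl[\,|R^c(s,a) - R^{c'}(s,a)| \\
&\qquad + \gamma\bigl|\mathbb{E}_{s'\sim T^c(s,a)}V^*(s',c) - \mathbb{E}_{s'\sim T^{c'}(s,a)}V^*(s',c')\bigr|\,\Bigr].
\end{align*}
Next I would split the last term by inserting $\pm\,\mathbb{E}_{s'\sim T^{c'}(s,a)}V^*(s',c)$: the ``same integrand, different transition'' piece is bounded by $W(d_{\text{task}})\bigl(T^c(s,a),T^{c'}(s,a)\bigr)$ via Kantorovich--Rubinstein duality, using that $V^*(\cdot,c)$ is non-expansive with respect to the state metric underlying this Wasserstein term; the ``same transition, different integrand'' piece is at most $\Delta$ by monotonicity of expectation. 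Since $\gamma<1$, the coefficient $\gamma$ on the Wasserstein term can only help, so the bracket is at most $|R^c(s,a)-R^{c'}(s,a)| + W(d_{\text{task}})\bigl(T^c(s,a),T^{c'}(s,a)\bigr) + \gamma\Delta$; maximizing over $a$ recognizes the first two terms as exactly the quantity defining $d_{\text{task}}$ in \cref{def:task_metric}, and taking the supremum over $s$ closes the recursion. The Lipschitz claims for $R$ and $T$ mentioned just before the theorem are immediate: each summand in \cref{def:task_metric} is bounded by their sum, hence by the $\max$.

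To justify the rearrangement I would either invoke bounded rewards, making all value functions and hence $\Delta$ finite, or --- more cleanly and without that assumption --- carry the argument along value iteration: with $V_0^c\equiv 0$ and $V_{n+1}^c = \mathcal{T}^c V_n^c$, prove by induction that $|V_n^c(s) - V_n^{c'}(s)| \le \tfrac{1-\gamma^n}{1-\gamma}\,d_{\text{task}}(c,c')$, the inductive step being the computation above with $\Delta$ replaced by $\tfrac{1-\gamma^n}{1-\gamma}d_{\text{task}}(c,c')$ and the algebra telescoping to $\tfrac{1-\gamma^{n+1}}{1-\gamma}$; then let $n\to\infty$ using uniform convergence $V_n^c\to V^*(\cdot,c)$. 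The main obstacle is the Wasserstein step, which genuinely needs the structural fact that $V^*(\cdot,c)$ (and each $V_n^c$, in the value-iteration version) is $1$-Lipschitz with respect to the state-level bisimulation-type metric built into $W(d_{\text{task}})$ --- this is precisely what the results on distance metrics over states referenced in \cref{app:assumptions} supply; without it one is left with a crude $\|V^*\|_\infty$-times-total-variation bound that does not yield the clean constant $\tfrac{1}{1-\gamma}$. The remaining issue, finiteness of $\Delta$ and the limit--supremum interchange, is routine and is handled automatically by the value-iteration formulation.
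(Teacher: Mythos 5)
Your proof is correct, but it takes a genuinely different route from the paper's. The paper's argument is a two-line reduction: it forms a super-MDP on the product space $\mathcal{C}\times\mathcal{S}$, invokes the known result that the optimal value function is $\frac{1}{1-\gamma}$-Lipschitz with respect to the bisimulation metric (\cref{thm:lipschitz_bisim} in \cref{app:assumptions}), and then asserts $\tilde{d}_{\mathcal{M}'}((s,c),(s,c'))\leq d_{\text{task}}(c,c')$ by comparing \cref{def:bisim_metric} with \cref{def:task_metric}. You instead unroll that cited lemma: the Bellman fixed-point recursion, the $\max$-difference inequality, the $\pm$ insertion splitting the expectation into a Wasserstein piece and a $\gamma\Delta$ piece, and the value-iteration induction are exactly the ingredients of the underlying bisimulation result, transported to the context variable. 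The paper's route buys brevity; yours buys self-containedness and makes explicit the one delicate point that the paper's metric-comparison step silently absorbs: the Kantorovich--Rubinstein step needs $V^*(\cdot,c)$ to be $1$-Lipschitz (not merely $\frac{1}{1-\gamma}$-Lipschitz) in the state argument with respect to the ground metric of the Wasserstein term, because $d_{\text{task}}$ carries coefficient $1$ rather than $\gamma$ on its $W$ term. Be aware that \cref{thm:lipschitz_bisim} as stated only supplies the constant $\frac{1}{1-\gamma}$, which would break your ``the coefficient $\gamma$ can only help'' step whenever $\gamma>1/2$; the non-expansiveness you actually invoke is true for the unnormalized bisimulation metric of \cref{def:bisim_metric} (reward coefficient $1$, transition coefficient $\gamma$) and follows from the same value-iteration induction you already set up, so it should be proved alongside rather than cited. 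Your value-iteration formulation also handles finiteness cleanly, and both proofs inherit the same ambiguity from \cref{def:task_metric}, namely that the ground metric of $W(d_{\text{task}})$ over next-state distributions is never pinned down.
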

The proof can be found in~\cref{app:thms}.
Applying~\cref{thm:lipschitz_bcmdp} to a continual RL setting requires the context be identifiable from a limited number of interactions with the environment. This warrants the following assumption:

\begin{assumption}[Identifiability]
\label{ass:identifiability}
Let $k$ be some constant number of steps the agent takes in a new environment with context $c$. There exists an $\epsilon_c > 0$ such that a context encoder $\psi$ can take those transition tuples $(s_i,a_i,s_i',r_i),i\in\{1,...,k\}$ and output a predicted context $\hat{c}$ that is $\epsilon_c$-close to $c$.
\end{assumption}
There are two key assumptions wrapped up in Assumption~\ref{ass:identifiability}. The first is that the new environment is uniquely identifiable from $k$ transitions, and the second is that we have a context encoder that can approximately infer that context. In practice, we use neural networks for modeling $\psi$ and verify that neural networks can indeed learn to infer the context, as shown in~\cref{table::assumption_identifiability} in \cref{appendix:setup}.

Why do we care about the Lipschitz property?  \citet{xu2012robustness} established that Lipschitz continuous functions are robust, i.e. the gap between test and training error is bounded. This result is only useful when the problem space is Lipschitz, which is often not the case in RL. However, we have shown that any BC-MDP is Lipschitz continuous with respect to metric $d_\text{task}$. We now define a general supervised learning setup to bound the error of learning dynamics and reward models. The following result requires that the data-collecting policy is ergodic, i.e. a Doeblin Markovian chain~\citep{doob1953stochasticprocesses,meyn1993markov}, defined as follows.
\begin{definition}[Doeblin chain]
A Markov chain $\{s_i\}_{i=1}^\infty$ on a state space $\mathcal{S}$ is a Doeblin chain (with $\alpha$ and $t$) if there exists a probability measure $\rho$ on $\mathcal{S}$, $\alpha > 0$, an integer $t\geq 1$ such that
\begin{equation*}
    P(s_t\in H|s_0=s)\geq \alpha \rho(H); \quad \forall \text{ measurable } H\subseteq \mathcal{S}; \forall s\in \mathcal{S}.
\end{equation*}
\end{definition}
Let $\hat{\mathcal{L}}(\cdot)$ denote expected error and $\mathcal{L}_{\text{emp}}(\cdot)$ denote training error of an algorithm $\mathcal{A}$ on training data $\mathbf{s}=\{s_1,...,s_n\}$ and evaluated on points $z\in\mathcal{Z}$ sampled from distribution $\mu$:
\begin{equation*}
    \hat{\mathcal{L}}(\cdot):= \mathbb{E}_{z\sim \mu}\mathcal{L}(\mathcal{A}_{\mathbf{s}},z);  \quad \mathcal{L}_{\text{emp}}(\cdot):=\frac{1}{n}\sum_{s_i\in\mathbf{s}}\mathcal{L}(\mathcal{A}_\mathbf{s},s_i).
\end{equation*}

Here, $\mathcal{A}_{\mathbf{s}}$ denotes the instantiation of the learned algorithm trained on  data $\mathbf{s}$ whereas $\mathcal{A}$ refers to the general learning algorithm. We can now bound the generalization gap, the difference between expected and training error using a result from \citet{xu2012robustness}.
\begin{theorem}[Generalization via Lipschitz Continuity~\citep{xu2012robustness}]
\label{thm:robustness}
If a learning algorithm $\mathcal{A}$ is $\frac{1}{1-\gamma}$-Lipschitz and the training data $\mathbf{s}=\{s_1,...,s_n\}$ are the first $n$ outputs of a Doeblin chain with constants $\alpha, T$, then for any $\delta>0$ with probability at least $1-\delta$, 
\begin{equation*}
    \big|\hat{\mathcal{L}}(\mathcal{A}_\mathbf{s}) - \mathcal{L}_{\text{emp}}(\mathcal{A}_\mathbf{s})\big|\leq \frac{\epsilon}{1-\gamma} + M\bigg(\frac{8t^2(K\ln 2+\ln (1/\delta))}{\alpha^2n}\bigg)^{1/4}.
\end{equation*}
\end{theorem}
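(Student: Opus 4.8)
The plan is to instantiate the \emph{algorithmic robustness} framework of \citet{xu2012robustness}. First I would convert the Lipschitz hypothesis into a robustness statement by a covering argument: since $\mathcal{L}(\mathcal{A}_{\mathbf{s}},\cdot)$ is $\tfrac{1}{1-\gamma}$-Lipschitz on the (compact) evaluation space $\mathcal{Z}$ with metric $d$, a cover at scale $\epsilon$ induces a partition $\{C_1,\dots,C_K\}$ of $\mathcal{Z}$ into $K=K(\epsilon)$ cells of diameter at most $\epsilon$. If a training sample $s_i$ and a test point $z$ share a cell, then $d(s_i,z)\le\epsilon$ and hence $|\mathcal{L}(\mathcal{A}_{\mathbf{s}},s_i)-\mathcal{L}(\mathcal{A}_{\mathbf{s}},z)|\le\tfrac{\epsilon}{1-\gamma}$; that is, $\mathcal{A}$ is $(K,\tfrac{\epsilon}{1-\gamma})$-robust.

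Next I would bound the generalization gap of this robust algorithm by the fluctuation of the empirical distribution of $\mathbf{s}$ over the partition. Writing $N_j$ for the number of samples landing in $C_j$, $\mu(C_j)$ for its mass, and $\bar{\ell}_j:=\mathbb{E}_{z\sim\mu}[\mathcal{L}(\mathcal{A}_{\mathbf{s}},z)\mid z\in C_j]$, the triangle inequality gives
\begin{align*}
\big|\hat{\mathcal{L}}(\mathcal{A}_{\mathbf{s}})-\mathcal{L}_{\text{emp}}(\mathcal{A}_{\mathbf{s}})\big|
&\le \Big|\sum_{j=1}^K\big(\mu(C_j)-\tfrac{N_j}{n}\big)\bar{\ell}_j\Big|
 + \Big|\sum_{j=1}^K\tfrac{N_j}{n}\bar{\ell}_j-\tfrac1n\sum_{i=1}^n\mathcal{L}(\mathcal{A}_{\mathbf{s}},s_i)\Big|.
\end{align*}
Since $\mathcal{L}\le M$, the first term is at most $M\sum_{j=1}^K|\mu(C_j)-N_j/n|$; since every $s_i$ shares its cell with the points averaged in $\bar{\ell}_j$, the $(K,\tfrac{\epsilon}{1-\gamma})$-robustness makes the second term at most $\tfrac{\epsilon}{1-\gamma}$. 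Hence $\big|\hat{\mathcal{L}}-\mathcal{L}_{\text{emp}}\big|\le\tfrac{\epsilon}{1-\gamma}+M\sum_{j=1}^K|\mu(C_j)-N_j/n|$.

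Finally I would control $\sum_{j=1}^K|\mu(C_j)-N_j/n|$ with high probability. In the i.i.d.\ case this is the Bretagnolle--Huber--Carol $L^1$ deviation inequality for the multinomial, giving the classical rate $\sqrt{2(K\ln 2+\ln(1/\delta))/n}$. Under the Doeblin condition, the $\alpha$-minorization at lag $t$ lets one couple the chain to a regenerative process and split the trajectory into $\Theta(n/t)$ near-independent blocks whose masses are controlled by $\alpha$; feeding this block decomposition into a bounded-differences (martingale) estimate for $\mathbf{s}\mapsto\sum_j|\mu(C_j)-N_j(\mathbf{s})/n|$ yields $\big(8t^2(K\ln 2+\ln(1/\delta))/(\alpha^2 n)\big)^{1/4}$ with probability at least $1-\delta$ — this is the Markov-sample generalization bound of \citet{xu2012robustness}, which we import directly. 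Combining the three steps and identifying $\epsilon$ with the cover scale gives the claim. The first two steps are routine; the hard part is the last one, namely quantifying how fast the empirical law of a \emph{dependent} (Doeblin) sequence approaches its stationary law over a $K$-cell partition, with explicit dependence on $\alpha$ and $t$. That is the only place the nonstationarity of the data-collecting policy enters, and it is exactly what degrades the rate from $n^{-1/2}$ to $n^{-1/4}$.
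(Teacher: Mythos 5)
Your sketch is correct, but note that the paper itself offers no proof of this statement: \cref{thm:robustness} is imported verbatim from \citet{xu2012robustness}, so there is nothing internal to compare against. What you have written is a faithful reconstruction of the argument in that reference --- the $\epsilon$-cover turning $\frac{1}{1-\gamma}$-Lipschitzness into $(K,\frac{\epsilon}{1-\gamma})$-robustness, the two-term decomposition of the gap with the $M\sum_j|\mu(C_j)-N_j/n|$ and $\frac{\epsilon}{1-\gamma}$ pieces, and the regeneration/blocking concentration step for Doeblin chains that produces the $n^{-1/4}$ rate with the $8t^2/\alpha^2$ constants --- and you correctly identify that last step as the only nontrivial ingredient and the sole place the dependence of the data enters. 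The one point worth making explicit if you were to write this out in full is that $\mu$ in the expected-loss definition must be taken as the equilibrium (stationary) measure to which the Doeblin chain converges, since that is the distribution against which the empirical cell counts $N_j/n$ are compared.
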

$K$ denotes the $\epsilon$-covering number of the state space. $\epsilon$ controls the granularity at which we discretize, or partition, that space. If $\epsilon$ is larger, $K$ is smaller. $M$ is a scalar that uniformly upper-bounds the loss $\mathcal{L}$. Once we learn a smooth context space, this result bounds the generalization error of supervised learning problems like learned dynamics and reward models. These learned models allow us to construct a new MDP that is $\epsilon_R,\epsilon_T,\epsilon_{c}$-close to the original. We can now show how this error propagates when learning a policy. 

\begin{theorem}[Generalization Bound]
\label{thm:gen_bound}
Without loss of generality we assume all tasks in a given BC-MDP family have reward bounded in $[0,1]$. Given two tasks $\mathcal{M}_{c_i}$ and $\mathcal{M}_{c_j}$, we can bound the difference in $Q^\pi$ between the two MDPs for a given policy $\pi$ learned under an $\epsilon_R,\epsilon_T,\epsilon_{c_i}$-approximate abstraction of $\mathcal{M}_{c_i}$ and applied to $\mathcal{M}_{c_j}$,
\begin{equation*}
\big\| Q^*_{\mathcal{M}_{c_j}} - [Q^*_{\bar{\mathcal{M}}_{\hat{c}_i}}]_{\mathcal{M}_{c_j}}\big\|_\infty \leq \epsilon_R + \gamma \big(\epsilon_T + \epsilon_{c_i} + \|c_i - c_j\|_1\big) \frac{1}{2(1-\gamma)}.
\end{equation*}
\end{theorem}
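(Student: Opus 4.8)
The plan is to bound $\| Q^*_{\mathcal{M}_{c_j}} - [Q^*_{\bar{\mathcal{M}}_{\hat c_i}}]_{\mathcal{M}_{c_j}}\|_\infty$ by inserting an intermediate term and using the triangle inequality: $\| Q^*_{\mathcal{M}_{c_j}} - [Q^*_{\bar{\mathcal{M}}_{\hat c_i}}]_{\mathcal{M}_{c_j}}\|_\infty \le \| Q^*_{\mathcal{M}_{c_j}} - Q^*_{\mathcal{M}_{c_i}}\|_\infty + \| Q^*_{\mathcal{M}_{c_i}} - [Q^*_{\bar{\mathcal{M}}_{\hat c_i}}]_{\mathcal{M}_{c_j}}\|_\infty$. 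The first term captures the genuine task gap between $c_i$ and $c_j$; the second captures the error introduced by planning in the learned $\epsilon_R,\epsilon_T,\epsilon_{c_i}$-approximate abstraction $\bar{\mathcal{M}}_{\hat c_i}$ rather than in the true $\mathcal{M}_{c_i}$.

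For the first term I would invoke the Lipschitz property of the optimal value function. Since the tasks are identified by contexts and, by \cref{def:task_metric} together with \cref{thm:lipschitz_bcmdp}, $V^*$ (and hence, by an analogous Bellman-operator argument on $Q$, $Q^*$) is Lipschitz with constant $\tfrac{1}{1-\gamma}$ with respect to $d_{\text{task}}$, it suffices to bound $d_{\text{task}}(c_i,c_j)$. Using the Lipschitz-in-context assumption from \cref{def:smoothness} — i.e. reward and transition kernels vary at rates $L_r$ and $L_p$ in $d(c_i,c_j)$ — and the normalization $\|c_i - c_j\|_1$ as the context distance, one gets $d_{\text{task}}(c_i,c_j) \lesssim \|c_i - c_j\|_1$, contributing the $\gamma \|c_i-c_j\|_1 \tfrac{1}{2(1-\gamma)}$-type term. (The factor $\tfrac12$ and the placement of $\gamma$ suggest the reward-bounded-in-$[0,1]$ normalization is being used to absorb the immediate-reward part into a cleaner constant, with the $\gamma$ discounting the future difference.)

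For the second term I would use a standard simulation-lemma / approximate-abstraction argument: if $\bar{\mathcal{M}}_{\hat c_i}$ differs from $\mathcal{M}_{c_i}$ by $\epsilon_R$ in reward, $\epsilon_T$ in transition (in Wasserstein, so it composes with the Lipschitz value function), and $\epsilon_{c_i}$ in the inferred context (by \cref{ass:identifiability}, $\hat c_i$ is $\epsilon_{c_i}$-close to $c_i$, which again feeds through the $\tfrac{1}{1-\gamma}$-Lipschitz constant), then the value of any policy — in particular the optimal one for $\bar{\mathcal{M}}_{\hat c_i}$, evaluated back on the true dynamics — is perturbed by at most $\epsilon_R + \gamma(\epsilon_T + \epsilon_{c_i})\tfrac{1}{2(1-\gamma)}$. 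The mechanism is the fixed-point/telescoping expansion of the Bellman equation: one writes the difference of the two $Q$ functions, peels off one application of each Bellman operator, bounds the one-step discrepancy by $\epsilon_R$ (reward) plus $\gamma$ times (transition-kernel Wasserstein discrepancy $\times$ value Lipschitz constant) plus the context-mismatch term, and sums the resulting geometric series in $\gamma$.

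The main obstacle I anticipate is bookkeeping the three error sources so that they combine into exactly the stated constant $\tfrac{1}{2(1-\gamma)}$ with the single factor of $\gamma$ out front of $(\epsilon_T + \epsilon_{c_i} + \|c_i-c_j\|_1)$ and no $\gamma$ on $\epsilon_R$ — this requires being careful about whether each $\epsilon$ enters additively per step (and thus picks up $\tfrac{1}{1-\gamma}$) versus only through the Lipschitz value function (and thus picks up $\tfrac{\gamma}{1-\gamma}$), and the factor $\tfrac12$ presumably comes from the reward normalization $[0,1]$ bounding $\|V^*\|_\infty \le \tfrac{1}{1-\gamma}$ but with the relevant quantity being a difference of values on the same state, which can be halved in the worst-case coupling. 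A secondary subtlety is ensuring that "evaluating $Q^*_{\bar{\mathcal{M}}_{\hat c_i}}$ on $\mathcal{M}_{c_j}$" is interpreted consistently (applying the $\bar{\mathcal{M}}_{\hat c_i}$-optimal policy under $\mathcal{M}_{c_j}$'s dynamics and reward) so that the triangle-inequality split above is legitimate.
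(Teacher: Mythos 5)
Your decomposition is genuinely different from the paper's, and as written it cannot recover the stated constant. You split at the level of $Q$-functions, bounding $\|Q^*_{\mathcal{M}_{c_j}} - Q^*_{\mathcal{M}_{c_i}}\|_\infty$ via the Lipschitz property of the optimal value function (\cref{thm:lipschitz_bcmdp}). That route gives a contribution of $\frac{1}{1-\gamma}\,d_{\text{task}}(c_i,c_j)$, and since $d_{\text{task}}$ (\cref{def:task_metric}) contains the \emph{undiscounted} reward difference plus the Wasserstein term, there is no way to extract the prefactor $\frac{\gamma}{2(1-\gamma)}$ on $\|c_i-c_j\|_1$ that the theorem asserts; you would end up with $\frac{1}{1-\gamma}\|c_i-c_j\|_1$ at best. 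You flag this yourself as the ``main obstacle,'' but your proposed resolution (that the $\tfrac12$ comes from the $[0,1]$ reward normalization and a worst-case coupling) is speculation, not an argument, and it does not close the factor-of-roughly-$2/\gamma$ discrepancy. There is also a smaller issue with your second term, $\|Q^*_{\mathcal{M}_{c_i}} - [Q^*_{\bar{\mathcal{M}}_{\hat c_i}}]_{\mathcal{M}_{c_j}}\|_\infty$: it still references task $j$, so it is not the clean ``single-task abstraction error'' you treat it as unless you first argue that the lifting $[\cdot]_{\mathcal{M}}$ is independent of which task it is lifted into.

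The paper avoids both problems by never splitting at the $Q$-level. Instead it defines \emph{cross-task} abstraction-error constants $\epsilon_R^{c_i,c_j}$ and $\epsilon_T^{c_i,c_j}$, applies the triangle inequality directly to the reward functions and lifted transition kernels (inserting $R_{c_i}(o_2,a)$ and $\Phi T_{c_i}(o_2,a)$ as intermediates) to get $\epsilon_T^{c_i,c_j} + \epsilon_R^{c_i,c_j} \leq \epsilon_T + \epsilon_R + \|c_i - c_j\|_1$, and then invokes the single-abstraction value bound (\cref{thm:single_task_bisim}, from Jiang et al.) exactly once with these enlarged constants, together with $\|c - \hat c\|_1 < \epsilon_c$ (\cref{thm:qerror_bcmdp}). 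Because $\|c_i - c_j\|_1$ is absorbed into the transition-error slot of that lemma, it automatically inherits the $\frac{\gamma R_{\max}}{2(1-\gamma)}$ prefactor, which with $R_{\max}=1$ is exactly the stated $\frac{\gamma}{2(1-\gamma)}$. If you want to salvage your decomposition, you would need to prove a sharper cross-task comparison than the generic Lipschitz-value bound; the more economical fix is to move the triangle inequality from the value functions down to the model parameters, as the paper does.
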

Proof in~\cref{app:thms}.~\cref{thm:gen_bound} shows that if we learn an $\epsilon$-optimal context-conditioned policy for task with context $c_i$ and encounter a new context $c_j$ at evaluation time where $c_j$ is close to $c_i$, then the context-conditioned policy will be $\epsilon$-optimal for the new task by leveraging the Lipschitz property.
While these results clearly do not scale well with the dimensionality of the state space and discount factor $\gamma$, it shows that representation learning is a viable approach to developing robust world models (\cref{thm:robustness}), which translates to tighter bounds on the suboptimality of learned $Q$ functions (\cref{thm:gen_bound}).

\section{Zero-shot Adaptation to Unknown Systems}
\label{sec::zero-shot-adaptation-to-unknown-systems}
Based on the findings in~\cref{sec:theory}, we can improve generalization by constructing a context space that is Lipschitz with respect to the changes in dynamics and reward of the nonstationary environment. In practice, computing the maximum Wasserstein distance over the entire state-action space is computationally infeasible. We relax this requirement by taking the expectation over Wasserstein distance with respect to the marginal state distribution of the behavior policy. This leads us to a representation learning objective that leverages this relaxed version of the task metric in~\cref{def:task_metric}:
\begin{align}
\small
\label{eq:total_loss}
\mathcal{L}(\phi,\psi,T,R) &= MSE\underbrace{\bigg(\Big|\Big|\psi(H_1)-\psi(H_2)\Big|\Big|_2, {\color{red}d}(c_1,c_2)\bigg)}_{\text{context loss}} 
+ MSE\underbrace{\bigg(T(\phi(o^{c_1}_t),a^{c_1}_t,\psi(H_1)), {\color{red}\phi}(o^{c_1}_{t+1})\bigg) 
}_{\text{Dynamics loss}}  \nonumber \\ 
& + MSE\underbrace{\bigg(R(\phi(o^{c_1}_t),a^{c_1}_t,\psi(H_1)), r^{c_1}_{t+1}\bigg)
}_{\text{Reward loss}}.
\end{align}
where {\color{red}red} indicates stopped gradients. $H_1:=\{o^{c_1}_t,a_t,r_t, o^{c_1}_{t+1},...\}$ and $H_2:=\{o^{c_2}_t,a_t, r_t, o^{c_2}_{t+1}, ...\}$ are transition sequences from two environments with contexts $c_1$ and $c_2$ respectively. During training, the transitions are uniformly sampled from a replay buffer. We do not require access to the true context for computing $d(c_1,c_2)$ (in~\cref{eq:total_loss}) as we can approximate $d(c_1,c_2)$ using~\cref{def:smoothness}. Specifically, we train a transition dynamics model and a reward model (via supervised learning) and use their output to approximate $d(c_1,c_2)$.
In practice, we scale the context learning error, our task metric loss, using a scalar value denoted as $\alpha_{\psi}$.

We describe the architecture of ZeUS in~\cref{fig:main}. We have an observation encoder $\phi$ that encodes the pixel-observations into real-valued vectors. A buffer of interaction-history is maintained for computing the context. The context encoder first encodes the individual state-action transition pairs and then aggregates the representations using standard operations:~\textit{sum},~\textit{mean},~\textit{concate},~\textit{product},~\textit{min} and~\textit{max}~\footnote{We experiment with these aggregation operators for all the baselines and not just ZeUS.}. All the components are instantiated using feedforward networks. During inference, assume that the agent is operating in some environment denoted by (latent) context $c_1$. At time $t$, the agent gets an observation $o_t^{c_1}$ which is encoded into $s_t^{c_1}:=\phi(o_t^{c_1})$\footnote{We overload notation here since the true state space is latent.}. The context encoder $\psi$ encodes the last $k$ interactions (denoted as $H_1$) into a context encoding $c_1:=\psi(H_1)$\footnote{We again overload notation here since the true context space is also latent.}. The observation and context encodings are concatenated and fed to the policy to get the action. 

During training, we sample a batch of interaction sequences from the buffer. For sake of exposition, we assume that we sample only 2 sequences $H_1$ and $H_2$. Similar to the inference pipeline, we compute $\phi(o_t^{c_1}), \psi(H_1), \phi(o_t^{c_2})$ and $\psi(H_2)$ and the loss (\cref{eq:total_loss}). We highlight that the algorithm does not know if the two (sampled) interactions correspond to the same context or not. Hence, in a small percentage of cases, $H_1$ and $H_2$ could correspond to the same context and the context loss will be equal to $0$. For implementing the loss in equation~\cref{eq:total_loss}, we do not need access to the true context as the distance between the contexts can be approximated using the learned transition and reward models using~\cref{def:smoothness}. The pseudo-code is provided in~\cref{alg:algorithm} (\cref{appendix:implementation_details}). Since ZeUS is a representation learning algorithm, it must be paired with a policy optimization algorithm for end-to-end training. In the scope of this work, we use Soft Actor-Critic with auto-encoder loss (SAC-AE, \cite{yarats2019improving}), though ZeUS can be used with any policy optimization algorithm.
    
\begin{figure}
    \centering
    \includegraphics[width=0.8\textwidth]{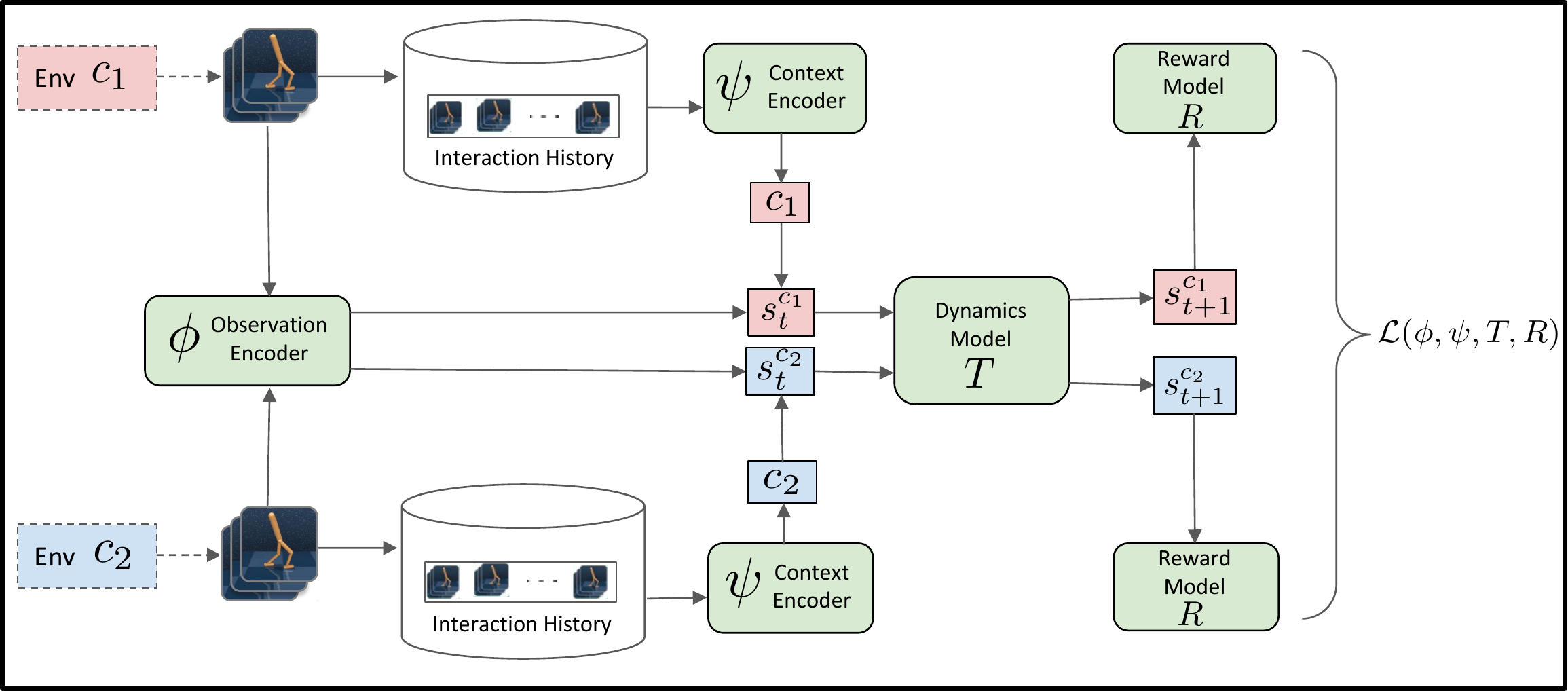}

    \caption{Proposed ZeUS algorithm. The components shown in green (i.e. observation encoder, context encoder, dynamics model and reward model) are shared across tasks. Components/representations in red or blue belong to separate tasks.}
    \label{fig:main}
\end{figure}

\section{Experiments}

We design our experiments to answer the following questions:
\begin{enumerate}
    \item How well does ZeUS perform when training over a family of tasks with varying dynamics? (see \cref{fig:cl-mujoco-all} in Appendix) %
    \item Can ZeUS adapt and generalize to unseen environments (with novel dynamics or reward) without performing any gradient updates? (see \cref{fig:cl-mujoco-all-eval_extrapolation} and \cref{fig:cl-meld-all-extrapolation})
    \item Can ZeUS learning meaningful context representations when training over a family of tasks with varying dynamics? (see \cref{fig:task_distance_zeus})
\end{enumerate}

\subsection{Setup}
\label{sec:setup}
Similar to the setups from~\citet{zhou2018environment, lee2020cadm, zhang2021hipbmdp}, we start with standard RL environments and extend them by modifying parameters that affect the dynamics (e.g. the friction coefficient between the agent and the ground) or the reward (e.g. target velocity) such that they exhibit the challenging nonstationarity and rich-observation conditions of our BC-MDP setting. For varying the transition dynamics, we use the following Mujoco~\citep{todorov2012mujoco}\footnote{License related information available at: https://www.roboti.us/license.html} based environments from the DM Control Suite~\citep{deepmindcontrolsuite2018}: Cheetah-Run-v0 (vary the length of the torso of the cheetah), Walker-Walk-v0 (vary the friction coefficient between the walker and the ground), Walker-Walk-v1 (vary the length of the foot of the walker) and Finger-Spin-v0 task (vary the size of the finger). For environments with varying reward function, we use the Cheetah-Run-v1 environment (vary the target velocity that the agent has to reach) and Sawyer-Peg-v0 environment (vary the goal position for inserting the per) from~\citet{Zhao2020meld}. For environments with varying reward function, we assume access to the reward function, as done in~\citet{Zhao2020meld}.

For all environments, we pre-define a range of parameters to train and evaluate on. For environments with nonstationary transition dynamics, we create two set of parameters for evaluation \textit{interpolation} (and \textit{extrapolation}) where the parameters are sampled from a range that lies within (and outside) the range of parameters used for training. For the environments with varying reward function, we sample the parameters for the test environments from the same range as the training environments. For additional details refer to~\cref{appendix:setup}. We report the evaluation performance of the best performing hyper-parameters for all algorithms (measured in terms of the training performance). All the experiments are run with 10 seeds and we report both the mean and the standard error (denoted by the shaded area on the plots). For additional implementation details refer to~\cref{appendix:implementation_details}.

\subsection{Baselines}
\label{section:baselines}

We select representative baselines from different areas of related work (\cref{sec:related_work}):  \textit{UP-OSI}~\citep{yu2017osi} is a system identification approach that infers the true parameters (of the system) and conditioning the policy on the inferred parameters.~\textit{Context-aware Dynamics Model , CaDM}~\citep{lee2020cadm} is a context modelling based approach that is shown to outperform Gradient and Recurrence-based meta learning approaches~\citep{nagabandi2018learning}.~\textit{HyperDynamics}\citep{xian2021hyperdynamics} generates the weights of the dynamics model (for each environment) by conditioning on a context vector and is shown to outperform both ensemble of experts and meta-learning based approaches~\citep{nagabandi2018learning}. We also consider a~\textit{Context-conditioned Policy} where the context encoder is trained using the one-step forward dynamics loss. This approach can be seen as an ablation of the ZeUS algorithm without the context learning error (from~\cref{eq:total_loss}). We refer to it as \textit{Zeus-no-context-loss}.

\subsection{Adapting and generalizing to unseen environments}

\begin{figure}[t]
\centering
\subfigure[Cheetah-Run-v0]{\label{fig:cl-halfcheetah-v2-eval-extrapolation-all}\includegraphics[width=0.243\textwidth]{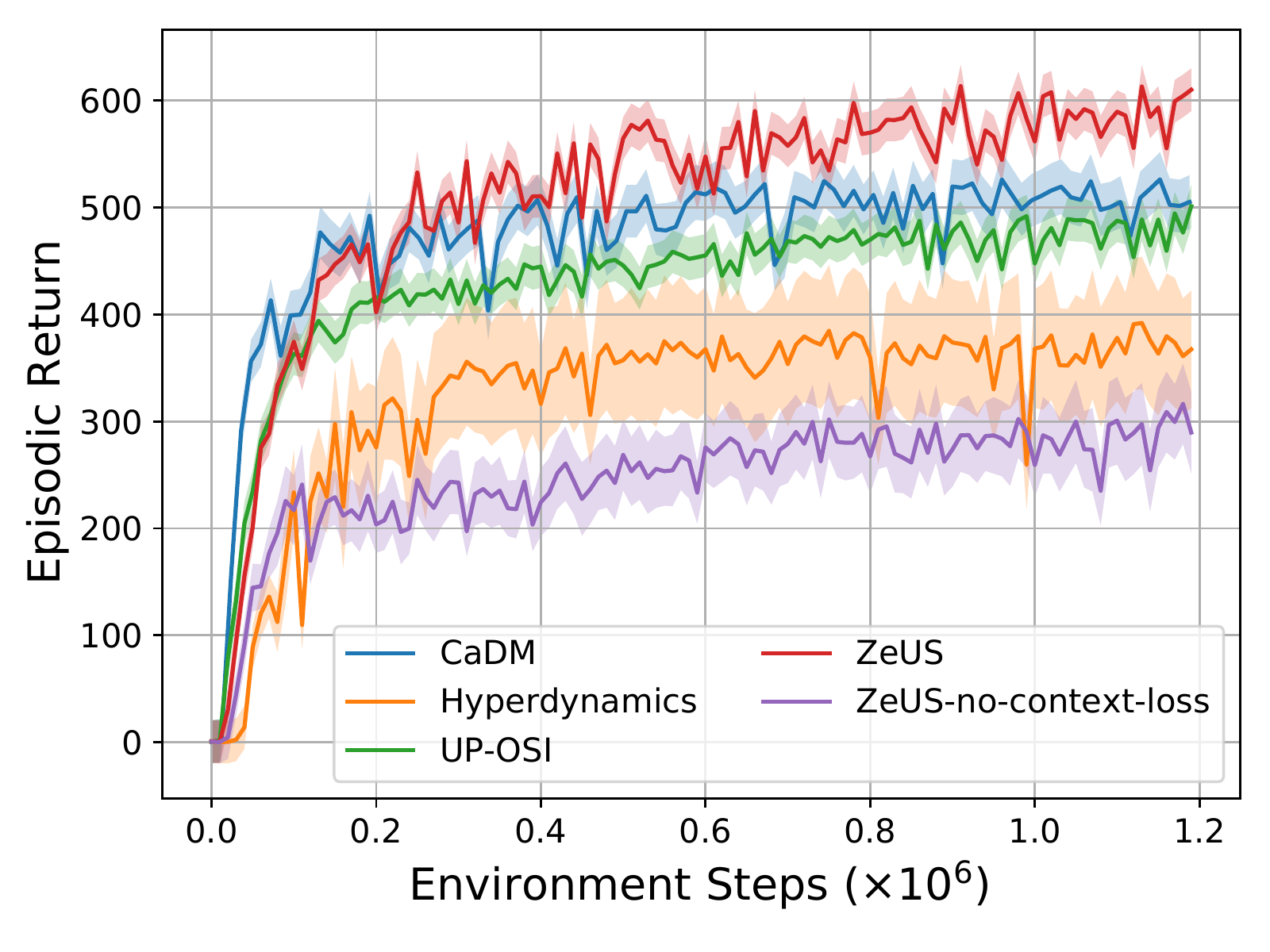}}
\subfigure[Finger-Spin-v0]{\label{fig:cl-finger-spin-v2-eval-extrapolation-all}\includegraphics[width=0.243\textwidth]{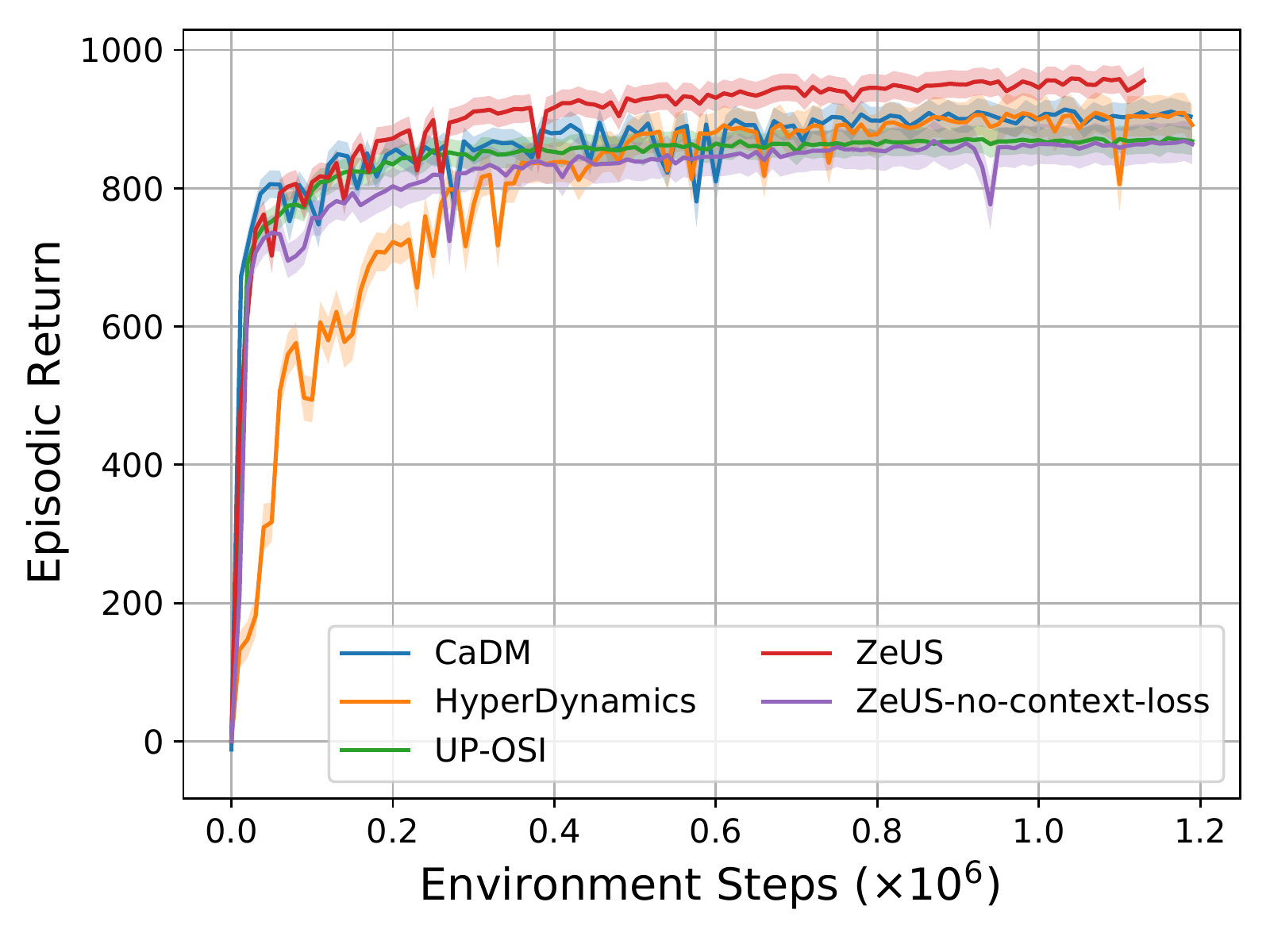}}
\subfigure[Walker-Walk-v0]{\label{fig:cl--walker-walk-v0-eval-extrapolation-all}\includegraphics[width=0.243\textwidth]{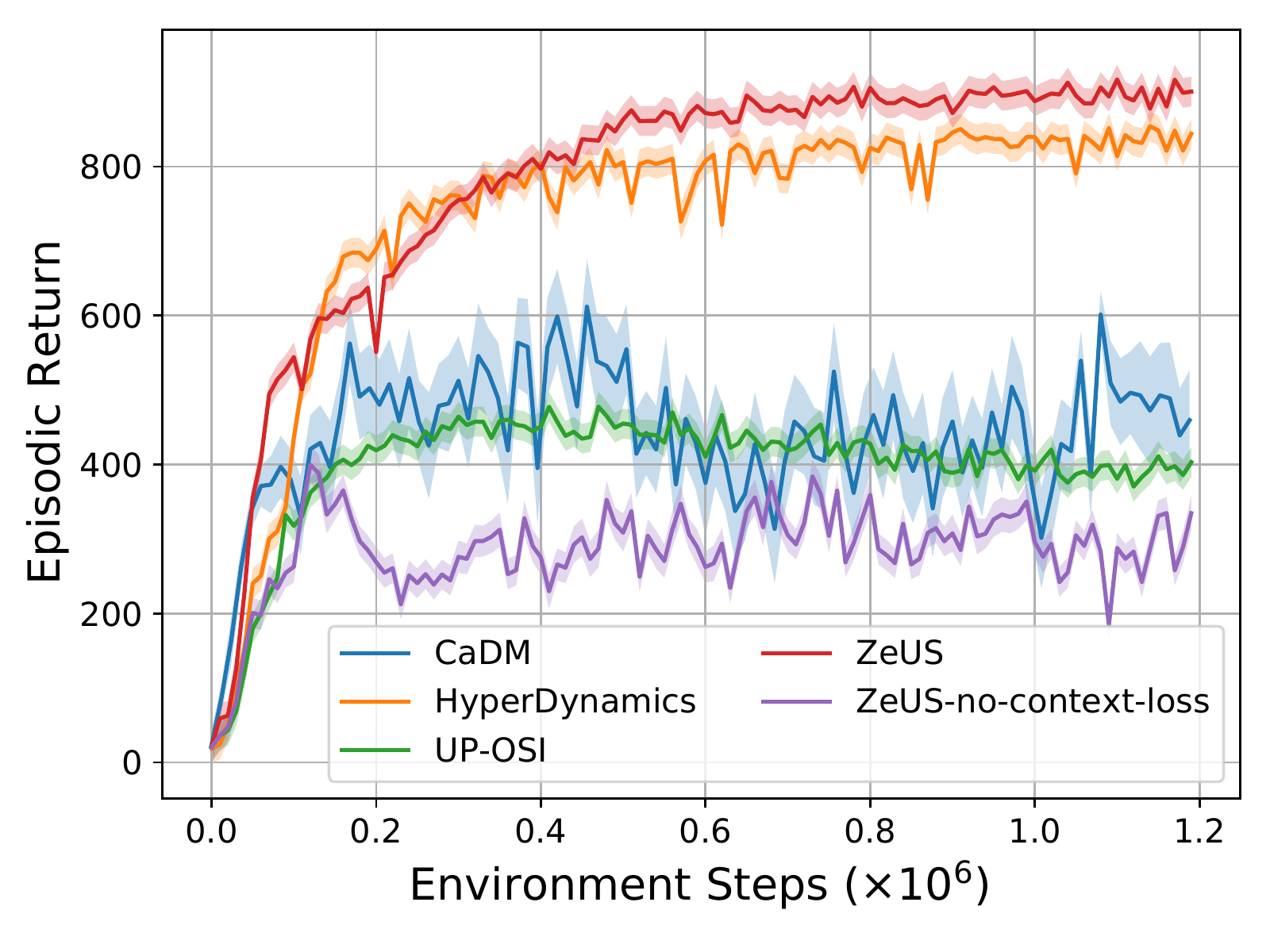}}
\subfigure[Walker-Walk-v1]{\label{fig:cl-walker-walk-v1-eval-extrapolation-all}\includegraphics[width=0.243\textwidth]{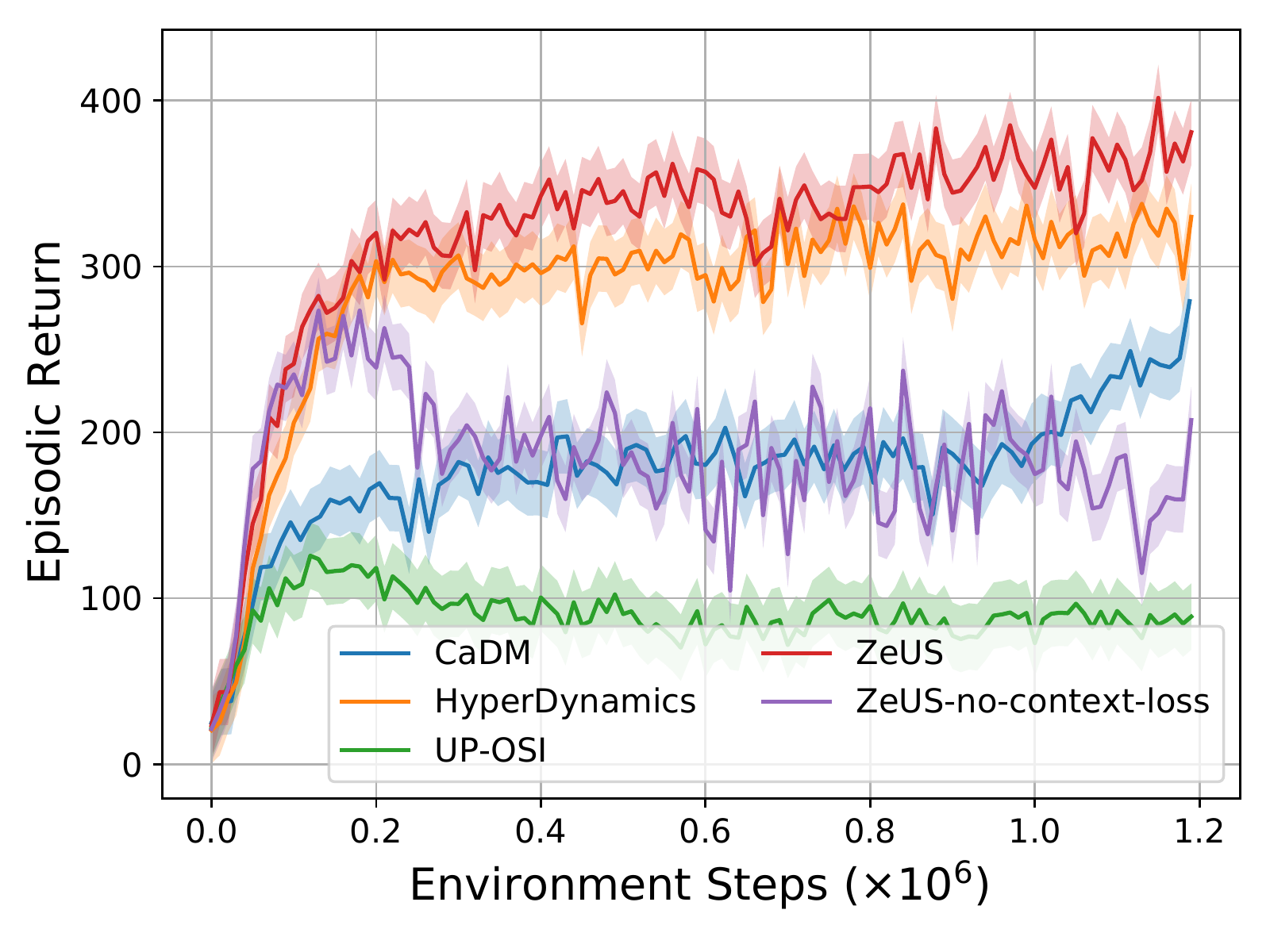}}
\caption{We compare the performance of the proposed ZeUS algorithm with~\textit{CaDM},~\textit{UP-OSI},~\textit{HyperDynamics} and~\textit{ZeUS-no-context-loss} algorithms on the heldout evaluation environments (extrapolation) for four families of tasks with different dynamics parameters.}  
\label{fig:cl-mujoco-all-eval_extrapolation}
\end{figure}

In~\cref{fig:cl-mujoco-all-eval_extrapolation}, we compare ZeUS's performance on the heldout \textit{extrapolation} evaluation environments which the agent has not seen during training. The  transition dynamics varies across these tasks.~\textit{HyperDynamics} performs well on some environments but requires more resources to train (given that it generates the weights of dynamics models for each transition in the training batch).~\textit{UP-OSI} uses privileged information (in terms of the extra supervision). Both~\textit{CaDM} and ZeUS are reasonably straightforward to implement (and train) though ZeUS outperforms the other baselines. The context loss (\cref{eq:total_loss}) is an important ingredient for the generalization performance as observed by the performance of \textit{Zeus-no-context-loss}. The corresponding plots for performance on the training environments and heldout \textit{interpolation} evaluation environments are given in~\cref{fig:cl-mujoco-all} and~\cref{fig:cl-mujoco-all-interpolation} (in Appendix) respectively. For additional ablation results for these environments, refer to~\ref{app:adapting_and_generalizing_to_unseen_environments}.

\begin{figure}[h!]
\centering
\subfigure[Cheetah-Run-v1]{\label{fig:cl-meld-cheetah-velocity-v2-all}\includegraphics[width=0.3\textwidth]{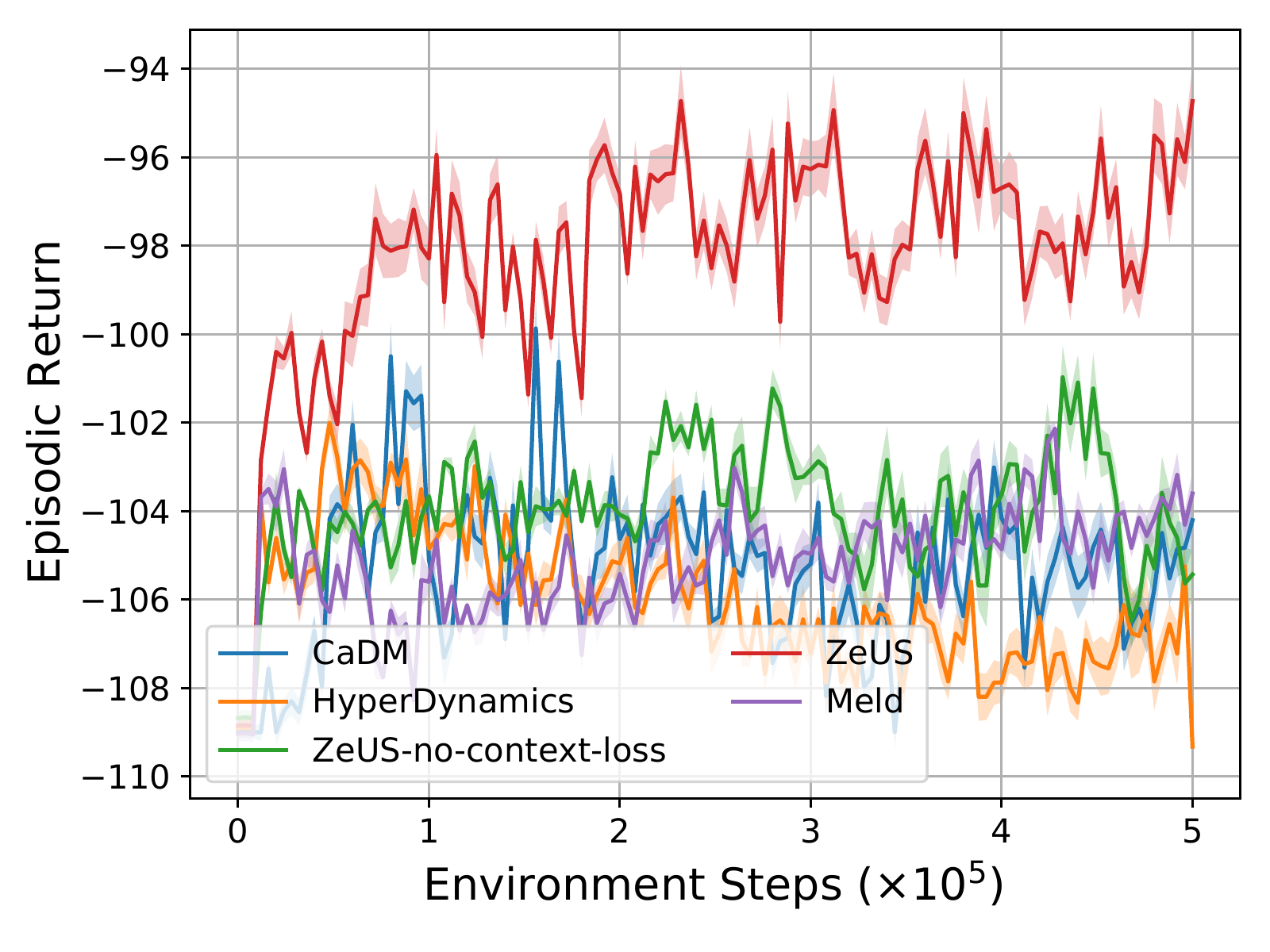}}
\subfigure[Sawyer-Peg-v0]{\label{fig:cl-meld-sawyer-peg-position-v1-all}\includegraphics[width=0.3\textwidth]{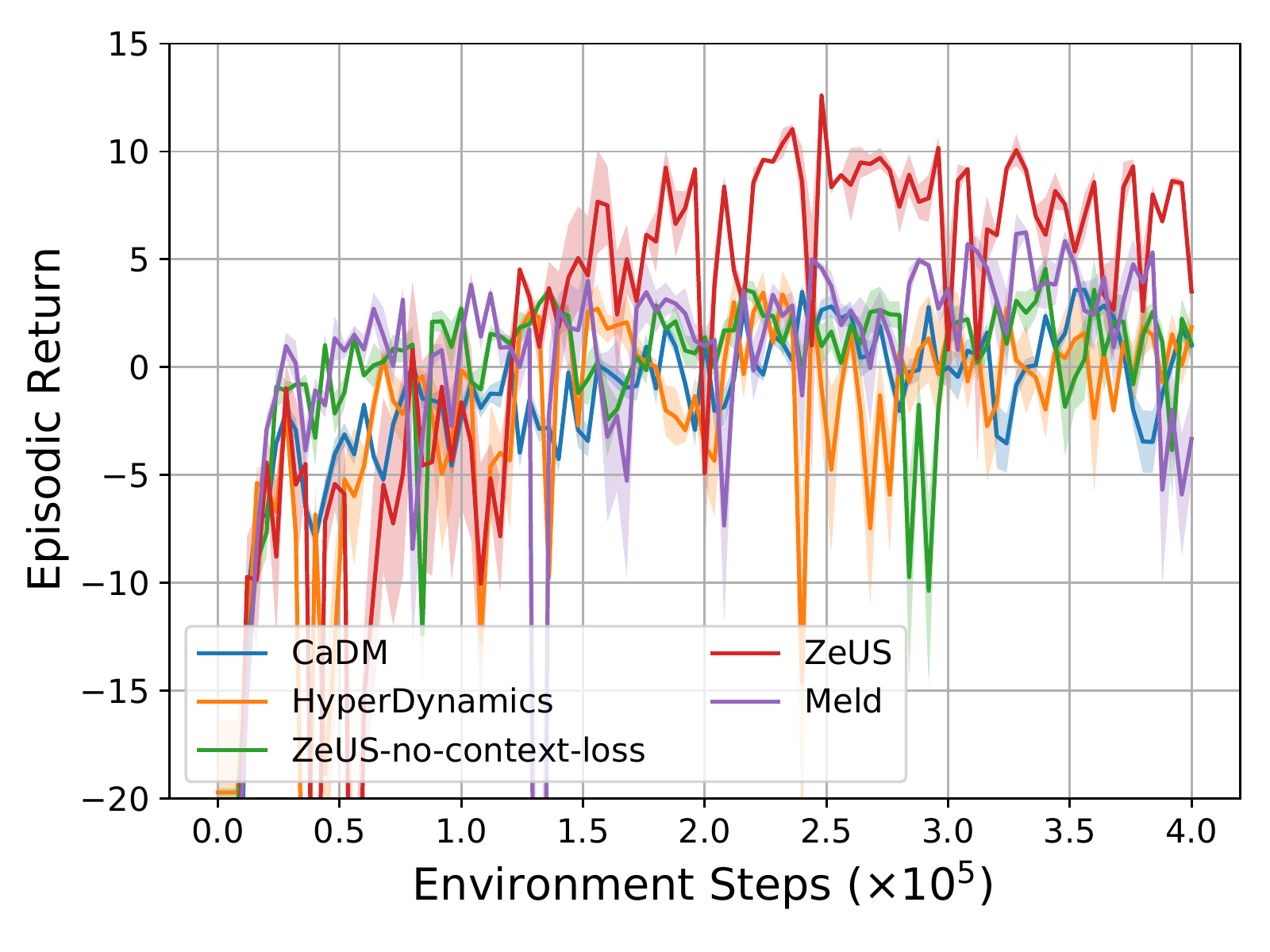}}
\hspace{30pt}
\subfigure[Sawyer-Peg-v0]{\label{fig:sawyer-peg-viz}\raisebox{5mm}{\includegraphics[width=0.19\textwidth]{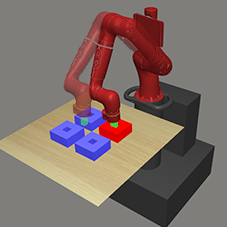}}}
\caption{(a), (b): We compare the performance of the proposed ZeUS algorithm with \textit{CaDM}, \textit{HyperDynamics}, \textit{ZeUS-no-bisim} and \textit{Meld} algorithms on environments with different reward functions. (c): Illustration of the Sawyer-Peg-V0 task.}
\label{fig:cl-meld-all-extrapolation}
\end{figure}

In~\cref{fig:cl-meld-all-extrapolation}, we compare ZeUS's performance with the baselines when the reward function varies across tasks. Since all the models have access to the reward (i.e. reward is concatenated as part of history), we do not compare with UP-OSI which is trained to infer the reward. Instead we include an additional baseline, Meld~\citep{zhu2020transfer}, a meta-RL approach that performs inference in a latent state model to adapt to a new task. Like before, ZeUS outperforms the other baselines. 

\subsection{Learning a Meaningful Context Representation}
\label{sec:meaningful}

\begin{wrapfigure}{r}{0.67\textwidth}
    \begin{center}
        \includegraphics[width=0.305\textwidth]{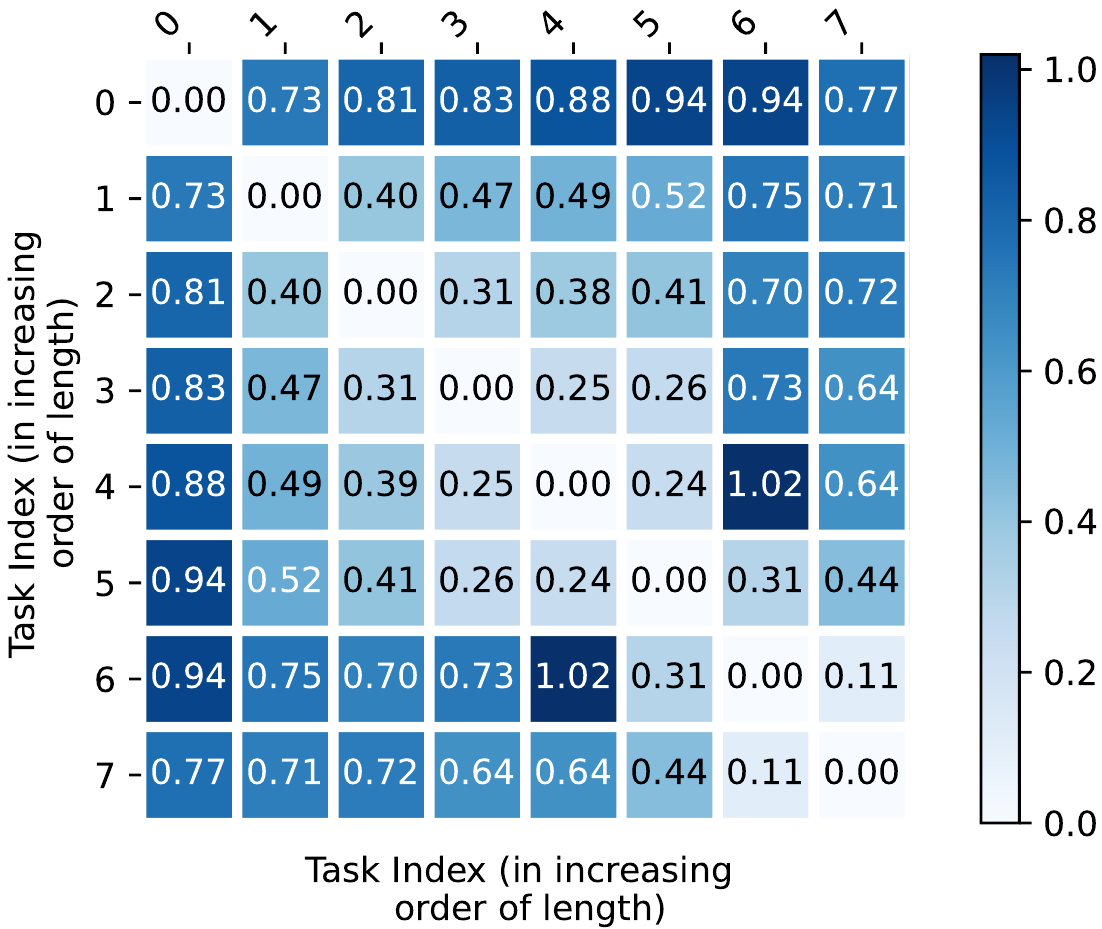}
        \includegraphics[width=0.325\textwidth]{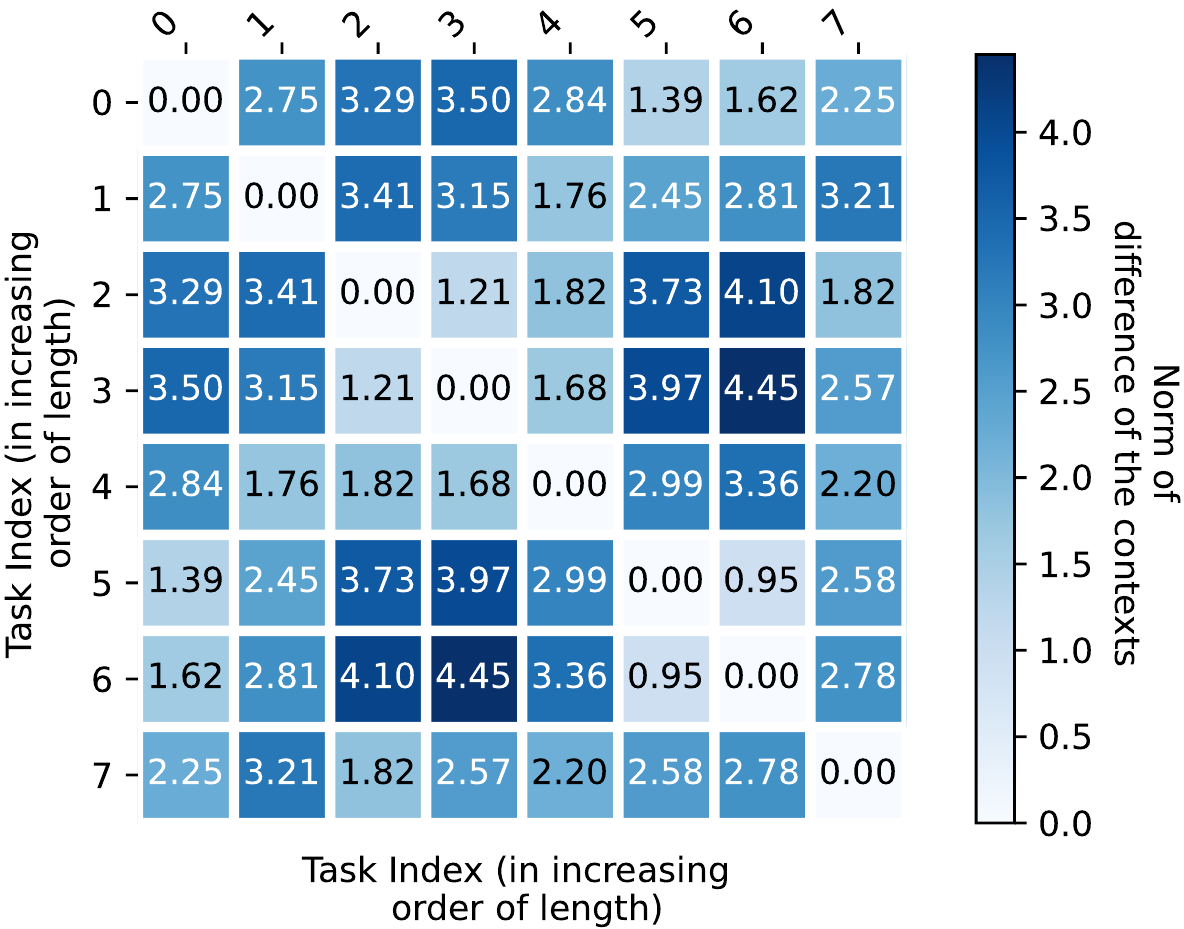}
    \end{center}
    \caption{Norm of pairwise differences of contexts for different tasks for Cheetah-Run-v0 setup when trained with context loss (left) and without context loss (right).
    }
\label{fig:task_distance_zeus}
\end{wrapfigure}

We want to evaluate if the context representation constructed by ZeUS contains meaningful information about the true context of the BC-MDP. We compute the norm of pairwise difference of the learned contexts (corresponding to different tasks) for the Cheetah-Run-v0 setup (where the torso of the cheetah varies across the tasks) when it is trained with and without the context loss (\cref{eq:total_loss}). As shown in~\cref{fig:task_distance_zeus} (left), when training with the context loss, tasks that are closer in terms of torso length are also \textit{consistently} closer in the context space and pairs of tasks with larger differences of torso length are also consistently further apart. We also compute the Spearman's rank correlation coefficient between the ranking (of distance) between the learned contexts and the ground truth context. The coefficient is much higher ($0.60$) when trained with the context loss than training without the context loss ($0.23$), showing that the context loss is useful for training representations that capture the relationship across tasks of the true underlying context variable without \textit{privileged information} of task ids or the true context.

\section{Limitations}
\label{sec:limitations}

\begin{wrapfigure}{r}{0.45\textwidth}
\vspace{-40pt}
    \begin{center}
        \includegraphics[width=0.45\textwidth]{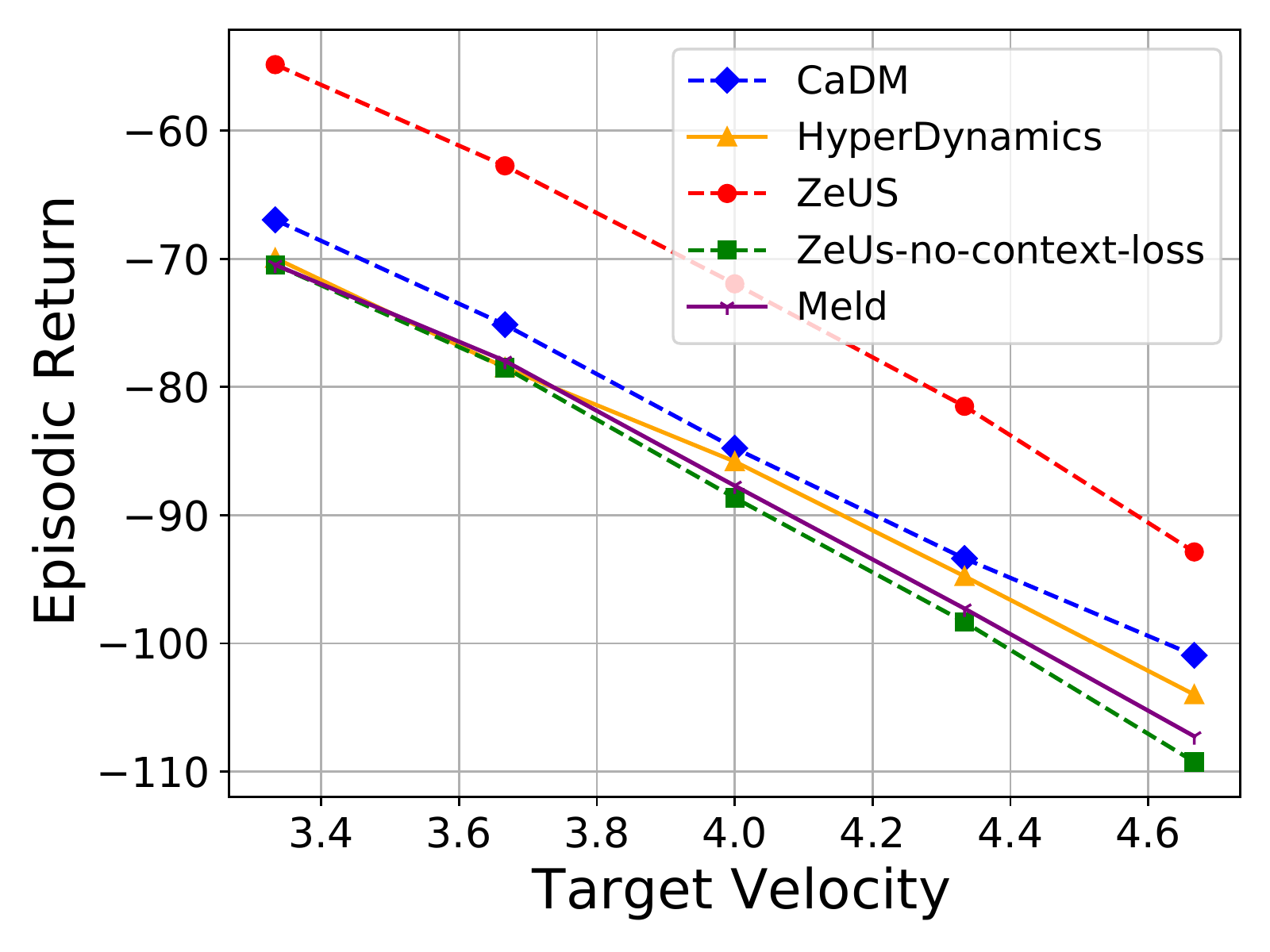}
    \end{center}
    \caption{The performance of all the algorithms (on Cheetah-Run-v1) degrades as we move away from the training distribution.}
\label{fig:degrading_performance}
\vspace{-10pt}
\end{wrapfigure}

A theoretical limitation of this work is the inability to provide guarantees based on the likelihood of the model learning the correct causal dynamics. By structuring the context space to be Lipschitz, we can give guarantees only for those dynamics and reward where the context is close to the contexts seen at training time.  While this result flows directly from~\cref{thm:gen_bound}, it is important to be aware of this limitation when using ZeUS in practice, namely that it may have poor performance when the distance between the contexts (corresponding to the training and the evaluation tasks) is high. We demonstrate an example in~\cref{fig:degrading_performance}, where we plot the performance of the agent for different values of target velocities (for Cheetah-Run-v1). While ZeUS outperforms the other methods, its performance also degrades as we move away from the training distribution.

\section{Discussion}
\label{sec:Discussion}
In this work, we propose to use the Block Contextual MDP framework~\citep{sodhani2021multi} to model the nonstationary, rich observation, RL setting. This allows classical RL methods that typically rely on a stationarity assumption to be used in continual learning, and we show that our adaptive approach can zero-shot generalize to new, unseen environments.  We provide theoretical bounds on adaptation and generalization ability to unseen tasks within this framework and propose a representation learning algorithm (ZeUS) for performing online inference of ``unknown unknowns''. We empirically verify the effectiveness of the proposed algorithm on environments with nonstationary dynamics and reward functions and show that a crucial component of ZeUS is the context loss that ensures smoothness in the context space. This context loss successfully captures meaningful information about the true context, as verified in \cref{sec:meaningful}. 

There are a few interesting directions for further research. One way to improve the generalization performance and tighten the generalization bounds is by constraining the neural networks used in ZeUS to have smaller Lipschitz constants. This is known to be able to improve generalization bounds~\citep{pmlr-v40-Neyshabur15}. We can also consider improving the algorithm to infer the underlying causal structure of the dynamics, as discussed in \cref{sec:limitations}. This is a much harder problem than constructing a context space and inferring context in new environments. A second direction is to extend ZeUS to account for \textit{active nonstationarity}, where the agent's actions can affect the environment. ZeUS would work for this setting, but there is clearly additional structure that can be leveraged for improved performance. There are connections to existing work in multi-agent RL.

\section{Reproducibility Statement}

We provide implementation related details in~\cref{appendix:implementation_details}. The pseudocode for the main algorithm and the the context loss update are provided in~\cref{alg:zeus} and~\cref{alg:update_using_context_loss} respectively.~\cref{appendix:setup} contains details related to the compute resources/time,~\cref{sec:env_details} contains the information related to the different tasks and~\cref{app:license} contains the information related to software stack and licenses. Hyper-parameters, for the different experiments, are enlisted in~\cref{app:hyperparameters}. All the experiments are run with 10 seeds and we report both the mean and the standard error (denoted by the shaded area on the plots)~\cref{sec:setup}. We will be open-sourcing the code and including a link for the same in the camera-ready version.

\section{Acknowledgment}
We thank Alessandro Lazaric, Kamalika Chaudhuri and Nicolas Usunier for detailed feedback that improved this manuscript.

\bibliography{iclr2022_conference}

\begin{thebibliography}{99}
\providecommand{\natexlab}[1]{#1}
\providecommand{\url}[1]{\texttt{#1}}
\expandafter\ifx\csname urlstyle\endcsname\relax
  \providecommand{\doi}[1]{doi: #1}\else
  \providecommand{\doi}{doi: \begingroup \urlstyle{rm}\Url}\fi

\bibitem[Abel et~al.(2018)Abel, Jinnai, Guo, Konidaris, and
  Littman]{abel2018lifelong}
David Abel, Yuu Jinnai, Sophie~Yue Guo, George Konidaris, and Michael Littman.
\newblock Policy and value transfer in lifelong reinforcement learning.
\newblock In Jennifer Dy and Andreas Krause (eds.), \emph{Proceedings of the
  35th International Conference on Machine Learning}, volume~80 of
  \emph{Proceedings of Machine Learning Research}, pp.\  20--29. PMLR, 10--15
  Jul 2018.
\newblock URL \url{http://proceedings.mlr.press/v80/abel18b.html}.

\bibitem[Ajay et~al.(2019)Ajay, Bauza, Wu, Fazeli, Tenenbaum, Rodriguez, and
  Kaelbling]{ajay2019combining}
Anurag Ajay, Maria Bauza, Jiajun Wu, Nima Fazeli, Joshua~B Tenenbaum, Alberto
  Rodriguez, and Leslie~P Kaelbling.
\newblock Combining physical simulators and object-based networks for control.
\newblock In \emph{2019 International Conference on Robotics and Automation
  (ICRA)}, pp.\  3217--3223. IEEE, 2019.

\bibitem[Aljundi et~al.(2019)Aljundi, Caccia, Belilovsky, Caccia, Lin, Charlin,
  and Tuytelaars]{aljundi2019online}
Rahaf Aljundi, Lucas Caccia, Eugene Belilovsky, Massimo Caccia, Min Lin,
  Laurent Charlin, and Tinne Tuytelaars.
\newblock Online continual learning with maximally interfered retrieval.
\newblock \emph{arXiv preprint arXiv:1908.04742}, 2019.

\bibitem[Andreas et~al.(2017)Andreas, Klein, and Levine]{andreas2017modular}
Jacob Andreas, Dan Klein, and Sergey Levine.
\newblock Modular multitask reinforcement learning with policy sketches.
\newblock In \emph{International Conference on Machine Learning}, pp.\
  166--175. PMLR, 2017.

\bibitem[Asadi et~al.(2018)Asadi, Misra, and Littman]{asadi18lipschitzmbrl}
Kavosh Asadi, Dipendra Misra, and Michael Littman.
\newblock {L}ipschitz continuity in model-based reinforcement learning.
\newblock In Jennifer Dy and Andreas Krause (eds.), \emph{Proceedings of the
  35th International Conference on Machine Learning}, volume~80 of
  \emph{Proceedings of Machine Learning Research}, pp.\  264--273,
  Stockholmsmässan, Stockholm Sweden, 10--15 Jul 2018. PMLR.
\newblock URL \url{http://proceedings.mlr.press/v80/asadi18a.html}.

\bibitem[Banijamali et~al.(2018)Banijamali, Shu, Bui, Ghodsi,
  et~al.]{banijamali2018robust}
Ershad Banijamali, Rui Shu, Hung Bui, Ali Ghodsi, et~al.
\newblock Robust locally-linear controllable embedding.
\newblock In \emph{International Conference on Artificial Intelligence and
  Statistics}, pp.\  1751--1759. PMLR, 2018.

\bibitem[Barreto et~al.(2016)Barreto, Dabney, Munos, Hunt, Schaul, Van~Hasselt,
  and Silver]{barreto2016successor}
Andr{\'e} Barreto, Will Dabney, R{\'e}mi Munos, Jonathan~J Hunt, Tom Schaul,
  Hado Van~Hasselt, and David Silver.
\newblock Successor features for transfer in reinforcement learning.
\newblock \emph{arXiv preprint arXiv:1606.05312}, 2016.

\bibitem[Bellman(1957)]{Bellman1957}
Richard Bellman.
\newblock \emph{Dynamic Programming}.
\newblock Princeton University Press, Princeton, NJ, USA, 1 edition, 1957.

\bibitem[Bhat et~al.(2002)Bhat, Seitz, Popovi{\'c}, and
  Khosla]{bhat2002computing}
Kiran~S Bhat, Steven~M Seitz, Jovan Popovi{\'c}, and Pradeep~K Khosla.
\newblock Computing the physical parameters of rigid-body motion from video.
\newblock In \emph{European Conference on Computer Vision}, pp.\  551--565.
  Springer, 2002.

\bibitem[Caruana(1997)]{caruana1997multitask_learning}
Rich Caruana.
\newblock Multitask learning.
\newblock \emph{Machine learning}, 28\penalty0 (1):\penalty0 41--75, 1997.

\bibitem[Chang et~al.(2019)Chang, Flokas, and Lipson]{chang2019principled}
Oscar Chang, Lampros Flokas, and Hod Lipson.
\newblock Principled weight initialization for hypernetworks.
\newblock In \emph{International Conference on Learning Representations}, 2019.

\bibitem[Chaudhry et~al.(2019)Chaudhry, Rohrbach, Elhoseiny, Ajanthan, Dokania,
  Torr, and Ranzato]{chaudhry2019tiny}
Arslan Chaudhry, Marcus Rohrbach, Mohamed Elhoseiny, Thalaiyasingam Ajanthan,
  Puneet~K Dokania, Philip~HS Torr, and Marc'Aurelio Ranzato.
\newblock On tiny episodic memories in continual learning.
\newblock \emph{arXiv preprint arXiv:1902.10486}, 2019.

\bibitem[Chen et~al.(2018)Chen, Murali, and Gupta]{chen2018hardware}
Tao Chen, Adithyavairavan Murali, and Abhinav Gupta.
\newblock Hardware conditioned policies for multi-robot transfer learning.
\newblock \emph{arXiv preprint arXiv:1811.09864}, 2018.

\bibitem[Chiuso \& Pillonetto(2019)Chiuso and Pillonetto]{chiuso2019system}
Alessandro Chiuso and Gianluigi Pillonetto.
\newblock System identification: A machine learning perspective.
\newblock \emph{Annual Review of Control, Robotics, and Autonomous Systems},
  2:\penalty0 281--304, 2019.

\bibitem[Clavera et~al.(2019)Clavera, Nagabandi, Liu, Fearing, Abbeel, Levine,
  and Finn]{clavera2018learning}
Ignasi Clavera, Anusha Nagabandi, Simin Liu, Ronald~S. Fearing, Pieter Abbeel,
  Sergey Levine, and Chelsea Finn.
\newblock Learning to adapt in dynamic, real-world environments through
  meta-reinforcement learning.
\newblock In \emph{International Conference on Learning Representations}, 2019.
\newblock URL \url{https://openreview.net/forum?id=HyztsoC5Y7}.

\bibitem[Devin et~al.(2017)Devin, Gupta, Darrell, Abbeel, and
  Levine]{devin2017learning}
Coline Devin, Abhishek Gupta, Trevor Darrell, Pieter Abbeel, and Sergey Levine.
\newblock Learning modular neural network policies for multi-task and
  multi-robot transfer.
\newblock In \emph{2017 IEEE international conference on robotics and
  automation (ICRA)}, pp.\  2169--2176. IEEE, 2017.

\bibitem[Doob(1953)]{doob1953stochasticprocesses}
J.~L. Doob.
\newblock \emph{Stochastic processes}.
\newblock John Wiley \& Sons, New York, 1953.
\newblock MR 15,445b. Zbl 0053.26802.

\bibitem[Doshi-Velez \& Konidaris(2013)Doshi-Velez and
  Konidaris]{doshi-velezHiddenParameterMarkov2013}
Finale Doshi-Velez and George Konidaris.
\newblock Hidden {Parameter} {Markov} {Decision} {Processes}: {A}
  {Semiparametric} {Regression} {Approach} for {Discovering} {Latent} {Task}
  {Parametrizations}.
\newblock \emph{arXiv:1308.3513 [cs]}, August 2013.
\newblock arXiv: 1308.3513.

\bibitem[Doya et~al.(2002)Doya, Samejima, Katagiri, and
  Kawato]{doya2002multiple}
Kenji Doya, Kazuyuki Samejima, Ken-ichi Katagiri, and Mitsuo Kawato.
\newblock Multiple model-based reinforcement learning.
\newblock \emph{Neural computation}, 14\penalty0 (6):\penalty0 1347--1369,
  2002.

\bibitem[Du et~al.(2019)Du, Krishnamurthy, Jiang, Agarwal, Dudik, and
  Langford]{du2019pcid}
Simon Du, Akshay Krishnamurthy, Nan Jiang, Alekh Agarwal, Miroslav Dudik, and
  John Langford.
\newblock Provably efficient {RL} with rich observations via latent state
  decoding.
\newblock In Kamalika Chaudhuri and Ruslan Salakhutdinov (eds.),
  \emph{Proceedings of the 36th International Conference on Machine Learning},
  volume~97 of \emph{Proceedings of Machine Learning Research}, pp.\
  1665--1674. PMLR, 09--15 Jun 2019.

\bibitem[Ebert et~al.(2018)Ebert, Finn, Dasari, Xie, Lee, and
  Levine]{ebert2018visual}
Frederik Ebert, Chelsea Finn, Sudeep Dasari, Annie Xie, Alex Lee, and Sergey
  Levine.
\newblock Visual foresight: Model-based deep reinforcement learning for
  vision-based robotic control.
\newblock \emph{arXiv preprint arXiv:1812.00568}, 2018.

\bibitem[Fern{\'a}ndez \& Veloso(2006)Fern{\'a}ndez and
  Veloso]{fernandez2006probabilistic}
Fernando Fern{\'a}ndez and Manuela Veloso.
\newblock Probabilistic policy reuse in a reinforcement learning agent.
\newblock In \emph{Proceedings of the fifth international joint conference on
  Autonomous agents and multiagent systems}, pp.\  720--727, 2006.

\bibitem[Ferns et~al.(2004)Ferns, Panangaden, and
  Precup]{ferns2004bisimulation}
Norm Ferns, Prakash Panangaden, and Doina Precup.
\newblock Metrics for finite markov decision processes.
\newblock In \emph{Proceedings of the 20th Conference on Uncertainty in
  Artificial Intelligence}, UAI '04, pp.\  162--169, Arlington, Virginia,
  United States, 2004. AUAI Press.
\newblock ISBN 0-9749039-0-6.

\bibitem[Ferns et~al.(2011)Ferns, Panangaden, and Precup]{ferns2011contbisim}
Norm Ferns, Prakash Panangaden, and Doina Precup.
\newblock Bisimulation metrics for continuous markov decision processes.
\newblock \emph{SIAM J. Comput.}, 40\penalty0 (6):\penalty0 1662–1714,
  December 2011.
\newblock ISSN 0097-5397.
\newblock \doi{10.1137/10080484X}.

\bibitem[Finn et~al.(2017)Finn, Abbeel, and Levine]{finn2017maml}
Chelsea Finn, Pieter Abbeel, and Sergey Levine.
\newblock Model-agnostic meta-learning for fast adaptation of deep networks.
\newblock In \emph{Proceedings of the 34th International Conference on Machine
  Learning-Volume 70}, pp.\  1126--1135. JMLR. org, 2017.

\bibitem[Fragkiadaki et~al.(2015)Fragkiadaki, Agrawal, Levine, and
  Malik]{fragkiadaki2015learning}
Katerina Fragkiadaki, Pulkit Agrawal, Sergey Levine, and Jitendra Malik.
\newblock Learning visual predictive models of physics for playing billiards.
\newblock \emph{arXiv preprint arXiv:1511.07404}, 2015.

\bibitem[Gama et~al.(2014)Gama, \v{Z}liobaitundefined, Bifet, Pechenizkiy, and
  Bouchachia]{10.1145/2523813}
Jo\~{a}o Gama, Indrundefined \v{Z}liobaitundefined, Albert Bifet, Mykola
  Pechenizkiy, and Abdelhamid Bouchachia.
\newblock A survey on concept drift adaptation.
\newblock \emph{ACM Comput. Surv.}, 46\penalty0 (4), March 2014.
\newblock ISSN 0360-0300.
\newblock \doi{10.1145/2523813}.
\newblock URL \url{https://doi.org/10.1145/2523813}.

\bibitem[Gevers et~al.(2006)]{gevers2006system}
Michel Gevers et~al.
\newblock System identification without lennart ljung: what would have been
  different?
\newblock \emph{Forever Ljung in System Identification, Studentlitteratur AB,
  Norrtalje}, 2, 2006.

\bibitem[Ha et~al.(2016)Ha, Dai, and Le]{ha2016hypernetworks}
David Ha, Andrew Dai, and Quoc~V Le.
\newblock Hypernetworks.
\newblock \emph{arXiv preprint arXiv:1609.09106}, 2016.

\bibitem[Hadsell et~al.(2020)Hadsell, Rao, Rusu, and
  Pascanu]{hadsell2020embracing}
Raia Hadsell, Dushyant Rao, Andrei~A Rusu, and Razvan Pascanu.
\newblock Embracing change: Continual learning in deep neural networks.
\newblock \emph{Trends in Cognitive Sciences}, 2020.

\bibitem[Hafner et~al.(2019)Hafner, Lillicrap, Fischer, Villegas, Ha, Lee, and
  Davidson]{hafner2019learning}
Danijar Hafner, Timothy Lillicrap, Ian Fischer, Ruben Villegas, David Ha,
  Honglak Lee, and James Davidson.
\newblock Learning latent dynamics for planning from pixels.
\newblock In \emph{International Conference on Machine Learning}, pp.\
  2555--2565. PMLR, 2019.

\bibitem[Hallak et~al.(2015)Hallak, Castro, and Mannor]{hallak2015contextual}
Assaf Hallak, Dotan~Di Castro, and Shie Mannor.
\newblock Contextual markov decision processes, 2015.

\bibitem[Han et~al.(2019)Han, Doya, and Tani]{han2019variational}
Dongqi Han, Kenji Doya, and Jun Tani.
\newblock Variational recurrent models for solving partially observable control
  tasks.
\newblock \emph{arXiv preprint arXiv:1912.10703}, 2019.

\bibitem[Humplik et~al.(2019)Humplik, Galashov, Hasenclever, Ortega, Teh, and
  Heess]{humplik2019meta}
Jan Humplik, Alexandre Galashov, Leonard Hasenclever, Pedro~A Ortega, Yee~Whye
  Teh, and Nicolas Heess.
\newblock Meta reinforcement learning as task inference.
\newblock \emph{arXiv preprint arXiv:1905.06424}, 2019.

\bibitem[Igl et~al.(2018)Igl, Zintgraf, Le, Wood, and Whiteson]{igl2018deep}
Maximilian Igl, Luisa Zintgraf, Tuan~Anh Le, Frank Wood, and Shimon Whiteson.
\newblock Deep variational reinforcement learning for pomdps.
\newblock In \emph{International Conference on Machine Learning}, pp.\
  2117--2126. PMLR, 2018.

\bibitem[Igl et~al.(2020)Igl, Gambardella, He, Nardelli, Siddharth, B{\"o}hmer,
  and Whiteson]{multitask_soft_option_learning}
Maximilian Igl, Andrew Gambardella, Jinke He, Nantas Nardelli, N.~Siddharth,
  Wendelin B{\"o}hmer, and Shimon Whiteson.
\newblock Multitask soft option learning, 2020.
\newblock URL \url{https://openreview.net/forum?id=BkeDGJBKvB}.

\bibitem[Javed \& White(2019)Javed and White]{javed2019meta}
Khurram Javed and Martha White.
\newblock Meta-learning representations for continual learning.
\newblock \emph{arXiv preprint arXiv:1905.12588}, 2019.

\bibitem[Jiang et~al.(2015)Jiang, Kulesza, and Singh]{jiang2015abstraction}
Nan Jiang, Alex Kulesza, and Satinder Singh.
\newblock Abstraction selection in model-based reinforcement learning.
\newblock In \emph{International Conference on Machine Learning}, pp.\
  179--188, 2015.

\bibitem[Kaelbling et~al.(1998)Kaelbling, Littman, and
  Cassandra]{kaelbling1998planning}
Leslie~Pack Kaelbling, Michael~L Littman, and Anthony~R Cassandra.
\newblock Planning and acting in partially observable stochastic domains.
\newblock \emph{Artificial intelligence}, 101\penalty0 (1-2):\penalty0 99--134,
  1998.

\bibitem[Kamienny et~al.(2020)Kamienny, Pirotta, Lazaric, Lavril, Usunier, and
  Denoyer]{kamienny2020learning}
Pierre-Alexandre Kamienny, Matteo Pirotta, Alessandro Lazaric, Thibault Lavril,
  Nicolas Usunier, and Ludovic Denoyer.
\newblock Learning adaptive exploration strategies in dynamic environments
  through informed policy regularization.
\newblock \emph{arXiv preprint arXiv:2005.02934}, 2020.

\bibitem[Kaplanis et~al.(2018)Kaplanis, Shanahan, and
  Clopath]{kaplanis2018continual}
Christos Kaplanis, Murray Shanahan, and Claudia Clopath.
\newblock Continual reinforcement learning with complex synapses.
\newblock In \emph{International Conference on Machine Learning}, pp.\
  2497--2506. PMLR, 2018.

\bibitem[Katz et~al.(2019)Katz, Huang, Ibeling, Julian, Lazarus, Lim, Shah,
  Thakoor, Wu, Zelji{\'c}, et~al.]{katz2019marabou}
Guy Katz, Derek~A Huang, Duligur Ibeling, Kyle Julian, Christopher Lazarus,
  Rachel Lim, Parth Shah, Shantanu Thakoor, Haoze Wu, Aleksandar Zelji{\'c},
  et~al.
\newblock The marabou framework for verification and analysis of deep neural
  networks.
\newblock In \emph{International Conference on Computer Aided Verification},
  pp.\  443--452. Springer, 2019.

\bibitem[Khetarpal et~al.(2020)Khetarpal, Riemer, Rish, and
  Precup]{khetarpal2020towards}
Khimya Khetarpal, Matthew Riemer, Irina Rish, and Doina Precup.
\newblock Towards continual reinforcement learning: A review and perspectives.
\newblock \emph{arXiv preprint arXiv:2012.13490}, 2020.

\bibitem[Klocek et~al.(2019)Klocek, Maziarka, Wo{\l}czyk, Tabor, Nowak, and
  {\'S}mieja]{klocek2019hypernetwork}
Sylwester Klocek, {\L}ukasz Maziarka, Maciej Wo{\l}czyk, Jacek Tabor, Jakub
  Nowak, and Marek {\'S}mieja.
\newblock Hypernetwork functional image representation.
\newblock In \emph{International Conference on Artificial Neural Networks},
  pp.\  496--510. Springer, 2019.

\bibitem[Kokkinos(2017)]{kokkinos2017ubernet}
Iasonas Kokkinos.
\newblock Ubernet: Training a universal convolutional neural network for low-,
  mid-, and high-level vision using diverse datasets and limited memory.
\newblock In \emph{Proceedings of the IEEE Conference on Computer Vision and
  Pattern Recognition}, pp.\  6129--6138, 2017.

\bibitem[Lee et~al.(2020)Lee, Seo, Lee, Lee, and Shin]{lee2020cadm}
Kimin Lee, Younggyo Seo, Seunghyun Lee, Honglak Lee, and Jinwoo Shin.
\newblock Context-aware dynamics model for generalization in model-based
  reinforcement learning.
\newblock In Hal~Daumé III and Aarti Singh (eds.), \emph{Proceedings of the
  37th International Conference on Machine Learning}, volume 119 of
  \emph{Proceedings of Machine Learning Research}, pp.\  5757--5766. PMLR,
  13--18 Jul 2020.
\newblock URL \url{http://proceedings.mlr.press/v119/lee20g.html}.

\bibitem[Ljung(2010)]{LJUNG20101}
Lennart Ljung.
\newblock Perspectives on system identification.
\newblock \emph{Annual Reviews in Control}, 34\penalty0 (1):\penalty0 1--12,
  2010.
\newblock ISSN 1367-5788.
\newblock \doi{https://doi.org/10.1016/j.arcontrol.2009.12.001}.
\newblock URL
  \url{https://www.sciencedirect.com/science/article/pii/S1367578810000027}.

\bibitem[Lopez-Paz \& Ranzato(2017)Lopez-Paz and Ranzato]{lopez2017gradient}
David Lopez-Paz and Marc'Aurelio Ranzato.
\newblock Gradient episodic memory for continual learning.
\newblock \emph{arXiv preprint arXiv:1706.08840}, 2017.

\bibitem[McCloskey \& Cohen(1989)McCloskey and
  Cohen]{mccloskey1989catastrophic}
Michael McCloskey and Neal~J Cohen.
\newblock Catastrophic interference in connectionist networks: The sequential
  learning problem.
\newblock In \emph{Psychology of learning and motivation}, volume~24, pp.\
  109--165. Elsevier, 1989.

\bibitem[Meyerson \& Miikkulainen(2019)Meyerson and
  Miikkulainen]{meyerson2019modular}
Elliot Meyerson and Risto Miikkulainen.
\newblock Modular universal reparameterization: Deep multi-task learning across
  diverse domains.
\newblock \emph{arXiv preprint arXiv:1906.00097}, 2019.

\bibitem[Meyn \& Tweedie(1993)Meyn and Tweedie]{meyn1993markov}
S.P. Meyn and R.L. Tweedie.
\newblock \emph{Markov Chains and Stochastic Stability}.
\newblock Springer-Verlag, London, 1993.
\newblock URL \url{/brokenurl#probability.ca/MT}.

\bibitem[Mishra et~al.(2017)Mishra, Rohaninejad, Chen, and
  Abbeel]{mishra2017simple}
Nikhil Mishra, Mostafa Rohaninejad, Xi~Chen, and Pieter Abbeel.
\newblock A simple neural attentive meta-learner.
\newblock \emph{arXiv preprint arXiv:1707.03141}, 2017.

\bibitem[Modi \& Tewari(2020)Modi and Tewari]{modi2020no}
Aditya Modi and Ambuj Tewari.
\newblock No-regret exploration in contextual reinforcement learning.
\newblock In \emph{Conference on Uncertainty in Artificial Intelligence}, pp.\
  829--838. PMLR, 2020.

\bibitem[Modi et~al.(2018)Modi, Jiang, Singh, and Tewari]{modi2018markov}
Aditya Modi, Nan Jiang, Satinder Singh, and Ambuj Tewari.
\newblock Markov decision processes with continuous side information.
\newblock In \emph{Algorithmic Learning Theory}, pp.\  597--618. PMLR, 2018.

\bibitem[Nagabandi et~al.(2018)Nagabandi, Clavera, Liu, Fearing, Abbeel,
  Levine, and Finn]{nagabandi2018learning}
Anusha Nagabandi, Ignasi Clavera, Simin Liu, Ronald~S Fearing, Pieter Abbeel,
  Sergey Levine, and Chelsea Finn.
\newblock Learning to adapt in dynamic, real-world environments through
  meta-reinforcement learning.
\newblock \emph{arXiv preprint arXiv:1803.11347}, 2018.

\bibitem[Nagabandi et~al.(2019)Nagabandi, Finn, and Levine]{nagabandi2018deep}
Anusha Nagabandi, Chelsea Finn, and Sergey Levine.
\newblock Deep online learning via meta-learning: Continual adaptation for
  model-based {RL}.
\newblock In \emph{International Conference on Learning Representations}, 2019.
\newblock URL \url{https://openreview.net/forum?id=HyxAfnA5tm}.

\bibitem[Neumann et~al.(2009)Neumann, Maass, and Peters]{neumann2009learning}
Gerhard Neumann, Wolfgang Maass, and Jan Peters.
\newblock Learning complex motions by sequencing simpler motion templates.
\newblock In \emph{Proceedings of the 26th Annual International Conference on
  Machine Learning}, pp.\  753--760, 2009.

\bibitem[Neyshabur et~al.(2015)Neyshabur, Tomioka, and
  Srebro]{pmlr-v40-Neyshabur15}
Behnam Neyshabur, Ryota Tomioka, and Nathan Srebro.
\newblock Norm-based capacity control in neural networks.
\newblock In Peter Grünwald, Elad Hazan, and Satyen Kale (eds.),
  \emph{Proceedings of The 28th Conference on Learning Theory}, volume~40 of
  \emph{Proceedings of Machine Learning Research}, pp.\  1376--1401, Paris,
  France, 03--06 Jul 2015. PMLR.
\newblock URL \url{https://proceedings.mlr.press/v40/Neyshabur15.html}.

\bibitem[Paszke et~al.(2017)Paszke, Gross, Chintala, Chanan, Yang, DeVito, Lin,
  Desmaison, Antiga, and Lerer]{paszke2017automatic}
Adam Paszke, Sam Gross, Soumith Chintala, Gregory Chanan, Edward Yang, Zachary
  DeVito, Zeming Lin, Alban Desmaison, Luca Antiga, and Adam Lerer.
\newblock Automatic differentiation in pytorch.
\newblock 2017.

\bibitem[Pathak et~al.(2017)Pathak, Agrawal, Efros, and
  Darrell]{pathak2017curiosity}
Deepak Pathak, Pulkit Agrawal, Alexei~A Efros, and Trevor Darrell.
\newblock Curiosity-driven exploration by self-supervised prediction.
\newblock In \emph{International Conference on Machine Learning}, pp.\
  2778--2787. PMLR, 2017.

\bibitem[Peng et~al.(2016)Peng, Berseth, and Van~de Panne]{peng2016terrain}
Xue~Bin Peng, Glen Berseth, and Michiel Van~de Panne.
\newblock Terrain-adaptive locomotion skills using deep reinforcement learning.
\newblock \emph{ACM Transactions on Graphics (TOG)}, 35\penalty0 (4):\penalty0
  1--12, 2016.

\bibitem[Perez et~al.(2020)Perez, Petroski~Such, and
  Karaletsos]{perezGeneralizedHiddenParameter2020}
Christian Perez, Felipe Petroski~Such, and Theofanis Karaletsos.
\newblock Generalized {Hidden} {Parameter} {MDPs}:{Transferable}
  {Model}-{Based} {RL} in a {Handful} of {Trials}.
\newblock \emph{Proceedings of the AAAI Conference on Artificial Intelligence},
  34\penalty0 (04):\penalty0 5403--5411, April 2020.
\newblock \doi{10.1609/aaai.v34i04.5989}.
\newblock URL \url{https://ojs.aaai.org/index.php/AAAI/article/view/5989}.

\bibitem[Puterman(1995)]{puterman1995markov}
Martin~L Puterman.
\newblock Markov decision processes: Discrete stochastic dynamic programming.
\newblock \emph{Journal of the Operational Research Society}, 1995.

\bibitem[Radford et~al.(2019)Radford, Wu, Child, Luan, Amodei, and
  Sutskever]{radford2019language_models_are_unsupervised_multitask_learners}
Alec Radford, Jeffrey Wu, Rewon Child, David Luan, Dario Amodei, and Ilya
  Sutskever.
\newblock Language models are unsupervised multitask learners.
\newblock \emph{OpenAI Blog}, 1\penalty0 (8):\penalty0 9, 2019.

\bibitem[Rajeswaran et~al.(2016)Rajeswaran, Ghotra, Ravindran, and
  Levine]{epopt_learning_robust_neural_network_policies_using_model_ensembles}
Aravind Rajeswaran, Sarvjeet Ghotra, Balaraman Ravindran, and Sergey Levine.
\newblock Epopt: Learning robust neural network policies using model ensembles.
\newblock \emph{arXiv preprint arXiv:1610.01283}, 2016.

\bibitem[Rakelly et~al.(2019)Rakelly, Zhou, Finn, Levine, and
  Quillen]{rakelly2019pearl}
Kate Rakelly, Aurick Zhou, Chelsea Finn, Sergey Levine, and Deirdre Quillen.
\newblock Efficient off-policy meta-reinforcement learning via probabilistic
  context variables.
\newblock In Kamalika Chaudhuri and Ruslan Salakhutdinov (eds.),
  \emph{Proceedings of the 36th International Conference on Machine Learning},
  volume~97, pp.\  5331--5340, Long Beach, California, USA, 09--15 Jun 2019.
  PMLR.

\bibitem[Ring et~al.(1994)]{ring1994continual}
Mark~Bishop Ring et~al.
\newblock \emph{Continual learning in reinforcement environments}.
\newblock PhD thesis, University of Texas at Austin Austin, Texas 78712, 1994.

\bibitem[Rusu et~al.(2015)Rusu, Colmenarejo, Gulcehre, Desjardins, Kirkpatrick,
  Pascanu, Mnih, Kavukcuoglu, and Hadsell]{rusu2015policy}
Andrei~A Rusu, Sergio~Gomez Colmenarejo, Caglar Gulcehre, Guillaume Desjardins,
  James Kirkpatrick, Razvan Pascanu, Volodymyr Mnih, Koray Kavukcuoglu, and
  Raia Hadsell.
\newblock Policy distillation.
\newblock \emph{arXiv preprint arXiv:1511.06295}, 2015.

\bibitem[Rusu et~al.(2016)Rusu, Rabinowitz, Desjardins, Soyer, Kirkpatrick,
  Kavukcuoglu, Pascanu, and Hadsell]{rusu2016progressive}
Andrei~A Rusu, Neil~C Rabinowitz, Guillaume Desjardins, Hubert Soyer, James
  Kirkpatrick, Koray Kavukcuoglu, Razvan Pascanu, and Raia Hadsell.
\newblock Progressive neural networks.
\newblock \emph{arXiv preprint arXiv:1606.04671}, 2016.

\bibitem[Slotine \& Li(1991)Slotine and Li]{slotine_applied_1991}
J.J.E. Slotine and W.~Li.
\newblock \emph{Applied {Nonlinear} {Control}}.
\newblock Prentice Hall, 1991.
\newblock ISBN 978-0-13-040890-7.
\newblock URL \url{https://books.google.com/books?id=cwpRAAAAMAAJ}.

\bibitem[Sodhani et~al.(2020)Sodhani, Chandar, and
  Bengio]{10.1162/neco_a_01246}
Shagun Sodhani, Sarath Chandar, and Yoshua Bengio.
\newblock {Toward Training Recurrent Neural Networks for Lifelong Learning}.
\newblock \emph{Neural Computation}, 32\penalty0 (1):\penalty0 1--35, 01 2020.
\newblock ISSN 0899-7667.
\newblock \doi{10.1162/neco_a_01246}.
\newblock URL \url{https://doi.org/10.1162/neco\_a\_01246}.

\bibitem[Sodhani et~al.(2021{\natexlab{a}})Sodhani, Denoyer, Kamienny, and
  Delalleau]{Sodhani2021MTEnv}
Shagun Sodhani, Ludovic Denoyer, Pierre-Alexandre Kamienny, and Olivier
  Delalleau.
\newblock {MTEnv} - environment interface for multi-task reinforcement
  learning.
\newblock Github, 2021{\natexlab{a}}.
\newblock URL \url{https://github.com/facebookresearch/mtenv}.

\bibitem[Sodhani et~al.(2021{\natexlab{b}})Sodhani, Zhang, and
  Pineau]{sodhani2021multi}
Shagun Sodhani, Amy Zhang, and Joelle Pineau.
\newblock Multi-task reinforcement learning with context-based representations.
\newblock In \emph{International Conference on Machine Learning (ICML)},
  2021{\natexlab{b}}.

\bibitem[Swevers et~al.(1997)Swevers, Ganseman, Tukel, De~Schutter, and
  Van~Brussel]{swevers1997optimal}
Jan Swevers, Chris Ganseman, D~Bilgin Tukel, Joris De~Schutter, and Hendrik
  Van~Brussel.
\newblock Optimal robot excitation and identification.
\newblock \emph{IEEE transactions on robotics and automation}, 13\penalty0
  (5):\penalty0 730--740, 1997.

\bibitem[Tassa et~al.(2018)Tassa, Doron, Muldal, Erez, Li, de~Las~Casas,
  Budden, Abdolmaleki, Merel, Lefrancq, Lillicrap, and
  Riedmiller]{deepmindcontrolsuite2018}
Yuval Tassa, Yotam Doron, Alistair Muldal, Tom Erez, Yazhe Li, Diego
  de~Las~Casas, David Budden, Abbas Abdolmaleki, Josh Merel, Andrew Lefrancq,
  Timothy Lillicrap, and Martin Riedmiller.
\newblock Deep{Mind} control suite.
\newblock Technical report, DeepMind, January 2018.

\bibitem[Teh et~al.(2017)Teh, Bapst, Czarnecki, Quan, Kirkpatrick, Hadsell,
  Heess, and Pascanu]{distral}
Yee Teh, Victor Bapst, Wojciech~M Czarnecki, John Quan, James Kirkpatrick, Raia
  Hadsell, Nicolas Heess, and Razvan Pascanu.
\newblock Distral: Robust multitask reinforcement learning.
\newblock In \emph{Advances in Neural Information Processing Systems}, pp.\
  4496--4506, 2017.

\bibitem[Thrun(1998)]{thrun1998lifelong}
Sebastian Thrun.
\newblock Lifelong learning algorithms.
\newblock In \emph{Learning to learn}, pp.\  181--209. Springer, 1998.

\bibitem[Todorov et~al.(2012)Todorov, Erez, and Tassa]{todorov2012mujoco}
Emanuel Todorov, Tom Erez, and Yuval Tassa.
\newblock Mujoco: A physics engine for model-based control.
\newblock In \emph{2012 IEEE/RSJ International Conference on Intelligent Robots
  and Systems}, pp.\  5026--5033. IEEE, 2012.

\bibitem[Van~Overschee \& De~Moor(2012)Van~Overschee and
  De~Moor]{van2012subspace}
Peter Van~Overschee and BL~De~Moor.
\newblock \emph{Subspace identification for linear systems:
  Theory—Implementation—Applications}.
\newblock Springer Science \& Business Media, 2012.

\bibitem[Watter et~al.(2015)Watter, Springenberg, Boedecker, and
  Riedmiller]{watter2015embed}
Manuel Watter, Jost~Tobias Springenberg, Joschka Boedecker, and Martin
  Riedmiller.
\newblock Embed to control: A locally linear latent dynamics model for control
  from raw images.
\newblock \emph{arXiv preprint arXiv:1506.07365}, 2015.

\bibitem[Xian et~al.(2021)Xian, Lal, Tung, Platanios, and
  Fragkiadaki]{xian2021hyperdynamics}
Zhou Xian, Shamit Lal, Hsiao-Yu Tung, Emmanouil~Antonios Platanios, and
  Katerina Fragkiadaki.
\newblock Hyperdynamics: Generating expert dynamics models by observation.
\newblock In \emph{International Conference on Learning Representations}, 2021.
\newblock URL \url{https://openreview.net/forum?id=pHXfe1cOmA}.

\bibitem[Xie et~al.(2020)Xie, Harrison, and Finn]{xie2020lilac}
Annie Xie, James Harrison, and Chelsea Finn.
\newblock Deep reinforcement learning amidst lifelong non-stationarity, 2020.

\bibitem[Xu \& Mannor(2012)Xu and Mannor]{xu2012robustness}
Huan Xu and Shie Mannor.
\newblock Robustness and generalization.
\newblock \emph{Mach. Learn.}, 86\penalty0 (3):\penalty0 391–423, March 2012.
\newblock ISSN 0885-6125.
\newblock \doi{10.1007/s10994-011-5268-1}.
\newblock URL \url{https://doi.org/10.1007/s10994-011-5268-1}.

\bibitem[Xu \& Zhu(2018)Xu and Zhu]{xu2018reinforced}
Ju~Xu and Zhanxing Zhu.
\newblock Reinforced continual learning.
\newblock \emph{arXiv preprint arXiv:1805.12369}, 2018.

\bibitem[Xu et~al.(2019)Xu, Wu, Zeng, Tenenbaum, and Song]{xu2019densephysnet}
Zhenjia Xu, Jiajun Wu, Andy Zeng, Joshua~B Tenenbaum, and Shuran Song.
\newblock Densephysnet: Learning dense physical object representations via
  multi-step dynamic interactions.
\newblock \emph{arXiv preprint arXiv:1906.03853}, 2019.

\bibitem[Yarats et~al.(2019)Yarats, Zhang, Kostrikov, Amos, Pineau, and
  Fergus]{yarats2019improving}
Denis Yarats, Amy Zhang, Ilya Kostrikov, Brandon Amos, Joelle Pineau, and Rob
  Fergus.
\newblock Improving sample efficiency in model-free reinforcement learning from
  images.
\newblock 2019.

\bibitem[Yin \& Pan(2017)Yin and Pan]{yin2017knowledge}
Haiyan Yin and Sinno Pan.
\newblock Knowledge transfer for deep reinforcement learning with hierarchical
  experience replay.
\newblock In \emph{Proceedings of the AAAI Conference on Artificial
  Intelligence}, volume~31, 2017.

\bibitem[Yu et~al.(2017)Yu, Tan, Liu, and Turk]{yu2017osi}
Wenhao Yu, Jie Tan, C.~Karen Liu, and Greg Turk.
\newblock Preparing for the unknown: Learning a universal policy with online
  system identification.
\newblock In \emph{Robotics: Science and Systems}, 2017.

\bibitem[Zadeh(1956)]{zadeh1956identification}
L~Zadeh.
\newblock On the identification problem.
\newblock \emph{IRE Transactions on Circuit Theory}, 3\penalty0 (4):\penalty0
  277--281, 1956.

\bibitem[Zhang et~al.(2019)Zhang, Lipton, Pineda, Azizzadenesheli, Anandkumar,
  Itti, Pineau, and Furlanello]{zhang2019causalstates}
Amy Zhang, Zachary~C. Lipton, Luis Pineda, Kamyar Azizzadenesheli, Anima
  Anandkumar, Laurent Itti, Joelle Pineau, and Tommaso Furlanello.
\newblock Learning causal state representations of partially observable
  environments.
\newblock \emph{The Multi-disciplinary Conference on Reinforcement Learning and
  Decision Making}, 2019.

\bibitem[Zhang et~al.(2021{\natexlab{a}})Zhang, McAllister, Calandra, Gal, and
  Levine]{zhang2021dbc}
Amy Zhang, Rowan~Thomas McAllister, Roberto Calandra, Yarin Gal, and Sergey
  Levine.
\newblock Invariant representations for reinforcement learning without
  reconstruction.
\newblock In \emph{International Conference on Learning Representations},
  2021{\natexlab{a}}.
\newblock URL \url{https://openreview.net/forum?id=-2FCwDKRREu}.

\bibitem[Zhang et~al.(2021{\natexlab{b}})Zhang, Sodhani, Khetarpal, and
  Pineau]{zhang2021hipbmdp}
Amy Zhang, Shagun Sodhani, Khimya Khetarpal, and Joelle Pineau.
\newblock Learning robust state abstractions for hidden-parameter block {MDPs}.
\newblock In \emph{International Conference on Learning Representations},
  2021{\natexlab{b}}.
\newblock URL \url{https://openreview.net/forum?id=fmOOI2a3tQP}.

\bibitem[Zhang et~al.(2014)Zhang, Luo, Loy, and
  Tang]{zhang2014facial_landmark_detection_by_deep_multitask_learning}
Zhanpeng Zhang, Ping Luo, Chen~Change Loy, and Xiaoou Tang.
\newblock Facial landmark detection by deep multi-task learning.
\newblock In \emph{European conference on computer vision}, pp.\  94--108.
  Springer, 2014.

\bibitem[Zhao et~al.(2020)Zhao, Nagabandi, Rakelly, Finn, and
  Levine]{Zhao2020meld}
Tony~Z. Zhao, Anusha Nagabandi, Kate Rakelly, Chelsea Finn, and Sergey Levine.
\newblock Latent state models for meta-reinforcement learning from images.
\newblock In \emph{4th Annual Conference on Robot Learning, CoRL 2020,
  Proceedings}, Proceedings of Machine Learning Research. {PMLR}, 2020.

\bibitem[Zhou et~al.(2019)Zhou, Pinto, and Gupta]{zhou2018environment}
Wenxuan Zhou, Lerrel Pinto, and Abhinav Gupta.
\newblock Environment probing interaction policies.
\newblock In \emph{International Conference on Learning Representations}, 2019.
\newblock URL \url{https://openreview.net/forum?id=ryl8-3AcFX}.

\bibitem[Zhu et~al.(2017)Zhu, Kimmel, Bekris, and Boularias]{zhu2017fast}
Shaojun Zhu, Andrew Kimmel, Kostas~E Bekris, and Abdeslam Boularias.
\newblock Fast model identification via physics engines for data-efficient
  policy search.
\newblock \emph{arXiv preprint arXiv:1710.08893}, 2017.

\bibitem[Zhu et~al.(2020)Zhu, Lin, and Zhou]{zhu2020transfer}
Zhuangdi Zhu, Kaixiang Lin, and Jiayu Zhou.
\newblock Transfer learning in deep reinforcement learning: A survey.
\newblock \emph{arXiv preprint arXiv:2009.07888}, 2020.

\bibitem[Zintgraf et~al.(2019)Zintgraf, Shiarli, Kurin, Hofmann, and
  Whiteson]{zintgraf2019fast}
Luisa Zintgraf, Kyriacos Shiarli, Vitaly Kurin, Katja Hofmann, and Shimon
  Whiteson.
\newblock Fast context adaptation via meta-learning.
\newblock In \emph{International Conference on Machine Learning}, pp.\
  7693--7702. PMLR, 2019.

\bibitem[Åström \& Bohlin(1965)Åström and
  Bohlin]{e0da8f25-d850-4a80-af21-e151cc28c4f4}
Karl~Johan Åström and Torsten Bohlin.
\newblock Numerical identification of linear dynamic systems from normal
  operating records.
\newblock In \emph{Proc. IFAC Conference on Self-Adaptive Control Systems},
  1965.

\end{thebibliography}
\bibliographystyle{iclr2022_conference}

\clearpage

\appendix

\section{Additional Related Work}
\label{app:related_work}

This section extends the discussion on the related works from Section~\ref{sec:related_work}.

Some prior works train a~\textbf{single fixed dynamics model} but introduce additional constraints that ensure the latent dynamics are locally linear~\citep{watter2015embed, banijamali2018robust} or that the learned dynamics model is invariant to translation dynamics~\citep{fragkiadaki2015learning}. However, such approaches fail when the parameters of the underlying dynamics model change. In theory, one could learn a new~\textit{expert} dynamics model per task, but that is computationally expensive and requires access to the unseen environments.~\citet{xian2021hyperdynamics} generates the weights of an \textit{expert} dynamics model by using the environment context as an input. Specifically, they proposed an algorithm called HyperDynamics where they use a HyperNetwork~\citep{ha2016hypernetworks, chang2019principled, klocek2019hypernetwork, meyerson2019modular} to generate the weights of the dynamics model.Similar to our work,~\citet{lee2020cadm} also introduced additional loss terms when training the agent. However, their objective is to encourage the context encoding to be useful for predicting both forward (next state) and backward (previous state) dynamics while being temporally consistent.~\cite{asadi18lipschitzmbrl} present results when bounding the model class to Lipschitz functions but assume that the given MDP is Lipschitz. We instead show that this constraint can be placed on the representation learning objective for better results in the continual learning setting, even when the original BC-MDP is not Lipschitz. 

Our work is related to the broad area of \textbf{multitask RL and transfer learning}~\citep{caruana1997multitask_learning, zhang2014facial_landmark_detection_by_deep_multitask_learning, kokkinos2017ubernet, radford2019language_models_are_unsupervised_multitask_learners, epopt_learning_robust_neural_network_policies_using_model_ensembles}. Previous works have looked at multi-task and transfer learning from the perspective of policy transfer~\citep{rusu2015policy, yin2017knowledge}, policy reuse~\citep{fernandez2006probabilistic, barreto2016successor}, representation transfer~\citep{rusu2016progressive, devin2017learning, andreas2017modular, sodhani2021multi} etc. One unifying theme in these works is that the agent is finetuned on the new environment while we focus on setup where there are no gradient updates when evaluating on the unseen environment.~\citet{zhang2021hipbmdp} also uses task metrics to learn a context space, but focus on the rich observation version of the  Hidden-Parameter MDP (HiP-MDP) setting~\citep{doshi-velezHiddenParameterMarkov2013} with access to task ids. Similarly,~\citet{perezGeneralizedHiddenParameter2020} also treats the multi-task setting as a HiP-MDP by explicitly designing latent variable models to model the latent parameters, but require knowledge of the structure upfront, whereas our approach does not make any such assumptions. 

Several works proposed a~\textit{mixture-of-experts} based approach to adaptation where an expert model is learned for each task~\citep{doya2002multiple, neumann2009learning, peng2016terrain} or experts are shared across tasks~\citep{multitask_soft_option_learning, sodhani2021multi}.~\citet{distral} additionally proposed to distill these experts in a single model that should work well across multiple tasks. While these systems perform reasonably well on the training tasks, they do not generalize well to unseen tasks.

\section{Implementation Details}
\label{appendix:implementation_details}
We provide additional details of the main algorithm, environment setup, compute used, baselines, and hyperparameter information.

\subsection{Algorithm}
We provide pseudocode for the main algorithm in \cref{alg:zeus} and the context loss update in \cref{alg:update_using_context_loss}.
\begin{algorithm}[t]
\begin{algorithmic}[1]
\REQUIRE Actor, Critic, Dynamics Model $T$, Reward Model $R$, Observation Encoder $\phi$, Context Encoder $\psi$, Replay Buffer $\mathcal{D}$.
\FOR{each timestep $t=1..T$}
    \FOR{each $\task_i$}
        \STATE  $a^i_t \sim \pi^i(\cdot | s^i_{t})$
        \STATE  $s'^i_t \sim p^i(\cdot | s^i_{t}, a^i_t)$
        \STATE $\mathcal{D} \leftarrow \mathcal{D} \cup (s^i_t, a^i_t, r(s^i_t, a^i_t), s'^i_t)$
        \STATE \textsc{Update\_Critic}($\mathcal{D}$)
        \STATE \textsc{Update\_Actor}($\mathcal{D}$)
        \STATE \textsc{Update\_Using\_Context\_Loss}($\mathcal{D}$)
    \ENDFOR
\ENDFOR
\end{algorithmic}
\caption{{\bf Training ZeUS algorithm.}}
\label{alg:zeus}
\label{alg:algorithm}
\end{algorithm}

\begin{algorithm}[H]
\begin{algorithmic}[1]
\REQUIRE Batches of data sampled from the Replay Buffer $\mathcal{D}$,
 Dynamics Model $T$, Reward Model $R$, Observation Encoder $\phi$, Context Encoder $\psi$, Learning rates $\alpha_\psi$, $\alpha_T$ and $\alpha_R$.
\FOR{each batch}
    \STATE {Compute $\mathcal{L}(\phi,\psi,T,R))$ using  \cref{eq:total_loss}}
\STATE $\psi \gets \psi - \alpha_\psi  \nabla_\theta \sum_i \mathcal{L}$
\STATE $T \gets T - \alpha_T  \nabla_\theta \sum_i \mathcal{L}$
\STATE $R \gets R - \alpha_R  \nabla_\theta \sum_i \mathcal{L}$
\ENDFOR
\end{algorithmic}
\caption{{\bf Update Using Context Loss}}
\label{alg:update_using_context_loss}
\end{algorithm}

\subsection{Setup}
\label{appendix:setup}

\paragraph{Environments}

For tasks with varying dynamics, we use the Finger, Cheetah and Walker environments. These are Mujoco~\citep{todorov2012mujoco}\footnote{\href{https://www.roboti.us}{https://www.roboti.us}} based tasks that are available as part of the DMControl Suite~\cite{deepmindcontrolsuite2018}\footnote{https://github.com/deepmind/dm\_control}. We use MTEnv~\cite{Sodhani2021MTEnv}\footnote{https://github.com/facebookresearch/mtenv/commit/7fdec15f7e842bce4c17f4f3328d9d6fdc79d7fc} to interface with the environments. In the Finger Spin task, the agent has to manipulate (continually rotate) a planar \textit{finger}. We vary the size across different tasks and the specific values (for each mode) are listed in~\cref{table::finger_spin_params}. In the Cheetah Run task, a planar bipedal cheetah has to run as fast as possible (up to a maximum velocity). In the Walker Walk tasks, a planar walker has to learn to walk. We consider two cases here: (i) varying the friction coefficient between the walker and the ground, and (ii) the length of the foot of the walker.

For tasks with varying reward function, we use the Cheetah and the Sawyer Peg environments from~\cite{Zhao2020meld}\footnote{\href{https://github.com/tonyzhaozh/meld}{https://github.com/tonyzhaozh/meld}}. We highlight some challenges in the evaluation: In the cheetah environment, the reward depends on the difference in the magnitude of the agent's velocity and the target velocity. In each run of the algorithm, the target velocities are randomly sampled. We noticed that the cheetah can easily match the target velocity when the target velocity is small and makes a larger error when the target velocity is large. In practice, the performance of the cheetah (as measured in terms of episodic rewards) can vary a lot depending on the sampled target velocities. To make the comparison fair across the different baselines, we fix the values of the target velocities ( for train and for eval) instead of sampling them. In the Peg-Insertion task, the reward has an extra ``offset'' term as shown in the~\href{https://github.com/tonyzhaozh/meld/blob/271cd6d6a002a6c1a637a908a6e3c06c8349fcbd/meld/environments/envs/reacher/sawyer_reacher.py#L179}{\textcolor{blue}{implementation}} but this does not match the description in ~\cite{Zhao2020meld}. We use the version of reward function without the offset, as described in~\cite{Zhao2020meld}.

All the algorithms are implemented using PyTorch~\citep{paszke2017automatic}\footnote{\href{https://pytorch.org/}{https://pytorch.org/}}. We use the MTRL codebase\footnote{https://github.com/facebookresearch/mtrl/commit/eea3c99cc116e0fadc41815d0e7823349fcc0bf4} as the starting codebase. 

\paragraph{Compute Resources}
Unless specified otherwise, all the experiments are run using Volta GPUs with 16 GB memory. Each experiment uses 1 GPU and 10 CPUs (to parallelize the execution in the environments). The experiments with HyperDynamics model are run with Volta GPUs with 32 GB Memory. For the Cheetah-Run-v0, Finger-Spin-v0, Walker-Walk-v0 and Walker-Walk-v1, training CADM and ZeUS takes about 44 hours, training HyperDynamics takes about 62 hours and training UP-OSI takes 24 hours (all for 1.2 M steps). For Cheetah-Run-v1 and Sawyer-Peg-v1, training CADM and ZeUS takes about 25 hours while HyperDynamics takes about 32.5 hours (all for 300K steps). These times include the time for training and evaluation.

In~\cref{table::assumption_identifiability}, we show that a two-layer feedforward network can be trained to infer the context value from last $5$ observation-action tuples. The setup is modeled as a regression problem where the observation-action transition tuples are obtained from a random policy.

\begin{table}[H]
\centering
\caption{Loss when training the model to infer the context}
\label{table::assumption_identifiability}
\resizebox{0.5\textwidth}{!}
{
    \begin{tabular}{p{3cm}p{3cm}}
    \toprule
    Environment Name      & Loss \\
    \midrule
        Cheetah-Run-v0            & $5.12 \times 10^{-5}$ \\
        Finger-Spin-v0            & $7.45 \times 10^{-5}$ \\
        Walker-Walk-v0            & $7.22 \times 10^{-5}$ \\
        Walker-Walk-v1            & $6.41 \times 10^{-5}$ \\
    \bottomrule
    \end{tabular}
}
\end{table}

\subsubsection{Baselines}
\label{appendix:baselines}

We provide additional implementation related details for the baselines. For a summary of the baselines, refer Section~\ref{section:baselines}.

\begin{enumerate}
    \item \textit{Context-aware Dynamics Model (CaDM)}:~\citet{lee2020cadm} proposed a two stage pipeline: (a) use a context encoder to obtain a context vector given the recent interactions and (ii) perform online adaption by conditioning the forward dynamics model on the context vector. CaDM is shown to outperform vanilla dynamics models which do not use the interaction history, stacked dynamics models which use the interaction history as an input, and Gradient and Recurrence-based meta learning approaches~\citep{nagabandi2018learning}.
    
    \item \textit{UP-OSI}:~\citet{yu2017osi} proposed learning two components: (i) Universal Policy (UP) and (ii) On-line System Identification model (OSI). The universal policy is trained over a wide range of dynamics parameters so that it can operate in a previously unseen environment (given access to the parameters of the dynamics model). It is modeled as a function of the dynamic parameters $\mu$, such that $a_{t}=\pi_{\mu}(s_{t})$ and is trained offline, in simulation, without requiring access to the real-world samples. The goal of the universal policy is to generalize to the space of the dynamics models. The on-line system identification model is trained to identify the parameters of the dynamics model conditioned on the last $k$ state-action transition tuples i.e. $\mu = \phi((s_i,a_i,s_i',r_i), \forall i\in\{1,...,k\})$, where $\phi$ is the OSI module. $\phi$ is trained using a supervised learning loss by assuming access to true parameters of the dynamics model. During evaluation (in an unseen environment), the OSI module predicts the dynamic parameters at every timestep. The universal policy uses these inferred dynamics parameters to predicts the next action.The system identification module is trained to identify a model that is good enough to predict a policy’s value function that is similar to the current optimal policy. Following the suggestion by~\citep{yu2017osi}, we initially train the OSI using the data collected by UP (when using true model parameters) and then switch to the data collected using inferred parameters. Specifically, the paper suggested using first 3-5 iterations (out of 500 iterations) for training with the ground truth parameters. We scaled the number of these warmup iterations to match the number of updates in our implementation and report the results using the same. During evaluation, we report the performance of UP using the inferred parameters (UP-\textit{inferred}), as recommended in the paper.

    \item \textit{HyperDynamics}:~\citet{xian2021hyperdynamics} proposed learning three components: (i) an encoding module that encodes the agent-environment interactions, (ii) a hypernetwork that generates the weight for a dynamics model at the current timestep, and (iii)  a (target) dynamics model that uses the weights generated by the hypernetwork. All the components (and the policy) are trained jointly in an end-to-end manner. HyperDynamics is shown to ourperform both ensemble of experts and meta-learning based approaches~\citep{nagabandi2018learning}.
    
    \item \textit{Context-conditioned Policy}: We train a context encoder that encoders the interaction history into latent context that is passed to the policy and the dynamics. This approach can be thought of as an ablation of the ZeUS algorithm with $\alpha_\psi = 0$, or without the context learning error (from~\cref{eq:total_loss}). We label this baseline as \textit{Zeus-no-context-loss}.
\end{enumerate}

We note that when training on the environments with varying reward dynamics, we concatenate the reward along with the environment observation (as done in~\cite{Zhao2020meld}). 

\subsection{Environment Details}
\label{sec:env_details}
\begin{wrapfigure}{r}{0.29\linewidth}
    \centering
    \includegraphics[width=0.3\linewidth]{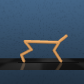}
    \includegraphics[width=0.3\linewidth]{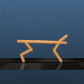}
    \includegraphics[width=0.3\linewidth]{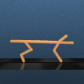}
    \caption{\small Variation in \texttt{Cheetah-Run-v0} tasks.}
    \label{fig::env::cheetah-v0}
\end{wrapfigure}

\begin{table}[H]
\centering
\caption{Parameter values for different modes for the Finger environments (Finger-Spin-v0) when varying the size of the finger across the tasks.}
\label{table::finger_spin_params}
\resizebox{1.0\textwidth}{!}
{
    \begin{tabular}{p{3.7cm}p{10.7cm}}
    \toprule
    Mode      & Values \\
    \midrule
        Train / Eval       & {[}0.05, 0.0625, 0.075, 0.0875, 0.15, 0.1625, 0.175, 0.1875{]} \\
        Eval Interpolation & {[}0.1,  0.1125,  0.125, 0.1375{]}                             \\
        Eval Extrapolation & {[}0.01, 0.0125, 0.025, 0.0375, 0.2, 0.2125, 0.35, 0.5{]}      \\
    \bottomrule
    \end{tabular}
}
\end{table}

\begin{table}[H]
\centering
\caption{Parameter values for different modes for the Cheetah environments (Cheetah-Run-v0) when varying the length of cheetah's torso across the tasks.}
\label{table::cheetah_torso_params}
\resizebox{1.0\textwidth}{!}
{
    \begin{tabular}{p{3.7cm}p{10.7cm}}
    \toprule
    Mode      & Values \\
    \midrule
        Train / Eval       & {[}0.6, 0.65, 0.7, 0.75, 1, 1.05, 1.1, 1.15, 1.25, 1.4, 1.5{]} \\
        Eval Interpolation & {[}0.8,  0.85,  0.9, 0.95{]}                             \\
        Eval Extrapolation & {[}0.3, 0.4, 0.5, 0.55, 1.2, 1.25, 1.4, 1.5{]}      \\
    \bottomrule
    \end{tabular}
}
\end{table}

\begin{table}[H]
\centering
\caption{Parameter values for different modes for the Walker environments (Walker-Walk-v0) when varying the friction coefficient between the walker and the ground.}
\label{table::walker_friction_params}
\resizebox{1.0\textwidth}{!}
{
    \begin{tabular}{p{3.7cm}p{10.7cm}}
    \toprule
    Mode      & Values \\
    \midrule
        Train / Eval       & {[}0.8, 0.85, 0.9, 0.95, 1.2, 1.25, 1.3, 1.35{]} \\
        Eval Interpolation & {[}1.0, 1.05, 1.1, 1.15{]}                             \\
        Eval Extrapolation & {[}0.3, 0.5, 0.7, 0.75, 1.4, 1.45, 1.7, 1.9{]}      \\
    \bottomrule
    \end{tabular}
}
\end{table}

\begin{table}[H]
\centering
\caption{Parameter values for different modes for the Walker environments (Walker-Walk-v1) when varying the length of walker's foot.}
\label{table::walker_foot_params}
\resizebox{1.0\textwidth}{!}
{
    \begin{tabular}{p{3.7cm}p{10.7cm}}
    \toprule
    Mode      & Values \\
    \midrule
        Train / Eval       & {[}0.09, 0.095, 0.1, 0.105, 0.13, 0.135, 0.14, 0.145{]} \\
        Eval Interpolation & {[}0.11, 0.115, 0.12, 0.125{]}                             \\
        Eval Extrapolation & {[}0.03, 0.06, 0.08, 0.085, 0.15, 0.155, 0.18, 0.21{]}      \\
    \bottomrule
    \end{tabular}
}
\end{table}

\begin{table}[H]
\centering
\caption{Values for the target velocity for the Cheetah environments (Cheetah-Run-v1).}
\label{table::cheetah_run_params}
\resizebox{1.0\textwidth}{!}
{
    \begin{tabular}{p{3.7cm}p{10.7cm}}
    \toprule
    Mode      & Values \\
    \midrule
        Train & {[}0., 0.43, 0.86, 1.29, 1.71, 2.14, 2.57, 3.0{]} \\
        Eval  & {[}0.4, 0.78, 1.11, 1.47, 1.83, 2.18, 2.55, 2.9{]} \\
    \bottomrule
    \end{tabular}
}
\end{table}

\begin{table}[H]
\centering
\caption{Range for sampling the values for the Sawyer-Peg environments (Sawyer-Peg-v0)}
\label{table::sawyer_peg_params}
\resizebox{1.0\textwidth}{!}
{
    \begin{tabular}{p{3.7cm}p{10.7cm}}
    \toprule
    Mode      & Values \\
    \midrule
        x\_range\_1      & {(}0.44, 0.45{)} \\
        x\_range\_2      & {(}0.6, 0.61{)} \\
        y\_range\_1      & {(}-0.08, -0.07{)} \\
        y\_range\_2      & {(}0.07, 0.08{)} \\
    \bottomrule
    \end{tabular}
}
\end{table}
                
\subsection{License}
\label{app:license}

\begin{enumerate}
    \item Mujoco: Commercial (with Trial Version) \href{https://www.roboti.us/license.html}{https://www.roboti.us/license.html}
    \item DeepMind Suite: Apache \href{https://github.com/deepmind/dm\_control/blob/master/LICENSE}{https://github.com/deepmind/dm\_control/blob/master/LICENSE}
    \item MTEnv: MIT License \href{https://github.com/facebookresearch/mtenv/blob/main/LICENSE}{https://github.com/facebookresearch/mtenv/blob/main/LICENSE}
    \item Meld Codebase: \href{https://github.com/tonyzhaozh/meld}{https://github.com/tonyzhaozh/meld}
    \item PyTorch: \href{https://github.com/pytorch/pytorch/blob/master/LICENSE}{https://github.com/pytorch/pytorch/blob/master/LICENSE}
    \item MTRL: MIT License \href{https://github.com/facebookresearch/mtrl/blob/main/LICENSE}{https://github.com/facebookresearch/mtrl/blob/main/LICENSE}
    \item Hydra: MIT License \href{https://github.com/facebookresearch/hydra/blob/master/LICENSE}{https://github.com/facebookresearch/hydra/blob/master/LICENSE}
\end{enumerate}

\subsection{Hyperparameter Details}
\label{app:hyperparameters}

In this section, we provide hyper-parameter values for each of the methods in our experimental evaluation. We also describe the search space for each hyperparameter. In~\cref{table::common_hp_rewards} and~\cref{table::common_hp_dynamics} , we provide the hyperparameter values that are common across all the methods.

\begin{table}[h]
\centering
\caption{Hyperparameter values that are common across all the methods for Cheetah-Run-v0, Finger-Spin-v0, Walker-Walk-v0 and Walker-Walk-v1 (envs with varying dynamics)}
\label{table::common_hp_dynamics}
\resizebox{0.7\textwidth}{!}
{
    \begin{tabular}{p{3.7cm}p{6.7cm}}
    \toprule
         Hyperparameter & Hyperparameter values \\
    \midrule
         batch size (per task) & 128 \\
         network architecture & feedforward network\\
         non-linearity & ReLU \\
         policy initialization & standard Gaussian\\
         exploration parameters & run a uniform exploration policy 1500 steps\\
         \# of samples / \# of train steps per iteration & 1 env step / 1 training step\\
         policy learning rate & 3e-4 \\
         Q function learning rate & 3e-4 \\
         Critic update frequency & 2 \\
         optimizer & Adam\\
         beta for Adam optimizer for policy & (0.9, 0.999) \\
         Q function learning rate & 3e-4 \\
         beta for Adam optimizer for Q function & (0.9, 0.999) \\
         Discount & .99 \\
         Episode length (horizon) & 500 \\
         Reward scale & 1.0 \\
         actor update frequency & 2 \\
         actor log stddev bounds & $[-10, 2]$ \\
         number of layers in actor/critic & 2 \\
         actor/critic hidden dimension & 1024 \\
         number layers in dynamics model & 1 \\
         dynamics hidden dimension & 512 \\
         number of encoder layers & $4$ \\
         number of filters in encoder & $32$ \\
         Replay buffer capacity & 400000 \\
         Temperature Adam's $\beta_1$ & $0.9$ \\
         Init temperature & 0.1 \\
         Context Length & 5 \\
    \bottomrule
    \end{tabular}
}
\end{table}

\begin{table}[h]
\centering
\caption{Hyperparameter values that are common across all the methods for Cheetah-Run-v1 and Sawyer-Peg-v0 (envs with varying reward functions)}
\label{table::common_hp_rewards}
\resizebox{0.7\textwidth}{!}
{
    \begin{tabular}{p{3.7cm}p{6.7cm}}
    \toprule
         Hyperparameter & Hyperparameter values \\
    \midrule
         batch size (per task) & 128 \\
         network architecture & feedforward network\\
         non-linearity & ReLU \\
         policy initialization & standard Gaussian\\
         exploration parameters & run a uniform exploration policy 10000 steps \\
         \# of samples / \# of train steps per iteration & 1 env step / 1 training step\\
         policy learning rate & 3e-4 \\
         Q function learning rate & 3e-4 \\
         Critic update frequency & 2 \\
         optimizer & Adam\\
         beta for Adam optimizer for policy & (0.5, 0.999) \\
         Q function learning rate & 3e-4 \\
         beta for Adam optimizer for Q function & (0.5, 0.999) \\
         Discount & .99 \\
         Episode length (horizon) & 40 \\
         Reward scale & 1.0 \\
         actor update frequency & 2 \\
         actor log stddev bounds & $[-10, 2]$ \\
         number of layers in actor/critic & 2 \\
         actor/critic hidden dimension & 1024 \\
         number layers in dynamics model & 1 \\
         dynamics hidden dimension & 512 \\
         number of encoder layers & $4$ \\
         number of filters in encoder & $32$ \\
         Replay buffer capacity & 400000 \\
         Temperature Adam's $\beta_1$ & $0.5$ \\
         Init temperature & 0.1 \\
         Context Length & 5 \\
    \bottomrule
    \end{tabular}
}
\end{table}

\begin{table}[H]
\centering
\caption{Hyperparameter values for Context Aware Dynamics Model}
\label{table::multitask_cad_hp}
\resizebox{0.7\textwidth}{!}
{
    \begin{tabular}{p{3.7cm}p{3.7cm}p{3.7cm}}
    \toprule
         Hyperparameter & Hyperparameter values & Environment \\
    \midrule
         $\beta$ & 0.5 & Cheetah-Run-v0  \\
         $\beta$ & 0.5 & Finger-Spin-v0  \\
         $\beta$ & 0.5 & Walker-Walk-v0  \\
         $\beta$ & 0.5 & Walker-Walk-v1  \\
         $\beta$ & 0.5 & Cheetah-Run-v1  \\
         $\beta$ & 0.5 & Sawyer-Peg-v0  \\
    \bottomrule
    \end{tabular}
}
\end{table}

\begin{table}[h]
\centering
\caption{Hyperparameter values for ZeUS}
\label{table::multitask_zeus_hp}
\resizebox{0.7\textwidth}{!}
{
    \begin{tabular}{p{3.7cm}p{3.7cm}p{3.7cm}}
    \toprule
         Hyperparameter & Hyperparameter values & Environment \\
    \midrule
         $\alpha_\psi$ & 1.0 & Cheetah-Run-v0  \\
         $\alpha_\psi$ & 0.5 & Finger-Spin-v0  \\
         $\alpha_\psi$ & 2.0 & Walker-Walk-v0  \\
         $\alpha_\psi$ & 0.1 & Walker-Walk-v1  \\
         $\alpha_\psi$ & 1.0 & Cheetah-Run-v1  \\
         $\alpha_\psi$ & 1.0 & Sawyer-Peg-v0  \\
    \bottomrule
    \end{tabular}
}
\end{table}

\section{Additional Results}
We present additional results not in the main paper.

\subsection{Handling nonstationarity in the training environments}

In \cref{fig:cl-mujoco-all} we show performance on ZeUS and baselines on the training environments.

\begin{figure}[H]
\centering
\subfigure[Cheetah-Run-v0]{\label{fig:cl-halfcheetah-v2-eval-base-all}\includegraphics[width=0.23\textwidth]{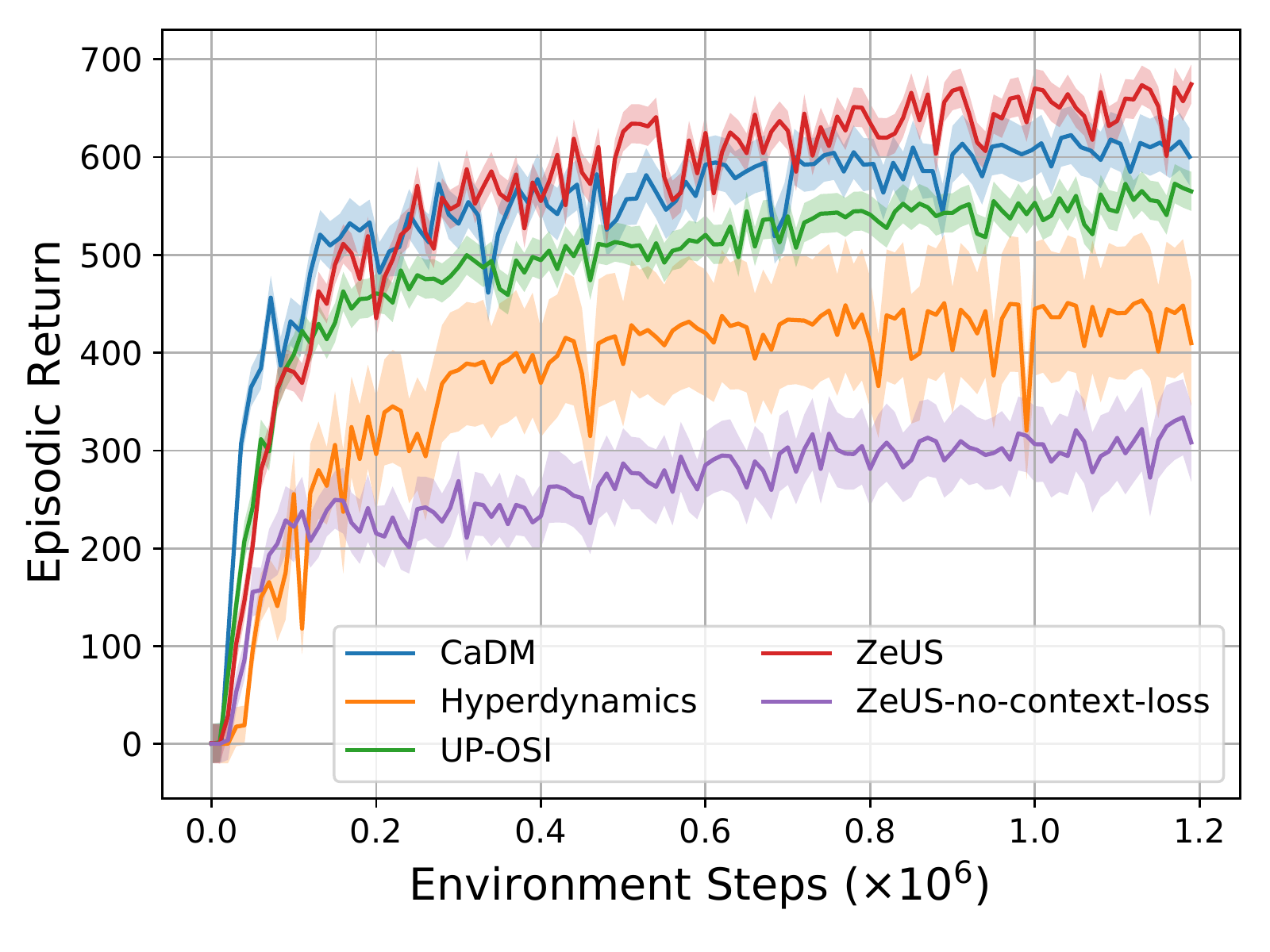}}
\subfigure[Finger-Spin-v0]{\label{fig:cl-finger-spin-v2-eval-base-all}\includegraphics[width=0.23\textwidth]{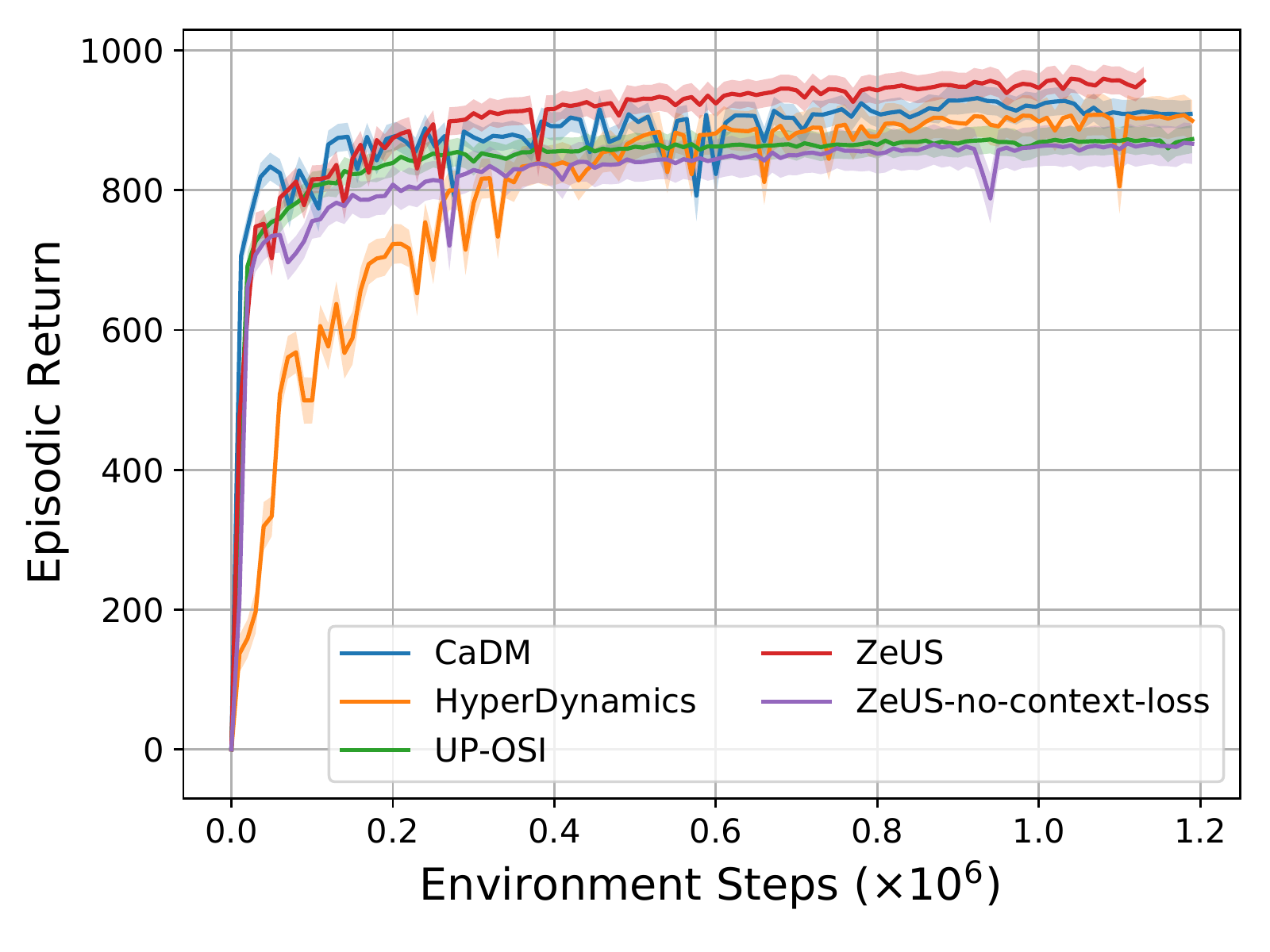}}
\subfigure[Walker-Walk-v0]{\label{fig:cl-walker-walk-v0-eval-base-all}\includegraphics[width=0.23\textwidth]{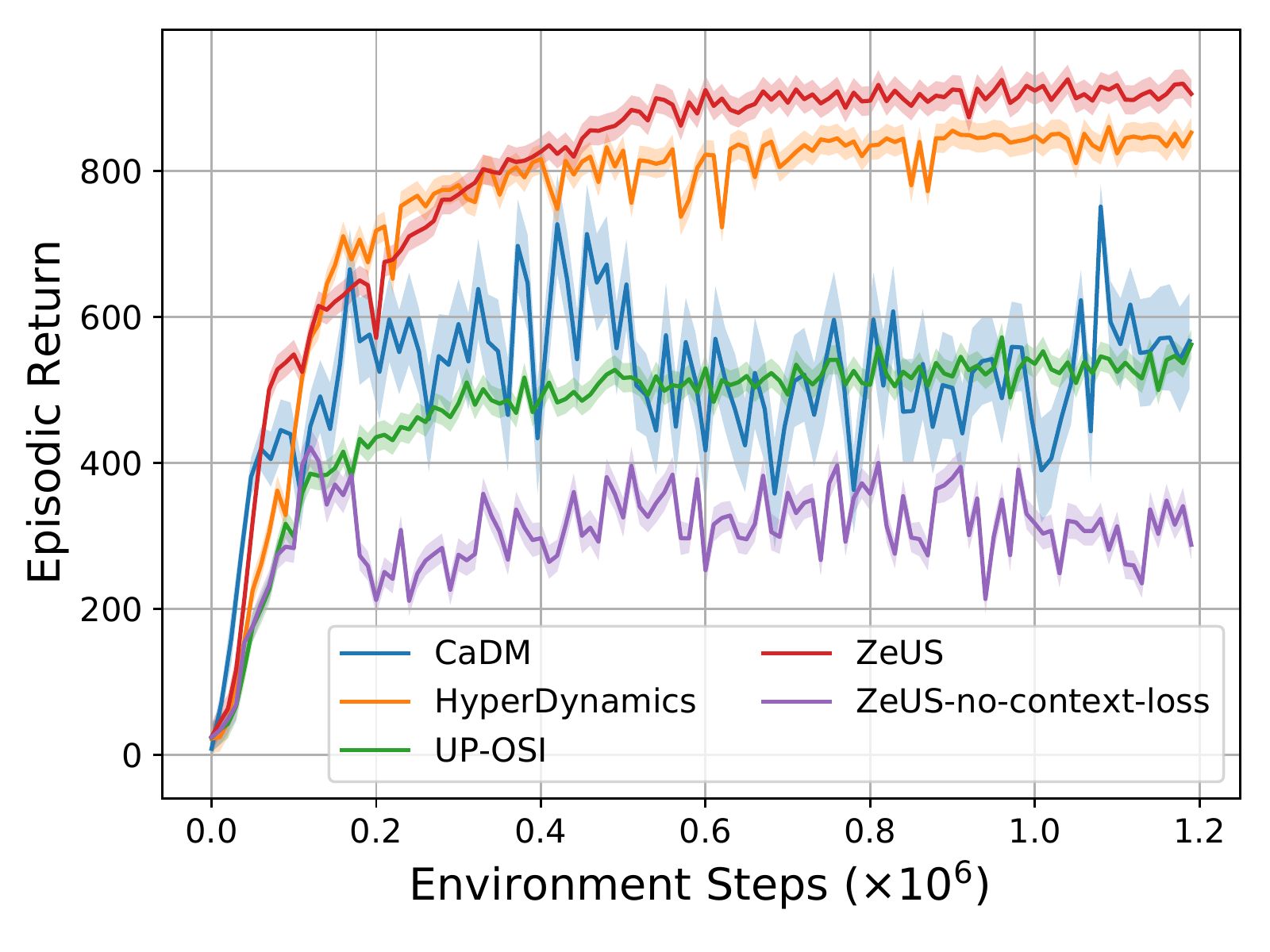}}
\subfigure[Walker-Walk-v1]{\label{fig:cl-walker-walk-v1-eval-base-all}\includegraphics[width=0.23\textwidth]{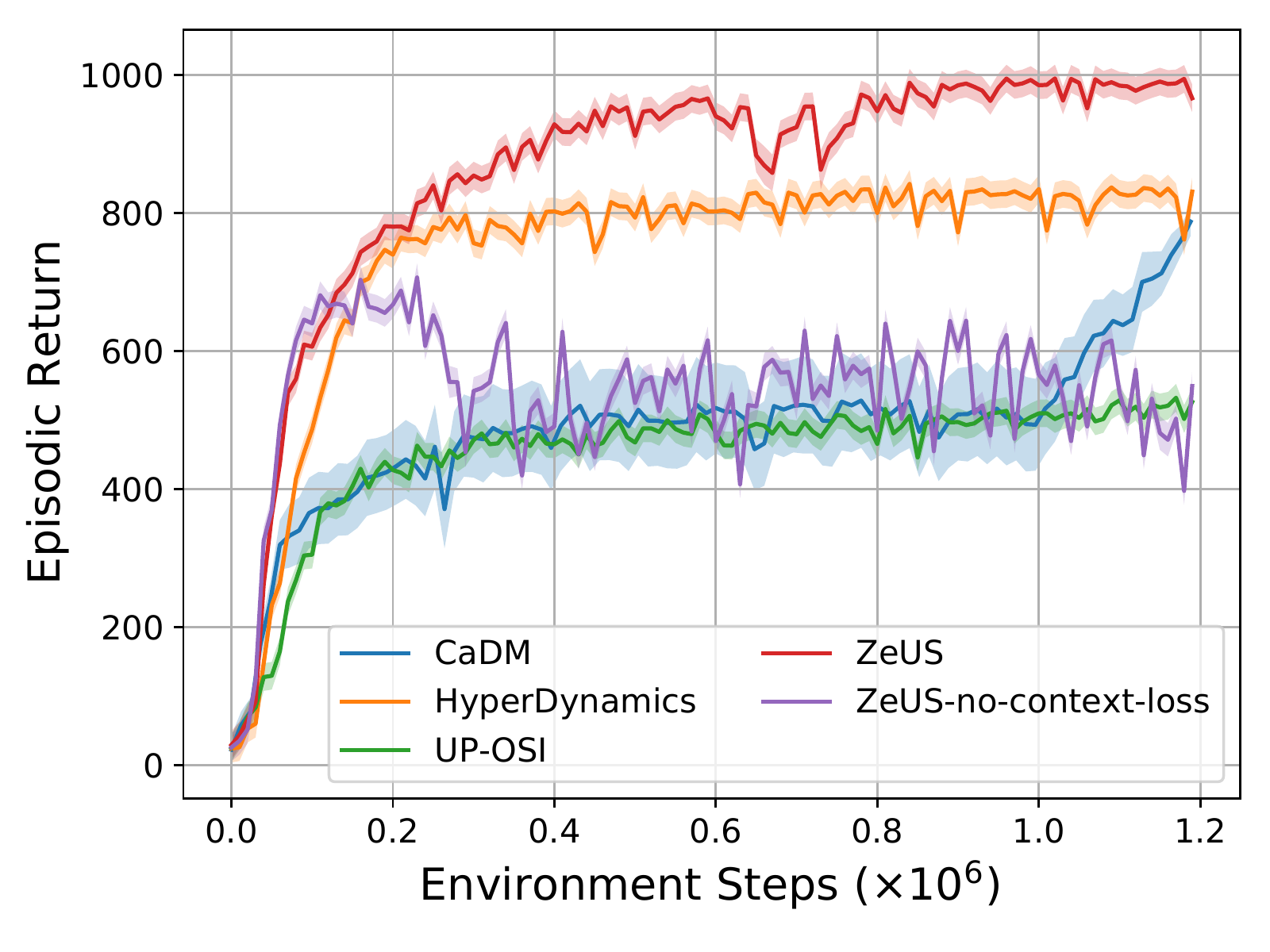}}
\caption{We compare the performance of the proposed ZeUS algorithm with~\textit{CaDM},~\textit{UP-OSI},~\textit{HyperDynamics} and~\textit{ZeUS-no-context-loss} algorithms on the training environments for four families of tasks with different dynamics parameters.}  
\label{fig:cl-mujoco-all}
\end{figure}

\subsection{Handling nonstationarity in the Interpolation environments}
In \cref{fig:cl-mujoco-all-interpolation} we show performance on ZeUS and baselines on evaluation environments that are interpolated from the train environments.
\begin{figure}[H]
\centering
\subfigure[Cheetah-Run-v0]{\label{fig:cl-halfcheetah-v2-eval-interpolation-all}\includegraphics[width=0.23\textwidth]{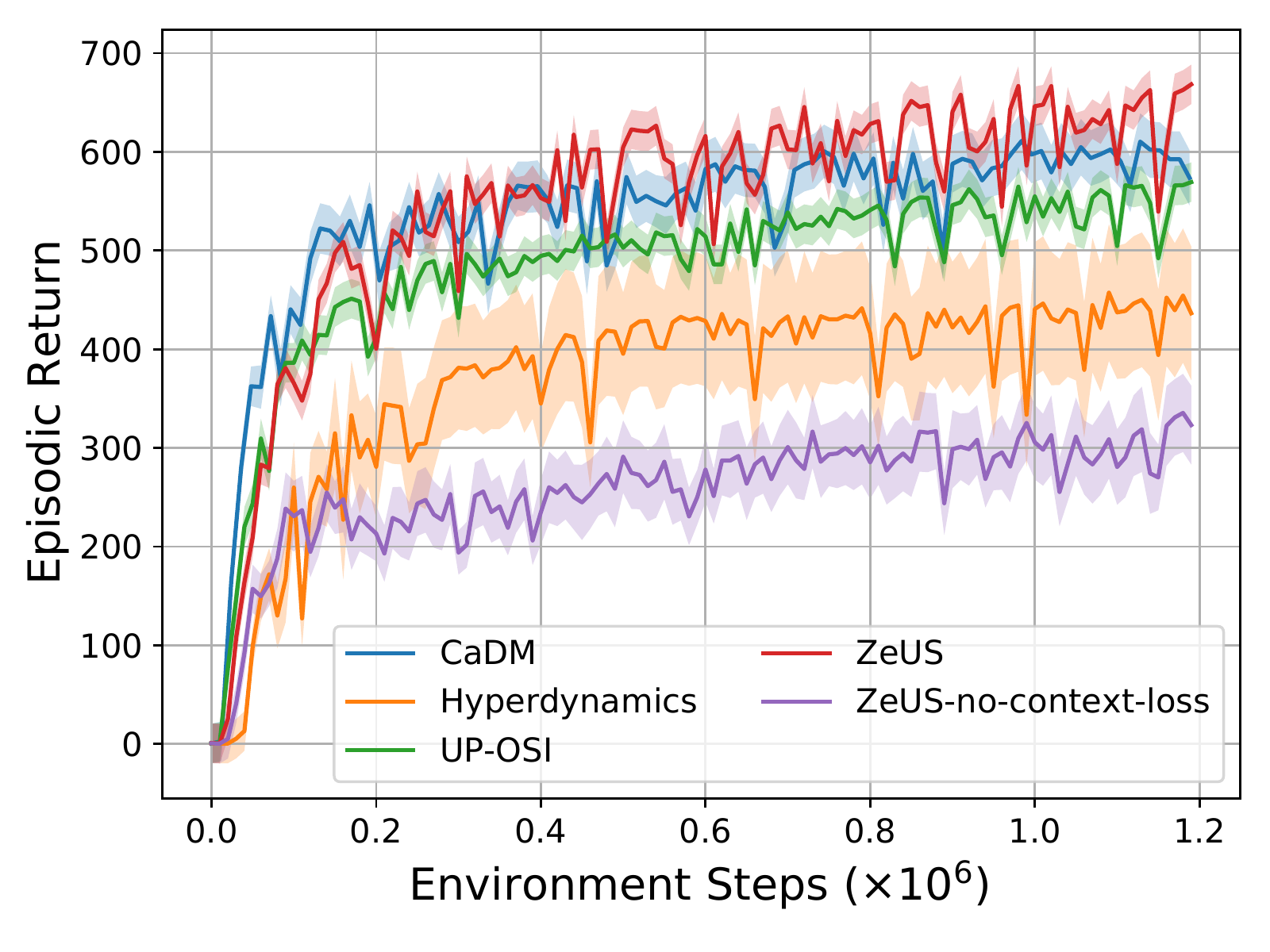}}
\subfigure[Finger-Spin-v0]{\label{fig:cl-finger-spin-v2-eval-interpolation-all}\includegraphics[width=0.23\textwidth]{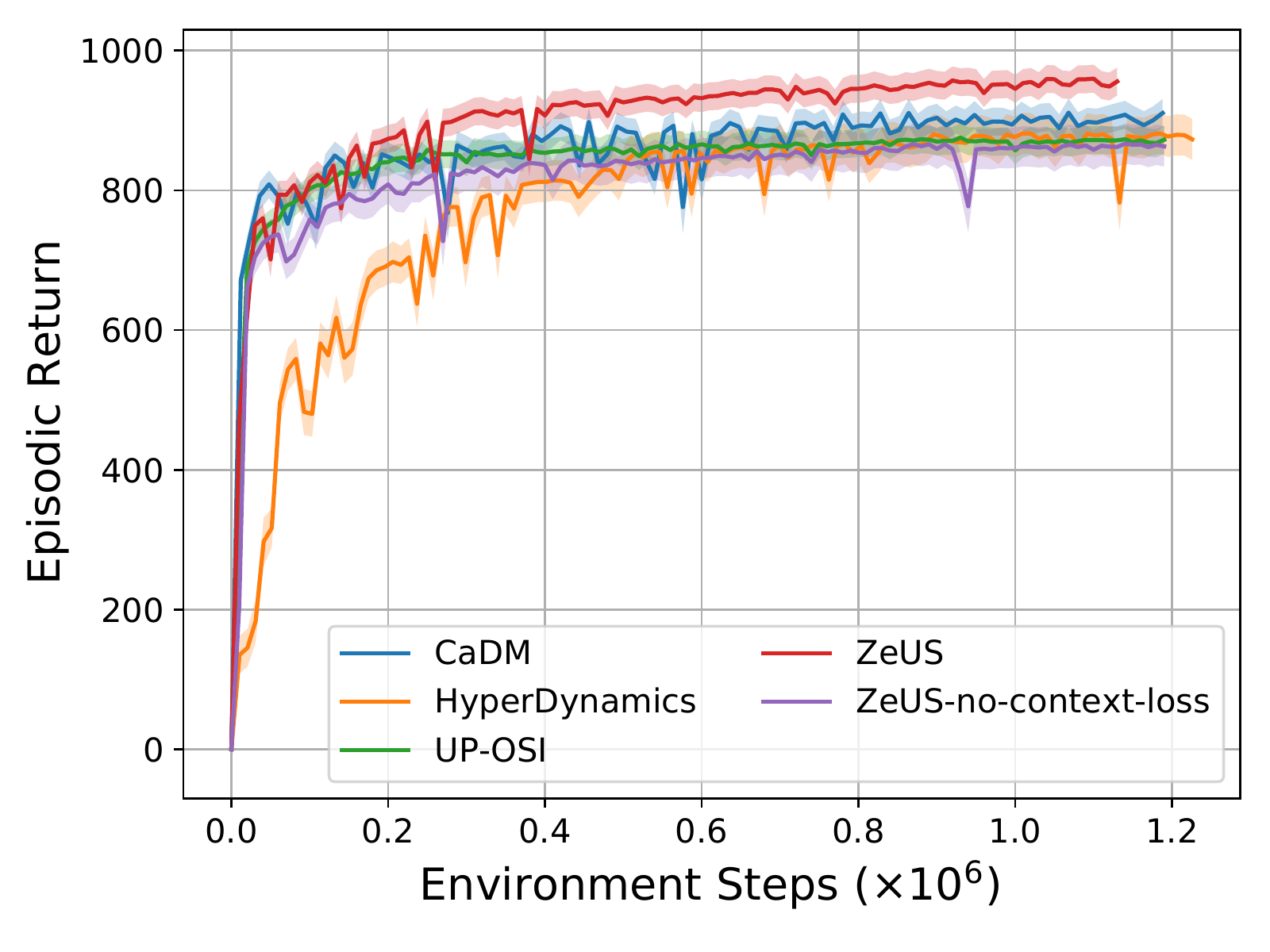}}
\subfigure[Walker-Walk-v0]{\label{fig:cl-walker-walk-v0-eval-interpolation-all}\includegraphics[width=0.23\textwidth]{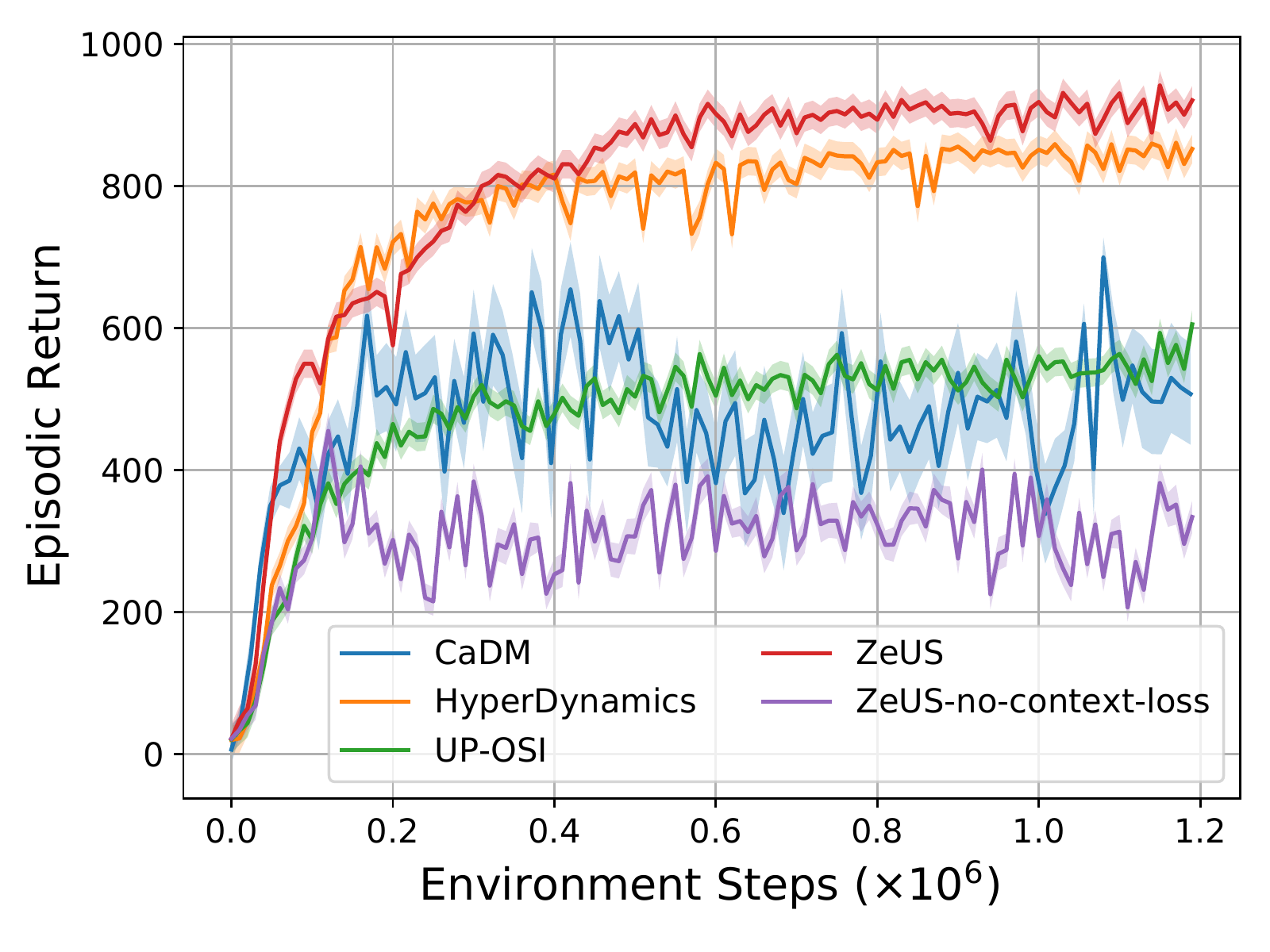}}
\subfigure[Walker-Walk-v1]{\label{fig:cl-walker-walk-v1-eval-interpolation-all}\includegraphics[width=0.23\textwidth]{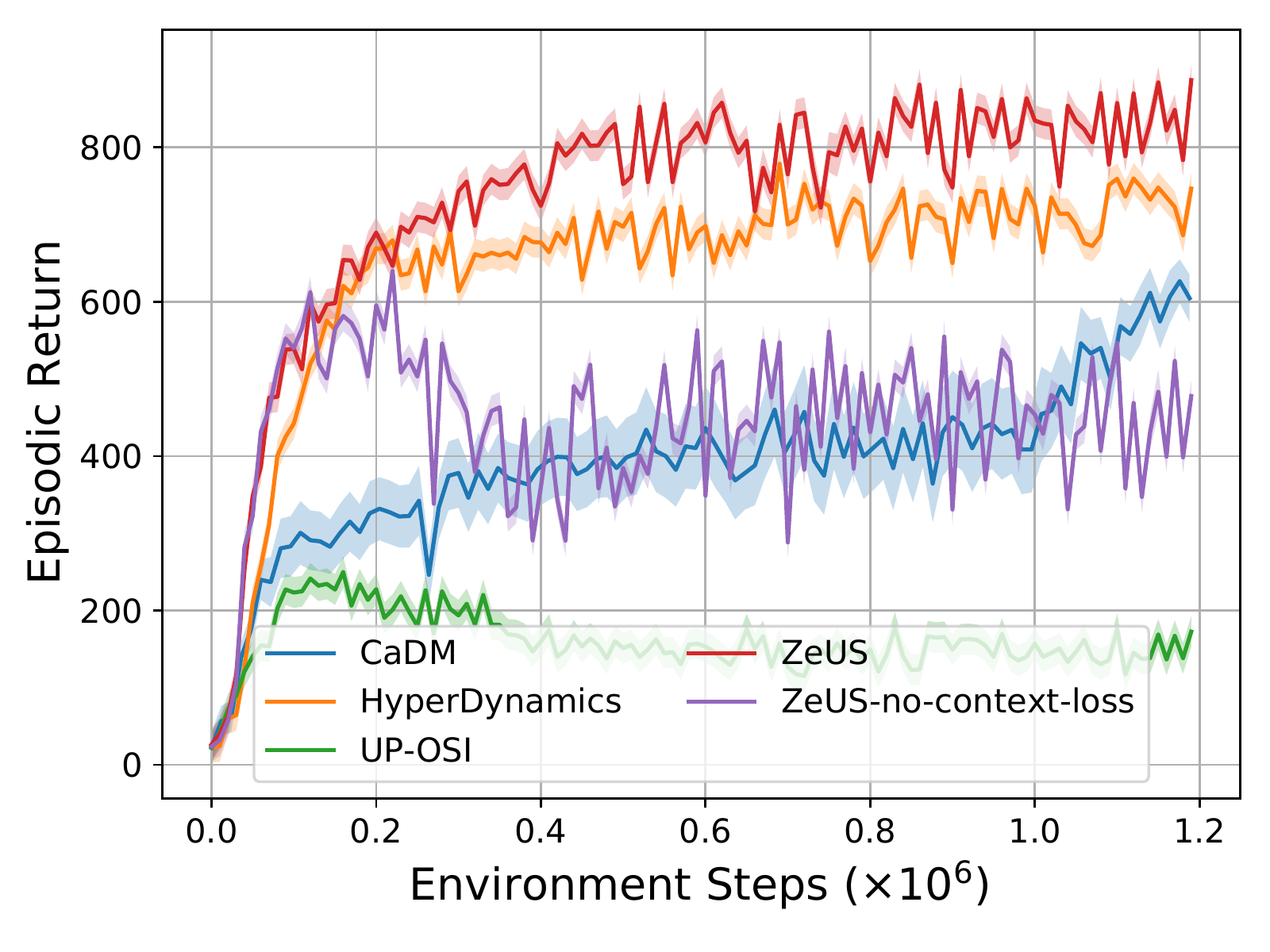}}
\caption{We compare the performance of the proposed ZeUS algorithm with~\textit{CaDM},~\textit{UP-OSI},~\textit{HyperDynamics} and~\textit{ZeUS-no-context-loss} algorithms on the evaluation environments (interpolation) for four families of tasks with different dynamics parameters.}  
\label{fig:cl-mujoco-all-interpolation}
\end{figure}

\subsection{Adapting and generalizing to environments with varying reward functions}
\label{app:adapting_and_generalizing_to_unseen_environments}

In~\cref{fig:cl-cheetah-v1-ablations} we show performance of ZeUS over a hyperparameter sweep for different values of aggregation operator and values of $\alpha_\psi$ for the Cheetah-run-v0 setup.

\begin{figure}[H]
\centering
\subfigure[Varying the aggregation operator in the context encoder]{\label{fig:cl-cheetah-v1-ablation-op}\includegraphics[width=0.48\textwidth]{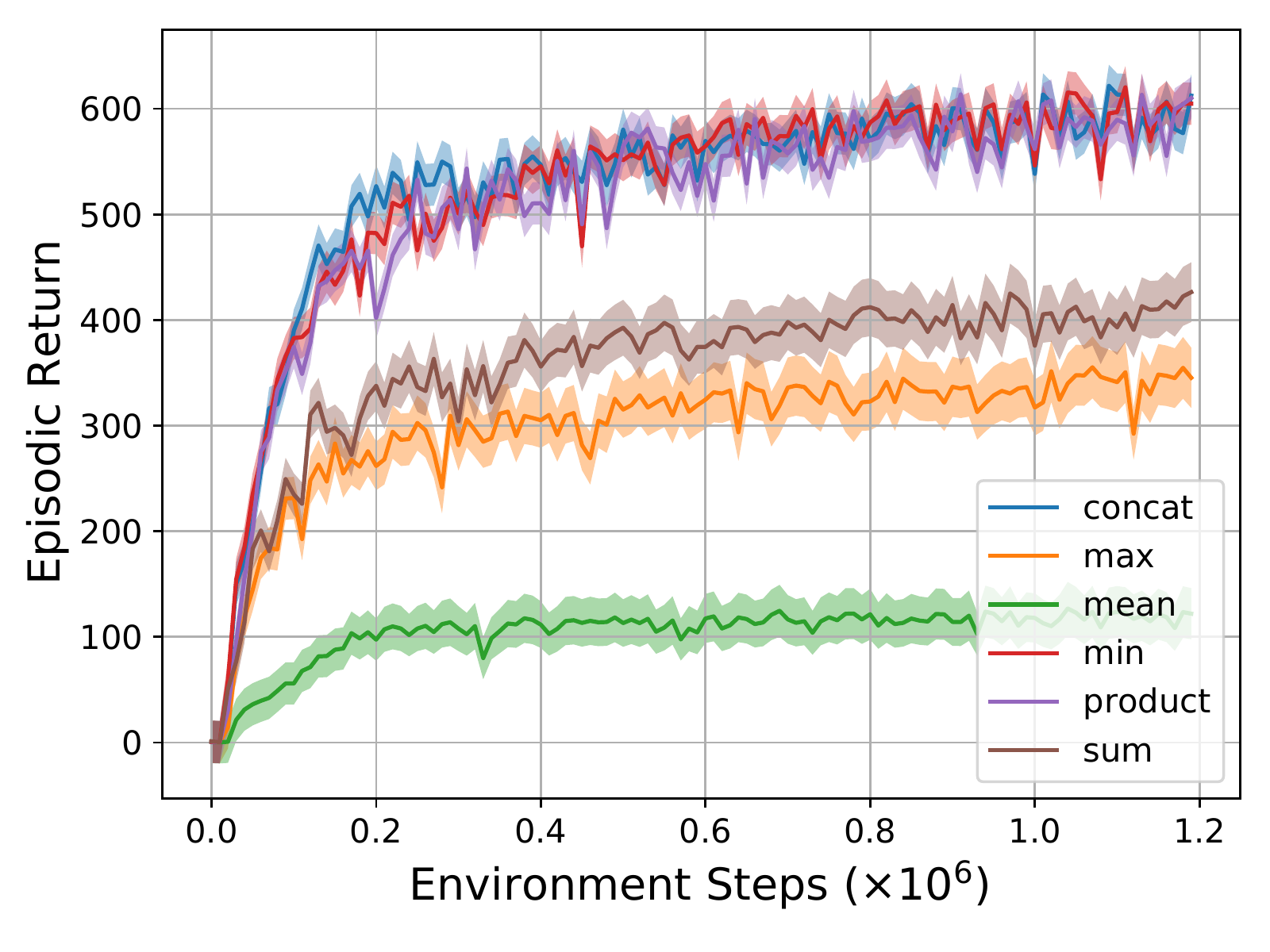}}
\subfigure[Varying values of $\alpha_\psi$]{\label{fig:cl-cheetah-v1-ablation-alpha}\includegraphics[width=0.48\textwidth]{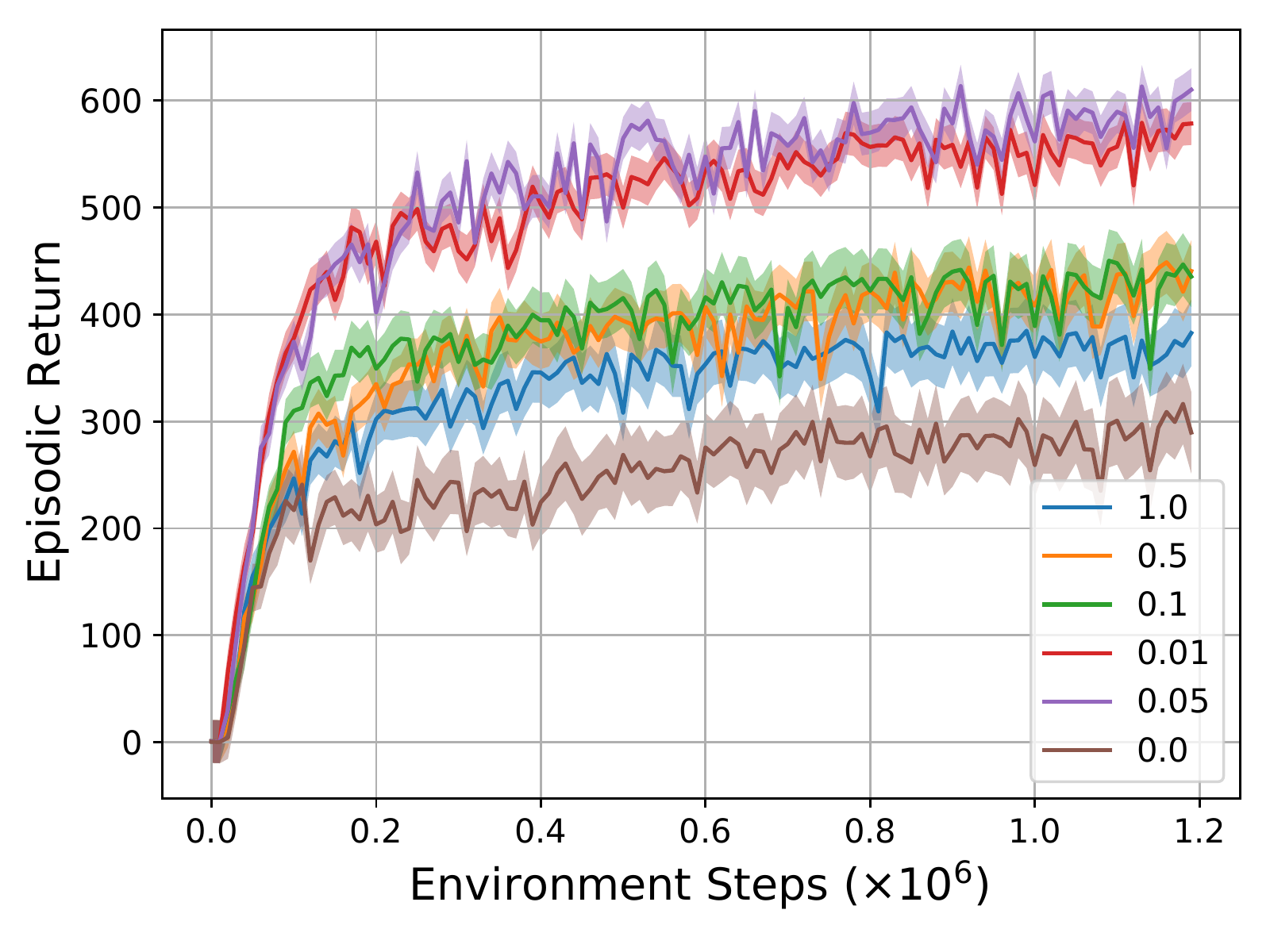}}
\caption{Performance of ZeUS on Cheetah-run-v0 task with varying values of the aggregation operator in the context enocder (in (a)) and varying values of $\alpha_\psi$ in (b).}  
\label{fig:cl-cheetah-v1-ablations}
\end{figure}

\section{Additional Theoretical Background}
\label{app:assumptions}
Bisimulation is a strict form of state abstraction, where two states are bisimilar if they are behaviorally equivalent. \textbf{Bisimulation metrics}~\citep{ferns2011contbisim} define a distance between states as follows: 
\begin{definition}[Bisimulation Metric (Theorem 2.6 in \citet{ferns2011contbisim})]
\label{def:bisim_metric}
Let $(\mathcal{S},\mathcal{A},T,r)$ be a finite MDP and $\mathfrak{met}$ the space of bounded pseudometrics on $\mathcal{S}$ equipped with the metric induced by the uniform norm. Define $F:\mathfrak{met} \mapsto \mathfrak{met}$ by
$$F(d)(s,s')=\max_{a\in \mathcal{A}}(|r(s,a) - r(s',a)| + \gamma W(d)(T_s^a,T_{s'}^a)),$$
where $W(d)$ is the Wasserstein distance between transition probability distributions.  Then $F$ has a unique fixed point $\tilde{d}$ which is the bisimulation metric.
\end{definition}
A nice property of this metric $\tilde{d}$ is that difference in optimal value between two states is bounded by their distance as defined by this metric. \cite{zhang2021dbc} use bisimulation metrics to learn a representation space that is Lipschitz with respect to the MDP dynamics in the single task setting.

Why is the bisimulation metric useful? It turns out that the optimal value function has the nice property of being smooth with respect to this metric.
\begin{lemma}[$V^*$ is Lipschitz with respect to $\tilde{d}$ \citep{ferns2004bisimulation}] 
\label{thm:lipschitz_bisim}
Let $V^*$ be the optimal value function for a given discount factor $\gamma$. Then $V^*$ is Lipschitz continuous with respect to $\tilde{d}$ with Lipschitz constant $\frac{1}{1-\gamma}$,
$$|V^*(s) - V^*(s')|\leq \frac{1}{1-\gamma}\tilde{d}(s,s').$$
\end{lemma}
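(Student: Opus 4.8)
The plan is to combine the two fixed-point characterizations in play: $\tilde{d}$ is the unique fixed point of the operator $F$ from \cref{def:bisim_metric}, and $V^*$ is the unique fixed point of the Bellman optimality operator, reachable by value iteration from any bounded initialization. The strategy is to show that the set of value functions satisfying the desired Lipschitz inequality is closed under the Bellman optimality operator and under uniform limits; then $V^*$, being a uniform limit of iterates starting from a (trivially Lipschitz) initial function, must itself satisfy the inequality.

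Concretely, I would first introduce the Bellman optimality operator $\mathcal{T}$, with $(\mathcal{T}V)(s) = \max_{a\in\mathcal{A}}\big(r(s,a) + \gamma\,\mathbb{E}_{s'\sim T_s^a}[V(s')]\big)$, and recall that $V^* = \lim_{n\to\infty}\mathcal{T}^n V_0$ uniformly for any bounded $V_0$. Let $\mathcal{F}$ be the set of bounded $V:\mathcal{S}\to\mathbb{R}$ with $|V(s)-V(s')|\le \tfrac{1}{1-\gamma}\tilde{d}(s,s')$ for all $s,s'$. This set is nonempty ($V\equiv 0\in\mathcal{F}$) and closed under uniform convergence, so it suffices to prove the invariance $\mathcal{T}(\mathcal{F})\subseteq\mathcal{F}$, after which $V^*\in\mathcal{F}$ follows by passing to the limit.

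For the invariance step, fix $V\in\mathcal{F}$ and states $s,s'$. Since $|\max_a h_1(a)-\max_a h_2(a)|\le\max_a|h_1(a)-h_2(a)|$, the triangle inequality gives
\begin{equation*}
\big|(\mathcal{T}V)(s) - (\mathcal{T}V)(s')\big| \le \max_{a}\Big(|r(s,a)-r(s',a)| + \gamma\,\big|\mathbb{E}_{T_s^a}[V] - \mathbb{E}_{T_{s'}^a}[V]\big|\Big).
\end{equation*}
Because $(1-\gamma)V$ is $1$-Lipschitz with respect to $\tilde{d}$, Kantorovich--Rubinstein duality yields $\big|\mathbb{E}_{T_s^a}[V] - \mathbb{E}_{T_{s'}^a}[V]\big|\le \tfrac{1}{1-\gamma}W(\tilde{d})(T_s^a,T_{s'}^a)$. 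Substituting this and using $\tfrac{1}{1-\gamma}\ge 1$ to bound the reward term by $\tfrac{1}{1-\gamma}|r(s,a)-r(s',a)|$,
\begin{align*}
\big|(\mathcal{T}V)(s) - (\mathcal{T}V)(s')\big| &\le \frac{1}{1-\gamma}\max_{a}\Big(|r(s,a)-r(s',a)| + \gamma\,W(\tilde{d})(T_s^a,T_{s'}^a)\Big) \\
&= \frac{1}{1-\gamma}\,F(\tilde{d})(s,s') = \frac{1}{1-\gamma}\,\tilde{d}(s,s'),
\end{align*}
where the final equality is the fixed-point identity $F(\tilde{d})=\tilde{d}$. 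Hence $\mathcal{T}V\in\mathcal{F}$.

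The main obstacle is the duality step: I need the identity $W(\tilde{d})(\mu,\nu)=\sup\{|\mathbb{E}_\mu f-\mathbb{E}_\nu f|: f\text{ is }1\text{-Lipschitz w.r.t. }\tilde{d}\}$ to remain valid even though $\tilde{d}$ is only a pseudometric (and, in a continuous-state formulation, to verify the requisite measurability and integrability), though in the finite-MDP setting of \cref{def:bisim_metric} this is standard. A secondary care point is that the per-action bound must be established before taking $\max_a$, which is why the inequality on $|(\mathcal{T}V)(s)-(\mathcal{T}V)(s')|$ is arranged with the maximum on the outside; one also uses $\gamma\in[0,1)$ so that $\tfrac{1}{1-\gamma}\ge 1$ when absorbing the reward difference.
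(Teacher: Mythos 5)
Your proof is correct. The paper itself gives no argument for this lemma---it simply defers to \citet{ferns2004bisimulation}---and your invariant-set argument (show the Bellman optimality operator preserves the class of $\tfrac{1}{1-\gamma}$-Lipschitz functions, then pass to the uniform limit of value iteration) is essentially the same induction-on-iterates proof found in that reference. One small remark: your flagged ``main obstacle'' is not actually an obstacle, since you only need the easy direction of Kantorovich--Rubinstein (for any coupling $\pi$ of $T_s^a,T_{s'}^a$ and $L$-Lipschitz $V$, $|\mathbb{E}_{T_s^a}V-\mathbb{E}_{T_{s'}^a}V|\leq L\,\mathbb{E}_\pi[\tilde{d}]$, then take the infimum over couplings), which holds verbatim for pseudometrics and requires no duality theorem.
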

Proof in \citep{ferns2004bisimulation}.
Therefore, we see that bisimulation metrics give us a Lipschitz value function with respect to $\tilde{d}$ with a Lipschitz constant $\frac{1}{1-\gamma}$. 

\section{Additional Theoretical Results and Proofs}
\label{app:thms}
\begin{theoremnum}[\ref{thm:lipschitz_bcmdp}]
Let $V^*$ be the optimal, universal value function for a given discount factor $\gamma$ and context space $\mathcal{C}$. Then $V^*$ is Lipschitz continuous with respect to $d_{\text{task}}$ with Lipschitz constant $\frac{1}{1-\gamma}$ for any $s\in\mathcal{S}$,
$$|V^*(s, c) - V^*(s, c')|\leq \frac{1}{1-\gamma}d_{\text{task}}(c,c').$$
\end{theoremnum}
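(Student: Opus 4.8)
The plan is to mimic the classical proof that the optimal value function is Lipschitz with respect to the bisimulation metric (\cref{thm:lipschitz_bisim}), lifting it to the contextual setting where the ``state'' is augmented by the context. Define the operator $F$ on pseudometrics over $\mathcal{C}$ (for fixed $s$, but really we want a metric that works uniformly over $s$) by
\begin{equation*}
F(d)(c,c') = \max_{s,a}\Big(|R^{c}(s,a)-R^{c'}(s,a)| + \gamma W(d)(T^{c}(s,a),T^{c'}(s,a))\Big).
\end{equation*}
Note this is \emph{not} exactly $d_{\text{task}}$, which uses $\gamma=1$ inside the max; $d_{\text{task}}$ is a strictly larger quantity, so any bound in terms of $F$'s fixed point will \emph{a fortiori} hold with $d_{\text{task}}$. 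So the first step is just to observe $d_{\text{task}}(c,c') \geq \tilde d(c,c')$ where $\tilde d$ is the fixed point of $F$, hence it suffices to prove the bound with $\tilde d$ replacing $d_{\text{task}}$.

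Second, I would show $|V^*(s,c) - V^*(s,c')| \leq \frac{1}{1-\gamma}\tilde d(c,c')$ directly. The cleanest route is a contraction/fixed-point argument on the function $g(c,c') := \sup_s |V^*(s,c) - V^*(s,c')|$. Using the Bellman optimality equation $V^*(s,c) = \max_a\big(R^c(s,a) + \gamma \,\E_{s'\sim T^c(s,a)} V^*(s',c)\big)$, for any $s$ and the maximizing action $a^*$ for context $c$,
\begin{align*}
V^*(s,c) - V^*(s,c') &\leq R^c(s,a^*) - R^{c'}(s,a^*) \\
&\quad + \gamma\Big(\E_{T^c(s,a^*)}V^*(s',c) - \E_{T^{c'}(s,a^*)}V^*(s',c')\Big).
\end{align*}
Split the last term as $\gamma\big(\E_{T^c}V^*(s',c) - \E_{T^c}V^*(s',c')\big) + \gamma\big(\E_{T^c}V^*(s',c') - \E_{T^{c'}}V^*(s',c')\big)$. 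The first piece is bounded by $\gamma\, g(c,c')$. The second piece is bounded by $\gamma\,W\!\big(T^c(s,a^*),T^{c'}(s,a^*)\big)\cdot \mathrm{Lip}(V^*(\cdot,c'))$ via the Kantorovich–Rubinstein dual characterization of the Wasserstein distance — but here I need $V^*(\cdot,c')$ to be $1$-Lipschitz in its state argument, which requires the reward itself to be appropriately normalized or a slightly more careful argument; alternatively one can absorb this by noting the bisimulation metric on states gives $V^*$ Lipschitz-1 in state, but the theorem statement as written doesn't assume that, so I would instead use the coupling definition of $W$ and the recursive structure more carefully, or simply note that the state-Lipschitz constant gets folded into the definition of $d_{\text{task}}$ through $W(d_{\text{task}})$ in \cref{def:task_metric}. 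Combining, $g(c,c') \leq \max_{s,a}\big(|R^c - R^{c'}| + \gamma W(\cdot)\big) + \gamma\, g(c,c')$, and the bracketed term is at most $\tilde d(c,c')$ (resp. $d_{\text{task}}(c,c')$), so $g(c,c') \leq \tilde d(c,c') + \gamma\, g(c,c')$, giving $g(c,c') \leq \frac{1}{1-\gamma}\tilde d(c,c') \leq \frac{1}{1-\gamma} d_{\text{task}}(c,c')$. By symmetry the same bound holds for $V^*(s,c') - V^*(s,c)$, completing the proof.

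The main obstacle I anticipate is handling the Wasserstein term cleanly: pushing the expectation difference $\E_{T^c}V^* - \E_{T^{c'}}V^*$ through the Kantorovich duality requires knowing the Lipschitz constant of $V^*$ as a function of the \emph{next state}, which is exactly what we're trying to establish — so the argument is genuinely recursive and one must either set it up as a simultaneous fixed-point over a metric on $\mathcal{S}\times\mathcal{C}$, or carefully exploit the fact that $d_{\text{task}}$ in \cref{def:task_metric} already bakes in $W(d_{\text{task}})$, making the recursion self-consistent. The secondary subtlety is the boundedness/normalization needed so that the geometric series $\sum \gamma^t$ converges to the stated constant $\frac{1}{1-\gamma}$ rather than something scaled by a reward bound; I would address this by appealing to the reward normalization assumption used elsewhere in the paper (\cref{thm:gen_bound} assumes rewards in $[0,1]$) or by absorbing the scaling into $d_{\text{task}}$.
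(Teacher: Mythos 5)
Your overall strategy is sound and your first step --- observing that $d_{\text{task}}$ dominates the fixed point of the discounted operator $F$ because it drops the $\gamma$ weight on the Wasserstein term, so it suffices to prove the bound for the smaller metric --- is exactly the comparison the paper makes when it asserts $\tilde{d}_{\mathcal{M}'}((s,c),(s,c'))\leq d_{\text{task}}(c,c')$. Where you diverge is the second step: you re-derive the Lipschitz property of $V^*$ from scratch via a Bellman contraction argument on $g(c,c')=\sup_s|V^*(s,c)-V^*(s,c')|$, whereas the paper simply forms a super-MDP on the product state space $\mathcal{C}\times\mathcal{S}$ and invokes the known result (\cref{thm:lipschitz_bisim}, from Ferns et al.) that $V^*$ is $\frac{1}{1-\gamma}$-Lipschitz with respect to the bisimulation metric of that super-MDP. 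The obstacle you correctly anticipate --- that bounding $\E_{T^c}V^*(\cdot,c')-\E_{T^{c'}}V^*(\cdot,c')$ by Kantorovich duality requires the Lipschitz constant of $V^*$ in the \emph{next state}, which is what you are trying to establish, and that carrying an extra factor of $\mathrm{Lip}(V^*(\cdot,c'))$ through the recursion would spoil the constant $\frac{1}{1-\gamma}$ --- is precisely the issue the super-MDP construction dissolves: by treating $(s,c)$ as a single state, the recursion over both arguments is absorbed into the fixed-point definition of the product-space bisimulation metric (\cref{def:bisim_metric}), and the duality step is already packaged inside the cited lemma. So the fix you mention only in passing (``set it up as a simultaneous fixed-point over a metric on $\mathcal{S}\times\mathcal{C}$'') is in fact the paper's entire proof. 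As written, your argument is not fully closed --- the contraction inequality $g\leq d_{\text{task}}+\gamma g$ does not follow until the Wasserstein term is handled, and handling it naively yields a constant worse than $\frac{1}{1-\gamma}$ --- but the gap is one you have identified yourself and whose correct resolution you have named; your route, once completed that way, collapses into the paper's.
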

\begin{proof}
We construct a super-MDP $\mathcal{M}'$ by concatenating context and state spaces into a new state space $\mathcal{S}':\mathcal{C}\times\mathcal{S}$. We can apply \cref{thm:lipschitz_bisim} to $\mathcal{M}'$ for any $s\in\mathcal{S},c,c'\in\mathcal{C}$:
\begin{equation}
\label{eq:lipschitz_ineq}
    |V_{\mathcal{M}'}^*((s,c)) - V_{\mathcal{M}'}^*((s,c'))|\leq \frac{1}{1-\gamma}\tilde{d}_{\mathcal{M}'}((s,c),(s,c')).
\end{equation}
By \cref{def:bisim_metric} and \cref{def:task_metric} we know that
\begin{equation*}
    \tilde{d}_{\mathcal{M}'}((s,c),(s,c')) \leq d_{\text{task}}(c,c').
\end{equation*}
So we can substitute the right hand side into \cref{eq:lipschitz_ineq},
\begin{equation*}
    |V_{\mathcal{M}'}^*((s,c)) - V_{\mathcal{M}'}^*((s,c'))|\leq \frac{1}{1-\gamma}d_{\text{task}}(c,c').
\end{equation*}
Which is equivalent to the following statement:
\begin{equation*}
    |V^*(s, c) - V^*(s, c')|\leq \frac{1}{1-\gamma}d_{\text{task}}(c,c').
\end{equation*}
\end{proof}

\subsection{Value and Transfer Bounds}
In this section, we provide value bounds and sample complexity analysis of the ZeUS approach. This analysis is similar to the one done in \cite{zhang2021hipbmdp}, which focused on a multi-environment setting with different dynamics but same task. 
We first define three additional error terms associated with learning a $\epsilon_R,\epsilon_T,\epsilon_c$-bisimulation abstraction,
\begin{equation*}
\begin{split}
\epsilon_R&:=\sup_{\substack{a\in\mathcal{A},\\o_1,o_2\in \mathcal{O},\phi(o_1)=\phi(o_2)}} \big|R(o_1,a)-R(o_2,a)\big|,  \\ \epsilon_T&:=\sup_{\substack{a\in\mathcal{A},\\o_1,o_2\in \mathcal{O},\phi(o_1)=\phi(o_2)}}\big\lVert \Phi T(o_1,a) - \Phi T(o_2,a) \big\rVert_1, \\
\epsilon_c&:=\|\hat{c} - c\|_1.
\end{split}
\end{equation*}
$\epsilon_R$ and $\epsilon_T$ are intra-context constants and $\epsilon_c$ is an inter-context constant.
$\Phi T$ denotes the \textit{lifted} version of $T$, where we take the next-step transition distribution from observation space $\mathcal{O}$ and lift it to latent space $\mathcal{S}$.
We can think of $\epsilon_R,\epsilon_T$ as describing a new MDP which is close -- but not necessarily the same, if $\epsilon_R,\epsilon_T>0$ -- to the original MDP. These two error terms can be computed empirically over all training environments and are therefore not task-specific. $\epsilon_c$, on the other hand, is measured as a per-task error. Similar methods are used in \citet{jiang2015abstraction} to bound the loss of a single abstraction, which we extend to the BC-MDP setting with a family of tasks.

\paragraph{Value Bounds.}
We first look at the single, fixed context setting, which can be thought of as the single-task version of the BC-MDP. We can compute approximate error bounds in this setting by denoting $\phi$ an $(\epsilon_R,\epsilon_T)$-approximate bisimulation abstraction, where
\begin{equation*}
\begin{split}
\epsilon_R&:=\sup_{\substack{a\in\mathcal{A},\\o_1,o_2\in \mathcal{O},\phi(o_1)=\phi(o_2)}}\big|R(o_1,a)-R(o_2,a)\big|, \\
\epsilon_T&:=\sup_{\substack{a\in\mathcal{A},\\o_1,o_2\in \mathcal{O},\phi(o_1)=\phi(o_2)}}\big\lVert \Phi T(o_1,a) - \Phi T(o_2,a) \big\rVert_1, \\
\epsilon_c&:=\|\hat{c} - c\|_1.
\end{split}
\end{equation*}
$\Phi T$ denotes the \textit{lifted} version of $T$, where we take the next-step transition distribution from observation space $\mathcal{O}$ and lift it to latent space $\mathcal{S}$.

\begin{lemma}
\label{thm:single_task_bisim}
Given an MDP $\bar{\mathcal{M}}$ built on a $(\epsilon_R,\epsilon_T)$-approximate bisimulation abstraction of Block MDP $\mathcal{M}$, we denote the  evaluation of the optimal $Q$ function of $\bar{\mathcal{M}}$ on $\mathcal{M}$ as $[Q^*_{\bar{\mathcal{M}}}]_{\mathcal{M}}$. 
The value difference with respect to the optimal $Q^*_\mathcal{M}$ is upper bounded by
\begin{equation*}
    \big\| Q^*_\mathcal{M} - [Q^*_{\bar{\mathcal{M}}}]_\mathcal{M}\big\|_\infty \leq \epsilon_R + \gamma \epsilon_T \frac{R_\text{max}}{2(1-\gamma)}.
\end{equation*}
\end{lemma}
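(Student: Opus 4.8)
The plan is to bound $\| Q^*_\mathcal{M} - [Q^*_{\bar{\mathcal{M}}}]_\mathcal{M}\|_\infty$ by first controlling the one-step Bellman mismatch induced by the abstraction and then propagating it through a contraction argument. First I would fix the abstraction map $\phi$ and, for any state-action pair $(o,a)$, compare the Bellman operator for $\mathcal{M}$ applied to (the lift of) $Q^*_{\bar{\mathcal{M}}}$ with the Bellman operator for $\bar{\mathcal{M}}$ applied to $Q^*_{\bar{\mathcal{M}}}$. The reward terms differ by at most $\epsilon_R$ by definition of the $\epsilon_R$-approximate abstraction, since two observations mapped to the same latent state have rewards within $\epsilon_R$. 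The transition terms differ because $\Phi T(o,a)$ and the abstract transition kernel $\bar{T}(\phi(o),a)$ are within $\epsilon_T$ in $\ell_1$; multiplying an $\ell_1$ perturbation of a distribution by a value function bounded in $[0,\tfrac{R_\text{max}}{1-\gamma}]$ and using the standard trick that the worst-case change is governed by half the total-variation distance times the range gives a contribution of $\gamma\,\epsilon_T\,\tfrac{R_\text{max}}{2(1-\gamma)}$.

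Second, I would assemble these into a statement of the form $\big\| T_\mathcal{M} [Q^*_{\bar{\mathcal{M}}}]_\mathcal{M} - [Q^*_{\bar{\mathcal{M}}}]_\mathcal{M}\big\|_\infty \leq \epsilon_R + \gamma\,\epsilon_T\,\tfrac{R_\text{max}}{2(1-\gamma)}$, using the fact that $Q^*_{\bar{\mathcal{M}}}$ is a fixed point of the abstract Bellman operator $T_{\bar{\mathcal{M}}}$, so $[Q^*_{\bar{\mathcal{M}}}]_\mathcal{M}$ is a near-fixed-point of $T_\mathcal{M}$. Third, since $T_\mathcal{M}$ is a $\gamma$-contraction in the sup norm with unique fixed point $Q^*_\mathcal{M}$, I would invoke the standard fixed-point perturbation bound: if $\|T Q - Q\|_\infty \leq \delta$ then $\|Q - Q^*\|_\infty \leq \tfrac{\delta}{1-\gamma}$. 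Applying this here would, however, produce an extra $\tfrac{1}{1-\gamma}$ factor, which does not match the stated bound; so I would instead argue more carefully that the per-step error $\epsilon_R$ and the discounted transition error do not both accumulate geometrically — the cleanest route is to follow the telescoping argument in \citet{jiang2015abstraction}, writing $Q^*_\mathcal{M} - [Q^*_{\bar{\mathcal{M}}}]_\mathcal{M}$ as a sum over the difference of Bellman backups along trajectories and bounding each term, being careful about which errors are paid once versus per step.

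The main obstacle I anticipate is exactly this bookkeeping of how the reward error $\epsilon_R$ and the transition error $\epsilon_T$ enter the final constant: naively chaining the contraction gives $\tfrac{1}{1-\gamma}(\epsilon_R + \gamma\epsilon_T\tfrac{R_\text{max}}{2(1-\gamma)})$, whereas the claimed bound has $\epsilon_R$ appearing without the $\tfrac{1}{1-\gamma}$ prefactor. Resolving this requires the sharper analysis that treats $[Q^*_{\bar{\mathcal{M}}}]_\mathcal{M}$ as genuinely equal to the optimal $Q$-function of an intermediate MDP $\widetilde{\mathcal{M}}$ that shares $\mathcal{M}$'s structure on the abstraction's level sets, so that the $\epsilon_R$ term is a direct reward-difference bound rather than something that propagates. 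I would lean on the corresponding lemma structure already used in \citet{zhang2021hipbmdp} and \citet{jiang2015abstraction}, adapting it to the Block MDP lifting $\Phi T$. Everything else — the $\ell_1$-to-value-range conversion, the contraction property, the definition chasing — is routine once this accounting is set up correctly.
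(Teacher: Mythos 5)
The paper does not actually prove this lemma; its entire proof is the one-line citation ``From Lemma 3 in \citet{jiang2015abstraction}.'' Your proposal reconstructs the argument behind that cited lemma, and the route you describe is the standard, correct one: bound the one-step Bellman mismatch on the level sets of $\phi$ (the reward aggregation error contributes $\epsilon_R$, and the $\ell_1$ perturbation of the transition kernel against a value function of range $R_\text{max}/(1-\gamma)$ contributes $\gamma\,\epsilon_T\,\tfrac{R_\text{max}}{2(1-\gamma)}$), observe that $[Q^*_{\bar{\mathcal{M}}}]_{\mathcal{M}}$ is a near-fixed point of $T_{\mathcal{M}}$, and apply the fixed-point perturbation bound. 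In that sense you are doing strictly more than the paper does.

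The one substantive issue is the resolution you propose for the obstacle you (correctly) identify. The extra $\tfrac{1}{1-\gamma}$ factor is not an artifact of loose bookkeeping that a sharper telescoping argument removes: the reward error $\epsilon_R$ is incurred at every step of the discounted trajectory, so it genuinely accumulates to $\epsilon_R/(1-\gamma)$, and the transition term likewise picks up an additional $\tfrac{1}{1-\gamma}$. Indeed, Lemma 3 of \citet{jiang2015abstraction} states the bound as $\tfrac{\epsilon_R}{1-\gamma} + \tfrac{\gamma\,\epsilon_T R_\text{max}}{2(1-\gamma)^2}$, which is exactly what your contraction argument delivers. So the discrepancy you flag is real, but it lies between the lemma as restated in this paper and the result in the cited source, not in your derivation; there is no intermediate-MDP trick that makes $\epsilon_R$ a one-time cost for the \emph{optimal}-value comparison. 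You should finish the proof with the $\tfrac{1}{1-\gamma}$-inflated constants (which then propagate into \cref{thm:qerror_bcmdp} and \cref{thm:gen_bound}) rather than spend effort chasing the tighter constant as written.
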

\begin{proof}
From Lemma 3 in \citet{jiang2015abstraction}.
\end{proof}

We now evaluate how the error in $c$ prediction and the learned bisimulation representation affect the optimal $Q^*_{\bar{\mathcal{M}}_{\hat{c}}}$ of the learned MDP, by first bounding its distance from the optimal $Q^*$ of the true MDP for a single-task.
\begin{lemma}[$Q$ error]
\label{thm:qerror_bcmdp}
Given an MDP $\bar{\mathcal{M}}_{\hat{c}}$ built on a $(\epsilon_R,\epsilon_T,\epsilon_c)$-approximate bisimulation abstraction of an instance of a HiP-BMDP $\mathcal{M}_c$, we denote the  evaluation of the optimal $Q$ function of $\bar{\mathcal{M}}_{\hat{c}}$ on $\mathcal{M}$ as $[Q^*_{\bar{\mathcal{M}}_{\hat{c}}}]_{\mathcal{M}_c}$. 
The value difference with respect to the optimal $Q^*_\mathcal{M}$ is upper bounded by
\begin{equation*}
    \big\| Q^*_{\mathcal{M}_c} - [Q^*_{\bar{\mathcal{M}}_{\hat{c}}}]_{\mathcal{M}_c}\big\|_\infty \leq \epsilon_R + \gamma (\epsilon_T + \epsilon_c) \frac{R_\text{max}}{2(1-\gamma)}.
\end{equation*}
\end{lemma}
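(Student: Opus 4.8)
\textbf{Proof plan for \cref{thm:qerror_bcmdp}.}
The plan is to reduce the $(\epsilon_R,\epsilon_T,\epsilon_c)$-approximate abstraction of $\mathcal{M}_c$ to an $(\epsilon_R, \epsilon_T + \epsilon_c)$-approximate abstraction in the sense of \cref{thm:single_task_bisim}, and then invoke that lemma directly. The point is that \cref{thm:single_task_bisim} already handles intra-context reward and transition error; what remains is to absorb the inter-context error $\epsilon_c = \|\hat c - c\|_1$ into the transition-error budget. First I would recall that the abstraction $\bar{\mathcal M}_{\hat c}$ uses the \emph{predicted} context $\hat c$ rather than the true context $c$, so its transition model is $\bar T_{\hat c}$ while the target MDP has transition model $T_c$. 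The key observation is that the lifted transition distributions satisfy $\|\Phi T_c(o,a) - \bar T_{\hat c}(\phi(o),a)\|_1 \le \epsilon_T + \|\Phi T_c(o,a) - \Phi T_{\hat c}(o,a)\|_1$, and by the Lipschitz/task-metric structure of the BC-MDP (\cref{def:smoothness}, \cref{def:task_metric}) the last term is controlled by $\|c - \hat c\|_1 = \epsilon_c$ (after appropriate normalization so that the task metric dominates the $L_1$ distance on contexts, consistent with the normalization remark following \cref{def:smoothness}).

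The key steps, in order, are: (i) write out the definition of the one-step Bellman backup for $[Q^*_{\bar{\mathcal M}_{\hat c}}]_{\mathcal M_c}$ versus $Q^*_{\mathcal M_c}$; (ii) bound the reward discrepancy by $\epsilon_R$ exactly as in the single-context case; (iii) split the transition discrepancy into an intra-context piece (bounded by $\epsilon_T$ via the definition of the bisimulation abstraction) and an inter-context piece (bounded by $\epsilon_c$ via the context-Lipschitz property); (iv) combine these into an effective transition error $\epsilon_T + \epsilon_c$ and apply the telescoping/geometric-series argument of Lemma 3 in \citet{jiang2015abstraction}, which gives the $\frac{R_\text{max}}{2(1-\gamma)}$ factor multiplying the (discounted) transition error; (v) collect terms to obtain $\epsilon_R + \gamma(\epsilon_T + \epsilon_c)\frac{R_\text{max}}{2(1-\gamma)}$.

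The main obstacle I expect is step (iii): cleanly justifying that the error introduced by conditioning the learned dynamics model on $\hat c$ instead of $c$ is at most $\epsilon_c$ in total-variation/$L_1$, which requires that the learned model $\bar T$ inherits (or at least respects) the Lipschitz constant of the underlying BC-MDP with respect to $d_\text{task}$, and that $d_\text{task}$ is normalized so that $d_\text{task}(c,\hat c) \le \|c - \hat c\|_1 = \epsilon_c$ (otherwise an extra Lipschitz constant $L_p$ would appear). Once that bookkeeping is pinned down, the remainder is a routine adaptation of the single-abstraction value-loss bound, so I would keep the argument short and cite \citet{jiang2015abstraction} for the geometric-series estimate rather than redoing it. A secondary subtlety is ensuring the max over actions in the Bellman optimality operator is handled by the usual $|\max_a f(a) - \max_a g(a)| \le \max_a |f(a) - g(a)|$ inequality, which is standard and causes no trouble.
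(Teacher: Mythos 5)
Your proposal is correct and follows essentially the same route as the paper's proof: both split the cross-context transition and reward discrepancies via the triangle inequality into an intra-context piece (bounded by $\epsilon_T,\epsilon_R$ from the bisimulation abstraction) and an inter-context piece absorbed into $\|c-\hat{c}\|_1\le\epsilon_c$, and then invoke the single-task bound of \cref{thm:single_task_bisim} (Lemma 3 of \citet{jiang2015abstraction}) with the enlarged error budget. Your explicit flagging of the normalization needed so that no extra Lipschitz constant $L_p$ appears is a bookkeeping point the paper's proof leaves implicit.
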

\begin{proof}
In the BC-MDP setting, we have a global encoder $\phi$ over all tasks, but the different transition distributions and reward functions condition on the context $c$. We now must incorporate difference in dynamics in $\epsilon_T$ and reward in $\epsilon_R$. Assuming we have two environments with hidden parameters $c_i,c_j$, we can compute $\epsilon^{c_i,c_j}_T$ and $\epsilon^{c_i,c_j}_R$ across those two environments by joining them into a super-MDP.:
For $\epsilon^{c_i,c_j}_T$:
\begin{equation*}
\begin{split}
    \epsilon^{c_i,c_j}_T&=\sup_{\substack{a\in\mathcal{A},\\o_1,o_2\in \mathcal{O},\phi(o_1)=\phi(o_2)}}\big\lVert \Phi T_{c_i}(o_1,a) - \Phi T_{c_j}(o_2,a) \big\rVert_1 \\
    &\leq \sup_{\substack{a\in\mathcal{A},\\o_1,o_2\in \mathcal{O},\phi(o_1)=\phi(o_2)}}\bigg(\big\lVert \Phi T_{c_i}(o_1,a) - \Phi T_{c_i}(o_2,a)\big\rVert_1 + \big\lVert \Phi T_{c_i}(o_2,a) - \Phi T_{c_j}(o_2,a) \big\rVert_1\bigg) \\
    &\leq \sup_{\substack{a\in\mathcal{A},\\o_1,o_2\in \mathcal{O},\phi(o_1)=\phi(o_2)}}\big\lVert \Phi T_{c_i}(o_1,a) - \Phi T_{c_i}(o_2,a)\big\rVert_1 + \sup_{\substack{a\in\mathcal{A},\\o_1,o_2\in \mathcal{O},\phi(o_1)=\phi(o_2)}}\big\lVert \Phi T_{c_i}(o_2,a) - \Phi T_{c_j}(o_2,a) \big\rVert_1 \\
\end{split}
\end{equation*}

For $\epsilon^{c_i,c_j}_R$ it is much the same:
\begin{equation*}
\begin{split}
    \epsilon^{c_i,c_j}_R&=\sup_{\substack{a\in\mathcal{A},\\o_1,o_2\in \mathcal{O},\phi(o_1)=\phi(o_2)}}\big|R_{c_i}(o_1,a)-R_{c_j}(o_2,a)\big| \\
    &\leq \sup_{\substack{a\in\mathcal{A},\\o_1,o_2\in \mathcal{O},\phi(o_1)=\phi(o_2)}}\bigg(\big| R_{c_i}(o_1,a) - R_{c_i}(o_2,a)\big| + \big| R_{c_i}(o_2,a) - R_{c_j}(o_2,a) \big|\bigg) \\
    &\leq \sup_{\substack{a\in\mathcal{A},\\o_1,o_2\in \mathcal{O},\phi(o_1)=\phi(o_2)}}\big| R_{c_i}(o_1,a) - R_{c_i}(o_2,a)\big| + \sup_{\substack{a\in\mathcal{A},\\o_1,o_2\in \mathcal{O},\phi(o_1)=\phi(o_2)}}\big| R_{c_i}(o_2,a) - R_{c_j}(o_2,a) \big| \\
\end{split}
\end{equation*}
    
Putting these together we get:
\begin{equation*}
    \epsilon^{c_i,c_j}_T + \epsilon^{c_i,c_j}_R \leq \epsilon_T + \epsilon_R + \|c_i - c_j\|_1
\end{equation*}

This result is intuitive in that with a shared encoder learning a per-task bisimulation relation, the distance between bisimilar states from another task depends on the change in transition distribution between those two tasks. We can now extend the single-task bisimulation bound (\cref{thm:single_task_bisim}) to the BC-BMDP setting by denoting approximation error of $c$ as $\|c - \hat{c}\|_1<\epsilon_c$.
\end{proof}

We can measure the generalization capability of a specific policy $\pi$ learned on one task to another, now taking into account error from the learned representation.
\begin{theoremnum}[\ref{thm:gen_bound}]
Given two MDPs $\mathcal{M}_{c_i}$ and $\mathcal{M}_{c_j}$, we can bound the difference in $Q^\pi$ between the two MDPs for a given policy $\pi$ learned under an $\epsilon_R,\epsilon_T,\epsilon_{c_i}$-approximate abstraction of $\mathcal{M}_{c_i}$ and applied to 
\begin{equation*}
\big\| Q^*_{\mathcal{M}_{c_j}} - [Q^*_{\bar{\mathcal{M}}_{\hat{c_i}}}]_{\mathcal{M}_{c_j}}\big\|_\infty \leq \epsilon_R + \gamma \big(\epsilon_T + \epsilon_{c_i} + \|c_i - c_j\|_1\big) \frac{R_\text{max}}{2(1-\gamma)}.
\end{equation*}
\end{theoremnum}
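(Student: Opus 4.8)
The plan is to chain together the single-task/single-context bound of \cref{thm:qerror_bcmdp} with the triangle-inequality decomposition of the cross-context error terms that was already carried out in the proof of \cref{thm:qerror_bcmdp}. The key observation is that \cref{thm:gen_bound} differs from \cref{thm:qerror_bcmdp} only by the additional source of error incurred when the learned abstraction was fit to $\mathcal{M}_{c_i}$ but is deployed on $\mathcal{M}_{c_j}$: this is precisely the quantity controlled by $\epsilon^{c_i,c_j}_T + \epsilon^{c_i,c_j}_R \leq \epsilon_T + \epsilon_R + \|c_i - c_j\|_1$ established at the end of the proof of \cref{thm:qerror_bcmdp}.

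First I would set up the super-MDP $\mathcal{M}'$ joining the two environments $c_i$ and $c_j$ (exactly as in the proof of \cref{thm:qerror_bcmdp}), so that the global encoder $\phi$ together with the context estimate $\hat{c}_i$ induces an abstraction whose effective reward-approximation error on this super-MDP is $\epsilon^{c_i,c_j}_R$ and whose effective transition-approximation error is $\epsilon^{c_i,c_j}_T + \epsilon_{c_i}$ — the $\epsilon_{c_i}$ coming from $\|c_i - \hat{c}_i\|_1 < \epsilon_{c_i}$ as in \cref{thm:qerror_bcmdp}. Next I would invoke \cref{thm:single_task_bisim} (equivalently Lemma 3 of \citet{jiang2015abstraction}) applied to this super-MDP with these aggregated error constants, yielding
\begin{equation*}
\big\| Q^*_{\mathcal{M}_{c_j}} - [Q^*_{\bar{\mathcal{M}}_{\hat{c}_i}}]_{\mathcal{M}_{c_j}}\big\|_\infty \leq \epsilon^{c_i,c_j}_R + \gamma\big(\epsilon^{c_i,c_j}_T + \epsilon_{c_i}\big)\frac{R_\text{max}}{2(1-\gamma)}.
\end{equation*}
Finally I would substitute the triangle-inequality bounds $\epsilon^{c_i,c_j}_R \leq \epsilon_R + \tfrac12\|c_i-c_j\|_1$ and $\epsilon^{c_i,c_j}_T \leq \epsilon_T + \tfrac12\|c_i-c_j\|_1$ — or, more directly, the combined bound $\epsilon^{c_i,c_j}_T + \epsilon^{c_i,c_j}_R \leq \epsilon_T + \epsilon_R + \|c_i-c_j\|_1$ from the proof of \cref{thm:qerror_bcmdp} — and regroup terms to land on the claimed inequality (with $R_\text{max}=1$ under the stated normalization, matching the main-text form without the $R_\text{max}$ factor).

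The main obstacle, such as it is, is bookkeeping rather than mathematical depth: one must be careful about \emph{where} the $\|c_i-c_j\|_1$ term is charged. In \cref{thm:qerror_bcmdp} the cross-context discrepancy was folded entirely into the transition term (since the Wasserstein/$\ell_1$ contribution dominates the value bound), so I would want to make sure the split between $\epsilon^{c_i,c_j}_R$ and $\epsilon^{c_i,c_j}_T$ is done consistently with the statement — the cleanest route is to not split at all, carry $\epsilon^{c_i,c_j}_T + \epsilon^{c_i,c_j}_R$ as a unit through \cref{thm:single_task_bisim}, and only at the last step upper-bound it by $\epsilon_T + \epsilon_R + \|c_i-c_j\|_1$. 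A secondary subtlety is ensuring the definitions of $\epsilon_R,\epsilon_T$ used here are the inter-environment versions (over $o_1,o_2$ drawn from possibly different environments with $\phi(o_1)=\phi(o_2)$), which is already the convention fixed in \cref{thm:qerror_bcmdp}; with that in hand the proof is a one-line composition.
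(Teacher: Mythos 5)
Your proposal is correct and follows essentially the same route as the paper: the paper's own argument is simply that the result follows directly from \cref{thm:qerror_bcmdp}, whose proof contains exactly the super-MDP construction, the triangle-inequality bound $\epsilon^{c_i,c_j}_T + \epsilon^{c_i,c_j}_R \leq \epsilon_T + \epsilon_R + \|c_i - c_j\|_1$, and the application of \cref{thm:single_task_bisim} that you describe. Your closing remark about where the $\|c_i-c_j\|_1$ term is charged is a fair observation about the paper's own bookkeeping, but it does not change the approach.
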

This result clearly follows directly from \cref{thm:qerror_bcmdp}. Given a policy learned for task $i$, \cref{thm:gen_bound} gives a bound on how far from optimal that policy is when applied to task $j$. Intuitively, the more similar in behavior tasks $i$ and $j$ are, as denoted by $\|c_i - c_j\|_1$, the better $\pi$ performs on task $j$.

\end{document}